\newtheorem{definition}{\bf Definition}
\newtheorem{theorem}{\bf Theorem}
\newtheorem{lemma}{\bf Lemma}
\newtheorem{example}{\bf Example}
\newtheorem{corollary}{\bf Corollary}
\newcommand*{\circled}[1]{\lower.7ex\hbox{\tikz\draw (0pt, 0pt)%
		circle (.5em) node {\makebox[1em][c]{\small #1}};}}
\def\BibTeX{{\rm B\kern-.05em{\sc i\kern-.025em b}\kern-.08em
    T\kern-.1667em\lower.7ex\hbox{E}\kern-.125emX}}
\begin{document}

	\title{Analyzing and Optimizing Perturbation of DP-SGD Geometrically}

\author{
	\vspace{0.08in}
	Jiawei Duan$^{\ast}$, Haibo Hu $^\ast$, Qingqing Ye $^\ast$, Xinyue Sun $^\dagger$\\
	$^\ast$ The Hong Kong Polytechnic University \\	
	$^\dagger$ Harbin Institute of Technology\\
	\emph{ jiawei.duan@connect.polyu.hk; haibo.hu@polyu.edu.hk; qqing.ye@polyu.edu.hk;} \emph{xysun@hit.edu.cn}
	\vspace{-0.2in}
}

\maketitle
	\begin{abstract}
		Differential privacy (DP) has become a prevalent privacy model in a wide range of machine learning tasks, especially after the debut of DP-SGD. However, DP-SGD, which directly perturbs gradients in the training iterations, fails to mitigate the negative impacts of noise on gradient direction. As a result, DP-SGD is often inefficient. Although various solutions (e.g., clipping to reduce the sensitivity of gradients and amplifying privacy bounds to save privacy budgets) are proposed to trade privacy for model efficiency, the root cause of its inefficiency is yet unveiled. 
		
		In this work, we first generalize DP-SGD and theoretically derive the impact of DP noise on the training process. Our analysis reveals that, in terms of a perturbed gradient, only the noise on direction has eminent impact on the model efficiency while that on magnitude can be mitigated by optimization techniques, i.e., fine-tuning gradient clipping and learning rate. Besides, we confirm that traditional DP introduces biased noise on the direction when adding unbiased noise to the gradient itself. Overall, the perturbation of DP-SGD is actually sub-optimal from a geometric perspective. Motivated by this, we design a geometric perturbation strategy GeoDP within the DP framework, which perturbs the direction and the magnitude of a gradient, respectively. By directly reducing the noise on the direction, GeoDP mitigates the negative impact of DP noise on model efficiency with the same DP guarantee. Extensive experiments on two public datasets (i.e., MNIST and CIFAR-10), one synthetic dataset and three prevalent models (i.e., Logistic Regression, CNN and ResNet) confirm the effectiveness and generality of our strategy.
	\end{abstract}
		
	\begin{IEEEkeywords}
		local differential privacy; federated learning; convergence analysis; optimization strategy
	\end{IEEEkeywords}
	\section{Introduction}
\label{sec:intro}
Although deep learning models have numerous applications in various domains, such as personal recommendation and healthcare, the privacy leakage of training data from these models has become a growing concern. There are already mature attacks which successfully reveal the contents of private data from deep learning models~\cite{carlini2021extracting,gong2021inversenet}. For example, a white-box membership inference attack can infer whether a single data point belongs to the training dataset of a DenseNet with 82\% test accuracy~\cite{8835245}. These attacks pose imminent threats to the wider adoption of deep learning in business sectors with sensitive data, such as healthcare and fintech.

To address this concern, differential privacy (DP), which can provide quantitative amount of privacy preservation to individuals in the training dataset, is embraced by the most prevalent optimization technique of model training, i.e., stochastic gradient descent (SGD). Referred to as DP-SGD~\cite{li2021federated,zeighami2022neural,liu2021projected,bao2022skellam}, this algorithm adds random DP noise to gradients in the training process so that attackers cannot infer private data from model parameters with a high probability.

However, a primary drawback of DP-SGD is the ineffective training process caused by the overwhelming noise, which extremely deteriorates the model efficiency. Although much attention~\cite{abadi2016deep,mironov2017renyi,fu2023dpsur} has been paid on reducing the noise scale, the majority of existing solutions, which numerically add DP noise to gradients, do not exploit the geometric nature of SGD (i.e., descending gradient to locate the optima). As reviewed in Section \ref{subsec:DPSGD}, SGD exhibits a distinctive geometric property --- the direction of a gradient rather than the magnitude determines the descent trend. By contrast, regular DP algorithms, such as the Gaussian mechanism~\cite{dwork2014algorithmic}, was originally designed to preserve numerical (scalar) values rather than vector values. As such, there is a distinct gap between directional SGD and numerical DP perturbation, causing at least two limitations in DP-SGD. First, \textbf{existing optimization techniques of SGD (i.e., fine-tuning clipping and learning rate)}, which can effectively reduce the noise on the magnitude of a gradient, \textbf{cannot alleviate the negative impact on the direction}, as illustrated by Example \ref{exam:geometric}. Second, \textbf{traditional DP introduces biased noise on the direction of a gradient}, even if the total noise to the gradient is unbiased (proved in Lemma \ref{lem:MSE}). As a result, the perturbation of traditional DP-SGD is only sub-optimal from a geometric perspective.
\begin{example}
	Suppose that we have a two-dimensional gradient $\boldsymbol{g}=(1,\sqrt{3})$ with its direction $\boldsymbol{\theta}=\arctan(\sqrt{3}/{1})=\pi/3$ and magnitude $\Vert \boldsymbol{g}\Vert=\sqrt{1+3}=2$. Given clipping threshold $C_1=2$, we add noise $\boldsymbol{n}_1=(0.3,0.15)$ to the clipped gradient $\tilde{\boldsymbol{g}}_1=\boldsymbol{g}/\max\left\{1,\Vert \boldsymbol{g}\Vert/C_1\right\}=(1,\sqrt{3})$ and derive the perturbed direction $\boldsymbol{\theta}^*_1=\arctan\frac{\sqrt{3}+0.15}{1+0.3}\approx 0.97$. If $C_2=1$, the clipped gradient and the noise would be $\tilde{\boldsymbol{g}}_2=\boldsymbol{g}/\max\left\{1,\Vert \boldsymbol{g}\Vert/C_2\right\}=(\frac{1}{2},\frac{\sqrt{3}}{2})$ and $\boldsymbol{n}_2=\boldsymbol{n}_1/(C_1/C_2)=(0.15,0.075)$, respectively, as per DP-SGD~\cite{abadi2016deep}. Still, the perturbed direction is $\boldsymbol{\theta}^*_2=\arctan\frac{\frac{\sqrt{3}}{2}+0.075}{\frac{1}{2}+0.15}\approx 0.97$. Although the noise scale is successfully reduced by gradient clipping ($\Vert\boldsymbol{n}_2\Vert<\Vert\boldsymbol{n}_1\Vert$), the perturbation on the direction of a gradient remains the same ($\boldsymbol{\theta}^*_2=\boldsymbol{\theta}^*_1$).
\label{exam:geometric}
\end{example}

\begin{figure}[htbp]
	\centering
	\includegraphics[width=0.7\linewidth]{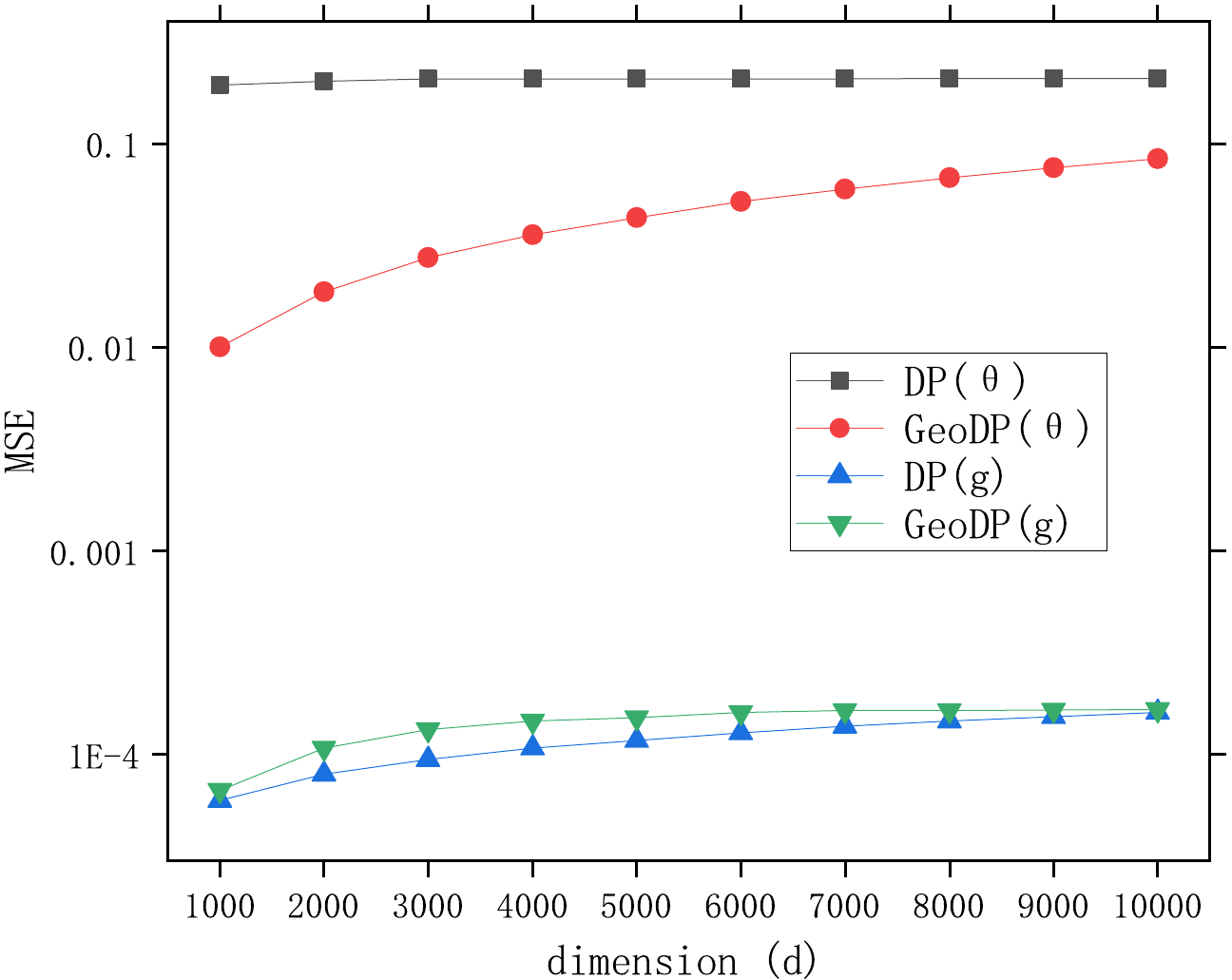}
	\caption{Comparing MSEs of GeoDP and DP on preserving directions and values of gradients under synthetic dataset (composed of gradients from CNN training, as introduced in Section \ref{subsec:setup}). While $\theta$ and $g$ label the MSE of perturbed directions and gradients themselves, experimental results confirm that GeoDP achieves smaller MSEs on perturbed directions (i.e., the red line is below the black one), while sacrificing the accuracy of perturbed gradients (i.e., the green line is above the blue one). In general, GeoDP better preserves directions of gradients while traditional DP only excels in preserving numerical values of gradients.}
	\label{fig:sumerized}
\end{figure}

In this paper, we propose a geometric perturbation strategy GeoDP to address these limitations. First, we theoretically derive the impact of DP noise on the efficiency of DP-SGD. Proved by this fine-grained analysis, the perturbation of DP-SGD, which introduces biased noise to the direction of a gradient, is actually sub-optimal. Inspired by this, we propose a geometric perturbation strategy \textit{GeoDP} which perturbs both the direction and the magnitude of a gradient, so as to relieve the noisy gradient direction and optimize model efficiency with the same DP guarantee. Figure \ref{fig:sumerized} illustrates empirical performances of GeoDP and DP to support the superiority of GeoDP in the perspective of geometry. Such experimental results can also be confirmed in our theoretical analysis. In summary, our main contributions are as follows:
\begin{itemize}
	\item {To the best of our knowledge, we are the first to prove that the perturbation of traditional DP-SGD is actually sub-optimal from a geometric perspective.}
	\item {Within the classic DP framework, we propose a geometric perturbation strategy \textit{GeoDP} to directly add the noise on the direction of a gradient, which rigorously guarantees a better trade-off between privacy and efficiency.}
	\item {Extensive experiments on public datasets as well as prevalent AI models validate the generality and effectiveness of GeoDP.}
\end{itemize}

The rest of this paper is organized as follows. Section \ref{sec:literature} reviews the related literature. Section \ref{sec:problem} introduces basic concepts as well as formulating problems. Section \ref{sec:analysis} presents our theoretical analysis on deficiency of DP-SGD while Section \ref{sec:strategy} presents the perturbation strategy \textit{GeoDP}. Experimental results are in Section \ref{sec:experiment}, followed by a conclusion in Section \ref{sec:conclusion}.

	\section{Literature review}
\label{sec:literature}
In this section, we review related works from three aspects: DP, SGD and their crossover works DP-SGD.
\subsection{Differential Privacy (DP)}
DP~\cite{dwork2014algorithmic,wasserman2010statistical} is a framework designed to provide strong privacy guarantees for datasets whose data is used in data analysis or machine learning models. It aims to allow any third party, e.g., data scientists and researchers, to glean useful insights from datasets while ensuring that the privacy of individuals cannot be compromised. The core idea of differential privacy is that a query to a database should yield approximately the same result whether any individual person's data is included in the database or not. This is achieved by adding noise to the data or the query results, which helps to obscure the contributions of individual data points. 

Since Dwork \emph{et al.}~\cite{10.1007/11681878_14} first introduced the definition of \emph{differential privacy} (DP), DP has been extended to various scopes, such as numerical data collection~\cite{duchi2018minimax,wang2019collecting}, set-value data collection~\cite{chen2011publishing,wang2020set}, key-value data collection~\cite{ye2021privkvm}, high-dimensional data~\cite{duan2022utility}, graph analysis~\cite{sun2023puts}, time series data release~\cite{ye2021beyond}, private learning~\cite{zheng2019bdpl,fu2023dpsur}, federated matrix factorization~\cite{li2021federated}, data mining~\cite{hu2015differential}, local differential privacy~\cite{xu2020collecting,xu2019dpsaas,bao2021cgm,wang2018privset}, database query~\cite{farias2023local,bogatov2021varepsilonpsolute}, markov model~\cite{xiao2017loclok} and benchmark~\cite{schaler2023benchmarking,duan2022utility,duan2024ldptube}. Relevant to our work, we follow the common practice to implement Gaussian mechanism~\cite{dwork2014algorithmic} to perturb model parameters. Besides, Rényi Differential Privacy (RDP)~\cite{mironov2017renyi} allows us to more accurately estimate the cumulative privacy loss of the whole training process. 

\subsection{Stochastic Gradient Descent (SGD)}
Stochastic Gradient Descent (SGD) is a fundamental optimization algorithm widely used in machine learning and deep learning for training a wide array of models. It is especially popular for its efficiency in dealing with large datasets and high-dimensional optimization problems. SGD was first introduced by Herbert \emph{et al.}~\cite{robbins1951stochastic}, and applied for training deep learning models~\cite{rumelhart1986learning}. The development of SGD has seen several significant improvements over the years. Xavier \emph{et al.}~\cite{glorot2010understanding} and Yoshua~\cite{bengio2012practical} optimized deep neural networks using SGD. Momentum, a critical concept to accelerate SGD, was emphasized by Llya \emph{et al.}~\cite{sutskever2013importance}. Diederik \emph{et al.}~\cite{kingma2014adam} proposed Adam, a variant of SGD that adaptively adjusts the learning rate for each parameter. Sergey \emph{et al.}~\cite{ioffe2015batch} introduced Batch Normalization, a technique to reduce the internal covariate shift in deep networks. Yang \emph{et al.}~\cite{you2017large} and Zhang \emph{et al.}~\cite{zhang2019lookahead} further proposed large-batch training and lookahead optimizer, respectively. These advancements have pushed the boundaries of SGD, enabling efficient training of increasingly complex deep learning models~\cite{xu2024stochastic,zhang2024metastore,wang2024optimizing,xing2024database}. Without loss of generality, we follow the common practice of existing works and implement SGD without momentum to better demonstrate the efficiency of our strategy. 

\subsection{Differentially Private Stochastic Gradient Descent (DP-SGD)}
\label{subsec:DPSGD}
As a privacy-preserving technique for training various models, DP-SGD is an adaptation of the traditional SGD algorithm to incorporate differential privacy guarantees. This is crucial in applications where data confidentiality and user privacy are concerns, such as in medical or financial data processing. The basic idea is adding DP noise to gradients during the training process. Chaudhuri et al.~\cite{chaudhuri2011differentially} initially introduced a DP-SGD algorithm for empirical risk minimization. Abadi et al.~\cite{abadi2016deep} were one of the first to introduce DP-SGD into deep learning. Afterwards, DP-SGD has been rapidly applied to various models, such as generative adversarial network~\cite{ho2021dp}, Bayesian learning~\cite{NIPS2017_dfce0680}, federated learning~\cite{zhang2022understanding}.

As for optimizing model efficiency of DP-SGD, there are three major streams. First, gradient clipping can help to reduce the noise scale while still following DP framework. For example, adaptive gradient clipping~\cite{xia2023differentially,zhang2022understanding,10.5555/3495724.3496879}, which adaptively bounds the sensitivity of the DP noise, can trade the clipped information for noise reduction. Second, we can amplify the privacy bounds to save privacy budgets, such as Rényi Differential Privacy~\cite{gopi2021numerical}. Last, more efficient SGD algorithms, such as DP-Adam \cite{tang2024dp}, can be introduced to DP-SGD so as to improve the training efficiency. 

However, existing works still cling to numerical perturbation, and there is no work investigating whether the numerical DP scheme is optimal for the geometric SGD in various applications. In this work, we instead fill in this gap \textbf{by proposing a new DP perturbation scheme}, which exclusively preserves directions of gradients so as to improve model efficiency. As no previous works carry out optimization from this perspective, \textbf{our work is therefore only parallel to vanilla DP-SGD while orthogonal to all existing works}. 

	\section{Preliminaries and problem formulation}
\label{sec:problem}
In this section, we first introduce the preliminaries of DP and SGD, based on which we then formulate DP-SGD as an optimization problem.
\begin{table}[t]
  \centering
  \begin{tabular}{cc}
	    \toprule
	    Symbol & Meaning \\
	    \midrule
	    $\epsilon$&privacy budget\\
	    $\beta$&bounding factor\\
	    $B$&batch size\\
	    $C$&clipping threshold\\
	    $\sigma$ &noise multiplier\\
	    $\boldsymbol{w}$&model parameters\\
	    $\boldsymbol{w}^{\star}$&global optima\\
	    $\boldsymbol{g}$&original gradient\\
	    $\tilde{\boldsymbol{g}}$&clipped gradient\\
	    $\boldsymbol{n}$&DP noise vector\\
	    $\boldsymbol{g}^*$&perturbed gradient from traditional DP\\
	    $\boldsymbol{g}^\star$&perturbed gradient from GeoDP\\
	    $\boldsymbol{\theta}$&direction of a gradient\\
	    $\Vert\boldsymbol{g}\Vert$&magnitude of a gradient\\
	  \bottomrule
 \end{tabular}
\caption{Frequently-used notations}
\label{tab:notation}
\end{table}
\subsection{Differential Privacy}
\label{subsec:DP}
Differential Privacy (DP) is a mathematical framework that quantifies the privacy preservation. Formally, $(\epsilon,\delta)$-\emph{DP} is defined as follows:
\begin{definition}
	\emph{($(\epsilon$,$\delta)$-DP).} A randomized algorithm $\mathcal{M}: D\to R$ satisfies $(\epsilon$,$\delta)$-DP if for all datasets $D$ and $D'$ differing on a single element, and for all subsets $S$ of $R$, the following inequality always holds:
	\begin{equation}
		\Pr[\mathcal{M}(D) \in S] \leq e^\epsilon \times \Pr[\mathcal{M}(D') \in S] + \delta.
	\end{equation}
\end{definition}
In essence, DP guarantees that given any outcome of $\mathcal{M}$, it is unlikely for any third party to infer the original record with high confidence. Privacy budget $\epsilon$ controls the level of preservation. Namely, a lower $\epsilon$ means stricter privacy preservation and thus poorer efficiency, and vice versa. $\delta$ determines the probability of not satisfying $\epsilon$ preservation.

To determine the noise scale for DP, we measure the maximum change of $\mathcal{M}$ in terms of $L_2$-norm as:
\begin{definition}
	\textit{($L_2$-sensitivity).} The $L_2$-sensitivity of $\mathcal{M}$ is:
	\begin{equation}
		\Delta\mathcal{M}=\max_{\Vert D-D'\Vert_1=1} \Vert \mathcal{M}(D)-\mathcal{{M}}(D')\Vert_2.
	\end{equation}
\end{definition}

Through out the paper, we follow the common practice of existing works~\cite{abadi2016deep,fu2023dpsur} and use Gaussian mechanism~\cite{dwork2014algorithmic} for theoretical analysis and experiments. The perturbed value of Gaussian mechanism is $g^*=g+Gau(0,2\Delta\mathcal{M}\ln\frac{1.25}{\delta}/\epsilon^2)$, where $Gau$ denotes a random variable that follows Gaussian distribution with probability density function: 
\begin{equation}
	f(x)=\frac{1}{\sigma\sqrt{2\pi}}\exp(-\frac{(x-\mu)^2}{2\sigma^2}).
\end{equation}
Referring to the standard deviation of $Gau(0,2\ln\frac{1.25}{\delta}/\epsilon^2)$ as \textbf{the noise multiplier $\sigma$}, \textbf{the noise scale} of Gaussian mechanism is $\Delta\mathcal{M}\sigma$~\cite{dwork2014algorithmic}. Thus, a smaller $\sigma$ comes with a lesser perturbation.

\subsection{Stochastic Gradient Descent}
SGD (stochastic gradient descent) is one of the most widely used optimization techniques in machine learning~\cite{bottou2012stochastic}. Let $D$ be the private dataset, and $\boldsymbol{w}$ denote the model parameters (a.k.a the training model). Given $S\subseteq D$ and $S=\left\{s_{1}, s_{2}, ..., s_{(B-1)}, s_{B}\right\}$ ($B$ denoting the number of data in $S$), the objective $F(\boldsymbol{w})$ can be formulated as $F(\boldsymbol{w};S)=\frac{1}{B}\sum_{j=1}^{B} \mathit{l} (\boldsymbol{w};s_{j})$,
where $\mathit{l} (\boldsymbol{w};s_{j})$ is the loss function trained on one subset data $s_j$ to optimize $\boldsymbol{w}$.

To optimize this task, we follow the common practice of existing works and use mini-batch stochastic gradient descent (SGD) \cite{lecun2002efficient}. Given the total number of iterations $T$, $\boldsymbol{w}_t=\left(\boldsymbol{w}_{t1}, \boldsymbol{w}_{t2}, ..., \boldsymbol{w}_{t(d-1)}, \boldsymbol{w}_{td}\right)$ $\left(0\leq t\leq T-1\right)$ denotes a $d$-dimensional model weight derived from the $t$-th iteration (where $t=0$ is the initiate state). While using $\eta$ to denote the learning rate, we have the gradient $\boldsymbol{g}_t$ of the $t$-th iteration:
\begin{equation}
	\begin{aligned}
		\boldsymbol{g}_t&=\nabla F(\boldsymbol{w}_t;S)
		=\frac{1}{B}\sum_{j=1}^{B} \nabla\mathit{l} (\boldsymbol{w};s_{j})=\frac{1}{B}\sum_{j=1}^{B}\boldsymbol{g}_{tj}.
	\end{aligned}	
\label{equ:graident}
\end{equation}
\begin{equation}
	\boldsymbol{w}_{t+1}=\boldsymbol{w}_t-\eta \boldsymbol{g}_t
\label{equ:GradientDescent}
\end{equation}


SGD is known to have an intrinsic problem of gradient explosion \cite{pascanu2013difficulty}. It often occurs when the gradients become very large during backpropagation, and causes the model to converge rather slowly. As the most effective solution to this problem, gradient clipping \cite{pascanu2013difficulty} is also considered in this work. Let $\left\|\boldsymbol{g}\right\|$ denote the $L_2$-norm of a $d$-dimensional vector $\boldsymbol{g}=(\boldsymbol{g}_1, \boldsymbol{g}_2, ..., \boldsymbol{g}_{d-1}, \boldsymbol{g}_d)$, i.e., $\left\|\boldsymbol{g}\right\|=\sqrt{\sum_{z=1}^{d}\boldsymbol{g}_z^2}$. Assume that $G$ is the maximum $L_2$-norm value of all possible gradients for any weight $\boldsymbol{w}$ derived from any subset $S$, i.e., $G=\sup_{\boldsymbol{w}\in \mathbb{R}^d, S\in D}\mathbb{E}\left[\left\|\boldsymbol{g}\right\|\right]$. Then each gradient $\boldsymbol{g}$ is clipped by a clipping threshold $C\in \left(0,G\right]$. Formally, the clipped gradient $\tilde{\boldsymbol{g}}$ is:
\begin{equation}
	\tilde{\boldsymbol{g}}=\frac{\boldsymbol{g}}{\max \left\{1,\left\|\boldsymbol{g}\right\|/C\right\}}.
\label{equ:clipping}
\end{equation}


Applying Equation \ref{equ:clipping} to Equation \ref{equ:graident}, we derive the clipped gradient from the $t$-th iteration as: 
\begin{equation}
	\tilde{\boldsymbol{g}}_t=\frac{1}{B}\sum_{j=1}^{B}\tilde{\boldsymbol{g}}_{tj}.
	\label{equ:clipped}
\end{equation}
\subsection{Problem Formulation of DP-SGD}
In each iteration of DP-SGD, $\boldsymbol{w}_{t+1}$ is perturbed to $\boldsymbol{w}_{t+1}^*$ by adding DP noise $\boldsymbol{n}_t$ to the sum of $\tilde{\boldsymbol{g}}_{tj}$. Let $\boldsymbol{g}_{t}^*$ denote the perturbed gradient. Formally,
\begin{equation}
	\begin{aligned}
		\boldsymbol{g}_t^*&=\frac{1}{B}(\sum_{j=1}^{B}\tilde{\boldsymbol{g}}_{tj}+\boldsymbol{n}_t)=\tilde{\boldsymbol{g}}_t+\boldsymbol{n}_t/B, \\ \boldsymbol{w}_{t+1}^*&=\boldsymbol{w}_t-\eta\boldsymbol{g}_t^*.
	\end{aligned}
\end{equation}
Accordingly, the following definition establishes the measurement for model efficiency (ME). Obviously, a smaller ME means a better model efficiency.
\begin{definition}
	\textit{(Model Efficiency (ME)).} Suppose there exists a global optima $\boldsymbol{w}^{\star}$, the model deficiency can be measured by the Euclidean Distance between the current model $\boldsymbol{w}_{t+1}^*$ and the optima $\boldsymbol{w}^{\star}$, i.e.,
	\begin{equation}
		\mbox{Model efficiency (ME)}=\left\|\boldsymbol{w}_{t+1}^*-\boldsymbol{w}^{\star}\right\|^2.
	\end{equation}
\end{definition}
As having to validate the optimality of GeoDP over DP on preserving the descent trend, we follow the common practice\cite{wang2019collecting} and adopt mean square error (MSE) to measure the error on perturbed directions. In general, a larger MSE means a larger perturbation.
\begin{definition}
	\textit{(Mean Square Error (MSE)).} Considering the perturbed directions $\left\{\boldsymbol{\theta}_1^*,\boldsymbol{\theta}_2^*,...,\boldsymbol{\theta}_{m-1}^*,\boldsymbol{\theta}_{m}^*\right\}$ and the original directions $\left\{\boldsymbol{\theta}_1,\boldsymbol{\theta}_2,...,\boldsymbol{\theta}_{m-1},\boldsymbol{\theta}_{m}\right\}$ of $m$ gradients, MSE of perturbed directions is defined as follows:
	\begin{equation}
			\text{MSE}(\boldsymbol{\theta}^*)=\frac{1}{m}\sum_{i=1}^{m}\Vert\boldsymbol{\theta}_i^*-\boldsymbol{\theta}_i\Vert^2_2.
	\end{equation}
\end{definition}
The problem in this work is to investigate the impact of DP noise $\boldsymbol{n}_t$ on the SGD efficiency, i.e.,  $\left\|\boldsymbol{w}_{t+1}^*-\boldsymbol{w}^{\star}\right\|^2$, and further optimize the model efficiency by reducing the noise on the direction of a gradient, i.e., reducing $\text{MSE}(\boldsymbol{\theta}^*)$.
	\section{Deficiency of DP-SGD: a gap between directional SGD and numerical DP}
\label{sec:analysis}
In this section, we identify an intrinsic deficiency in DP-SGD. Let the trained models of DP-SGD and non-private SGD be denoted by    $\boldsymbol{w}_{t+1}^*= \boldsymbol{w}_t-\eta\tilde{\boldsymbol{g}}_t^*$ and $\boldsymbol{w}_{t+1}=\boldsymbol{w}_t-\eta\tilde{\boldsymbol{g}}_t$, respectively.
The Euclidean distances between the current models and the global optima (i.e., $\left\|\boldsymbol{w}_{t+1}^*-\boldsymbol{w}^\star\right\|^2$ and $\left\|\boldsymbol{w}_{t+1}-\boldsymbol{w}^\star\right\|^2$) reflect the model efficiency of DP-SGD and non-private SGD, respectively. Apparently, the smaller this distance is, the better efficiency the model achieves. Their efficiency difference (ED) (i.e., $\left\|\boldsymbol{w}_{t+1}^*-\boldsymbol{w}^\star\right\|^2-\left\|\boldsymbol{w}_{t+1}-\boldsymbol{w}^\star\right\|^2$), on the other hand, can describe the impact of DP noise on the model efficiency, as presented by the following theorem.
\begin{theorem}
	(\textit{Impact of DP Noise on Model Efficiency}). Suppose $\boldsymbol{n}_\sigma$ follows a noise distribution with the standard deviation $\sigma\boldsymbol{I}$, ED can be measured as:
	\begin{equation}
		\begin{aligned}
		&\left\|\boldsymbol{w}_{t+1}^*-\boldsymbol{w}^\star\right\|^2-\left\|\boldsymbol{w}_{t+1}-\boldsymbol{w}^\star\right\|^2\\=&{\eta}^2\underbrace{\left(\frac{2C}{B}\langle \boldsymbol{n}_\sigma,\tilde{\boldsymbol{g}}_t\rangle+\frac{C^2\boldsymbol{n}_\sigma^2}{B^2}\right)}_{Item\quad A}+\frac{2\eta C}{B}\underbrace{\langle \boldsymbol{n}_\sigma,\boldsymbol{w}^\star-\boldsymbol{w}_t\rangle}_{Item\quad B}.
		\end{aligned}
	\end{equation}
\label{the:impact}
\end{theorem}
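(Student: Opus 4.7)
The plan is to expand the squared distance $\|\boldsymbol{w}_{t+1}^*-\boldsymbol{w}^\star\|^2$ by first writing the DP-SGD iterate as the non-private iterate plus a pure noise displacement, and then applying the elementary identity $\|\boldsymbol{a}+\boldsymbol{b}\|^2-\|\boldsymbol{a}\|^2=2\langle \boldsymbol{a},\boldsymbol{b}\rangle+\|\boldsymbol{b}\|^2$. No probabilistic argument is needed: the identity is a deterministic algebraic expansion valid for any realization of the noise vector, so the theorem is essentially a rearrangement.

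First, I would invoke the update rule from Section~III. Since each clipped per-sample gradient $\tilde{\boldsymbol{g}}_{tj}$ has $L_2$-norm at most $C$, the $L_2$-sensitivity of $\sum_{j}\tilde{\boldsymbol{g}}_{tj}$ equals $C$, so the Gaussian noise added by DP-SGD can be rewritten as $\boldsymbol{n}_t=C\boldsymbol{n}_\sigma$, where $\boldsymbol{n}_\sigma$ is a standardized noise vector with covariance $\sigma^2\boldsymbol{I}$. Combined with $\boldsymbol{g}_t^*=\tilde{\boldsymbol{g}}_t+\boldsymbol{n}_t/B$, the perturbed iterate then takes the clean form
$$\boldsymbol{w}_{t+1}^*=\boldsymbol{w}_{t+1}-\tfrac{\eta C}{B}\boldsymbol{n}_\sigma.$$

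Next, setting $\boldsymbol{a}=\boldsymbol{w}_{t+1}-\boldsymbol{w}^\star$ and $\boldsymbol{b}=-\tfrac{\eta C}{B}\boldsymbol{n}_\sigma$ and applying the squared-norm identity yields
$$\|\boldsymbol{w}_{t+1}^*-\boldsymbol{w}^\star\|^2-\|\boldsymbol{w}_{t+1}-\boldsymbol{w}^\star\|^2=-\tfrac{2\eta C}{B}\langle \boldsymbol{w}_{t+1}-\boldsymbol{w}^\star,\boldsymbol{n}_\sigma\rangle+\tfrac{\eta^2 C^2}{B^2}\|\boldsymbol{n}_\sigma\|^2.$$
Substituting $\boldsymbol{w}_{t+1}=\boldsymbol{w}_t-\eta\tilde{\boldsymbol{g}}_t$ inside the inner product and splitting by bilinearity produces one term proportional to $\langle\boldsymbol{w}^\star-\boldsymbol{w}_t,\boldsymbol{n}_\sigma\rangle$ (which becomes Item~B) and one proportional to $\langle\tilde{\boldsymbol{g}}_t,\boldsymbol{n}_\sigma\rangle$ (which merges with the $\|\boldsymbol{n}_\sigma\|^2$ term into Item~A). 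Collecting the common factors of $\eta$ and $\eta^2$ then reproduces the statement of the theorem verbatim.

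The only real obstacle is bookkeeping: the factor of $C$ introduced by the sensitivity, the factor of $1/B$ introduced by mini-batch averaging, and the sign flip from the $-\eta$ in the gradient step must be tracked consistently. Once that is done, the clean decomposition into Item~A (the interaction of noise with the current clipped gradient plus the noise self-energy, both scaled by $\eta^2$) and Item~B (the interaction of noise with the displacement toward the optimum, scaled by $\eta$) falls out immediately, making the lemma essentially a one-line expansion once the noise is normalized.
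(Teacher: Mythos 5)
Your proposal is correct and is essentially the paper's own argument: both reduce to the deterministic identity $\tilde{\boldsymbol{g}}_t^*=\tilde{\boldsymbol{g}}_t+C\boldsymbol{n}_\sigma/B$ followed by an algebraic expansion of the squared norms, differing only in that you expand the single difference around $\boldsymbol{w}_{t+1}-\boldsymbol{w}^\star$ while the paper expands both distances around $\boldsymbol{w}_t-\boldsymbol{w}^\star$ and subtracts. The bookkeeping of the factors $C$, $1/B$, and the sign of $-\eta$ checks out, so the decomposition into Item~A and Item~B matches the theorem verbatim.
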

\begin{proof}
	For DP-SGD, we have:
	\begin{equation}
		\begin{aligned}
			\left\|\boldsymbol{w}_{t+1}^*-\boldsymbol{w}^\star\right\|^2&=\left\|\boldsymbol{w}_t-\boldsymbol{w}^\star-\eta\tilde{\boldsymbol{g}}_t^*\right\|^2\\&=\Vert \boldsymbol{w}_t-\boldsymbol{w}^\star\Vert^2+\eta^2\Vert\tilde{\boldsymbol{g}}_t^*\Vert^2+2\eta\langle\tilde{\boldsymbol{g}}_t^*,\boldsymbol{w}^\star-\boldsymbol{w}_t\rangle.
		\end{aligned}
		\label{equ:EffiDPSGD}
	\end{equation}
	While for SGD, we have:
	\begin{equation}
		\begin{aligned}
			\left\|\boldsymbol{w}_{t+1}-\boldsymbol{w}^\star\right\|^2&=\left\|\boldsymbol{w}_t-\boldsymbol{w}^\star-\eta\tilde{\boldsymbol{g}}_t\right\|^2\\&=\Vert \boldsymbol{w}_t-\boldsymbol{w}^\star\Vert^2+\eta^2\Vert\tilde{\boldsymbol{g}}_t\Vert^2+2\eta\langle\tilde{\boldsymbol{g}}_t,\boldsymbol{w}^\star-\boldsymbol{w}_t\rangle.
		\end{aligned}
		\label{equ:EffiSGD}
	\end{equation}
	Subtracting Equation \ref{equ:EffiSGD} from Equation \ref{equ:EffiDPSGD}, we have:
	\begin{equation}
		\begin{aligned}
			&\left\|\boldsymbol{w}_{t+1}^*-\boldsymbol{w}^\star\right\|^2-\left\|\boldsymbol{w}_{t+1}-\boldsymbol{w}^\star\right\|^2\\=&{\eta}^2\underbrace{\left(\Vert\tilde{\boldsymbol{g}}_t^*\Vert^2-\Vert\tilde{\boldsymbol{g}}_t\Vert^2\right)}_{Item\quad A}+2\eta\underbrace{\langle\tilde{\boldsymbol{g}}_t^*-\tilde{\boldsymbol{g}}_t,\boldsymbol{w}^\star-\boldsymbol{w}_t\rangle}_{Item\quad B}.
		\end{aligned}
		\label{equ:ED}
	\end{equation}
	Recall that $\boldsymbol{n}_t$ follows a noise distribution whose standard deviation is $C\sigma\boldsymbol{I}$. Suppose $\boldsymbol{n}_\sigma$ follows a noise distribution with the standard deviation $\sigma\boldsymbol{I}$, we have $\boldsymbol{n}_t=C\boldsymbol{n}_\sigma$. 
	For Item A:
	\begin{equation}
		\begin{aligned}
			\Vert\tilde{\boldsymbol{g}}_t^*\Vert^2-\Vert\tilde{\boldsymbol{g}}_t\Vert^2&=\left(\tilde{\boldsymbol{g}}_t^*-\tilde{\boldsymbol{g}}_t\right)\left(\tilde{\boldsymbol{g}}_t^*+\tilde{\boldsymbol{g}}_t\right)\\&=\boldsymbol{n}_t/B\left(2\tilde{\boldsymbol{g}}_t+\boldsymbol{n}_t/B\right)\\&=2\langle C\boldsymbol{n}_\sigma/B,\tilde{\boldsymbol{g}}_t\rangle+C^2\boldsymbol{n}_\sigma^2/B^2.
		\end{aligned}
		\label{equ:A}
	\end{equation}
	And for Item B:
	\begin{equation}
		\tilde{\boldsymbol{g}}_t^*-\tilde{\boldsymbol{g}}_t=\boldsymbol{n}_t/B=C\boldsymbol{n}_\sigma/B.
		\label{equ:B}
	\end{equation}
	Applying Equation \ref{equ:A} and \ref{equ:B} into Equation \ref{equ:ED}, we have:
	\begin{equation}
		\begin{aligned}
			&\left\|\boldsymbol{w}_{t+1}^*-\boldsymbol{w}^\star\right\|^2-\left\|\boldsymbol{w}_{t+1}-\boldsymbol{w}^\star\right\|^2\\=&{\eta}^2\underbrace{\left(2\langle C\boldsymbol{n}_\sigma/B,\tilde{\boldsymbol{g}}_t\rangle+C^2\boldsymbol{n}_\sigma^2/B^2\right)}_{Item\quad A}+2\eta C/B\underbrace{\langle \boldsymbol{n}_\sigma,\boldsymbol{w}^\star-\boldsymbol{w}_t\rangle}_{Item\quad B}.
		\end{aligned}
		\label{equ:proof}
	\end{equation}	
\end{proof}

In general, we wish the efficiency of DP-SGD closer to SGD, i.e., to make ED as close to zero as possible. This theorem coincides with many empirical findings in existing works. Item A, for example, shows that the introduction of DP noise would cause a bias to the global optima. That is, \textbf{DP-SGD cannot stably converges to the global optima, while sometimes reaching that point}, as proved by Corollary \ref{coro:optima}. This means that the model efficiency of DP-SGD is always lower than regular SGD~\cite{xia2023differentially,zhang2022understanding,10.5555/3495724.3496879,tang2024dp}. In practice, in order to provide a better model efficiency, existing works~\cite{abadi2016deep,yu2019make,feng2020privacy} apply lower noise scale (i.e., smaller $\boldsymbol{n}_\sigma$) when DP-SGD is about to converge. This operation makes Item A close to zero (but normally non-zero). Another example is that large batch size can enhance the efficiency of DP-SGD, as it can certainly reduce both Item A and Item B~\cite{fu2023dpsur}. 
\begin{corollary}
DP-SGD cannot stably stays at global optima.
\label{coro:optima}
\end{corollary}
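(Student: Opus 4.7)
The plan is to apply Theorem \ref{the:impact} at the specific point $\boldsymbol{w}_t = \boldsymbol{w}^\star$ and show that the DP-SGD update necessarily pushes the iterate away from the optimum with strictly positive probability, so $\boldsymbol{w}^\star$ cannot be a fixed point of the recursion.

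First I would substitute $\boldsymbol{w}_t = \boldsymbol{w}^\star$ directly into the ED identity from Theorem \ref{the:impact}. Under the standard assumption that $\boldsymbol{w}^\star$ is a stationary point of the objective, the clipped mini-batch gradient satisfies $\tilde{\boldsymbol{g}}_t = \boldsymbol{0}$ (at least in expectation over the sampling of $S$), and the non-private update trivially gives $\left\|\boldsymbol{w}_{t+1} - \boldsymbol{w}^\star\right\|^2 = 0$. Plugging these two facts in zeros out Item B entirely as well as the inner product $\langle \boldsymbol{n}_\sigma, \tilde{\boldsymbol{g}}_t\rangle$ inside Item A, so the ED collapses to the surviving noise term
\begin{equation*}
\left\|\boldsymbol{w}_{t+1}^*-\boldsymbol{w}^\star\right\|^2 \;=\; \frac{\eta^2 C^2}{B^2}\left\|\boldsymbol{n}_\sigma\right\|^2.
\end{equation*}

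Next I would invoke that $\boldsymbol{n}_\sigma$ is drawn from a Gaussian with non-degenerate covariance $\sigma\boldsymbol{I}$, so $\Pr[\boldsymbol{n}_\sigma=\boldsymbol{0}]=0$. This immediately gives $\left\|\boldsymbol{w}_{t+1}^* - \boldsymbol{w}^\star\right\|^2 > 0$ almost surely, and taking expectation yields the explicit strictly positive bound $\mathbb{E}\!\left[\left\|\boldsymbol{w}_{t+1}^* - \boldsymbol{w}^\star\right\|^2\right] = \eta^2 C^2 d\sigma^2 / B^2 > 0$. Hence the DP-SGD iteration started at $\boldsymbol{w}^\star$ leaves the optimum in one step with probability one, which is exactly what the corollary asserts.

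I expect the only delicate point to be justifying $\tilde{\boldsymbol{g}}_t = \boldsymbol{0}$ at $\boldsymbol{w}^\star$. For a smooth (convex) loss with $S = D$ this is textbook, but for stochastic mini-batches one really only has $\mathbb{E}[\tilde{\boldsymbol{g}}_t] = \boldsymbol{0}$, so the cross term in Item A vanishes only in expectation; the conclusion must then be phrased as $\mathbb{E}\!\left[\left\|\boldsymbol{w}_{t+1}^* - \boldsymbol{w}^\star\right\|^2\right] > 0$, which still rules out $\boldsymbol{w}^\star$ being a stable fixed point and matches the informal wording of the corollary. No other technical hurdle is anticipated, since the result is essentially a direct corollary of the algebra already established in the proof of Theorem \ref{the:impact}.
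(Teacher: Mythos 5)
Your proposal is correct and follows essentially the same route as the paper: substitute $\boldsymbol{w}_t=\boldsymbol{w}^\star$ into the efficiency-difference identity of Theorem~\ref{the:impact}, observe that Item~B vanishes, and conclude that the surviving noise term in Item~A is nonzero, so the iterate leaves the optimum. The only difference is that you additionally assume $\tilde{\boldsymbol{g}}_t=\boldsymbol{0}$ at $\boldsymbol{w}^\star$ to kill the cross term $\frac{2C}{B}\langle\boldsymbol{n}_\sigma,\tilde{\boldsymbol{g}}_t\rangle$, whereas the paper keeps that term and simply asserts Item~A is nonzero unless $\boldsymbol{n}_\sigma=\boldsymbol{0}$; your version is a minor refinement (and you correctly flag that with clipping and mini-batching the zero-gradient assumption only holds in expectation), not a different argument.
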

\begin{proof}
	 Assume DP-SGD reaches the global optima at $t$-th iteration, i.e. $\boldsymbol{w}_t= \boldsymbol{w}^\star$, and apply this to Equation \ref{equ:proof} to have Equation \ref{equ:convergence} at $t+1$ iteration. Accordingly, Item B becomes zero while Item A is non-zero unless $\boldsymbol{n}_\sigma$ stays zero (which is generally negative). It proves that DP-SGD deviates from the global optima at $t+1$-th iteration even it can somehow reach it at $t$-th iteration.
	\begin{equation}
	\begin{aligned}
		&\lim\limits_{\boldsymbol{w}_t\to \boldsymbol{w}^\star}\left\|\boldsymbol{w}_{t+1}^*-\boldsymbol{w}^\star\right\|^2-\left\|\boldsymbol{w}_{t+1}-\boldsymbol{w}^\star\right\|^2\\=&{\eta}^2\underbrace{\left(\frac{2C}{B}\langle \boldsymbol{n}_\sigma,\tilde{\boldsymbol{g}}_t\rangle+\frac{C^2\boldsymbol{n}_\sigma^2}{B^2}\right)}_{Item\quad A}.
	\end{aligned}
	\label{equ:convergence}
	\end{equation}
\end{proof}

More importantly, this theorem reveals that DP-SGD techniques, such as adaptive clipping and learning rate, are incapable of counteracting the impact of DP noise on the direction of a gradient. On one hand, \textbf{Item A describes how the noise scale impacts the model efficiency}. To reduce this impact, small learning rate (${\eta}^2$) and clipping threshold ($C$ and $C^2$), or large batch size $B$ is effective. This conclusion is confirmed by many existing works, as reviewed in Section \ref{sec:literature}. On the other hand, \textbf{Item B}, the inner product between the noise $\boldsymbol{n}_t$ and the training process ($\boldsymbol{w}^\star-\boldsymbol{w}_t$ can be considered as the distance for SGD to descend, i.e., descent trend) \textbf{reflects how the perturbation impacts the further training}. While capable of reducing Item A, fine-tuning hyper-parameters cannot reduce Item B, as proved by the following corollary. 

\begin{corollary}
	Optimization techniques of DP-SGD (i.e., fine-tuning clipping and learning rate) cannot reduce the impact of noise on the gradient direction.
\label{coro:bound}
\end{corollary}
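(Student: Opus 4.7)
My plan is to show that the direction of the perturbed gradient $\tilde{\boldsymbol{g}}_t^* = \tilde{\boldsymbol{g}}_t + C\boldsymbol{n}_\sigma/B$ is, up to a positive rescaling, invariant under the tunable DP-SGD knobs $\eta$ and $C$. Because $\mathrm{MSE}(\boldsymbol{\theta}^*)$ depends on the unit direction of $\tilde{\boldsymbol{g}}_t^*$ only, any parameter that factors out of this vector as a strictly positive scalar cannot move $\boldsymbol{\theta}^*$ a single radian. This immediately disposes of $\eta$: the learning rate never enters $\tilde{\boldsymbol{g}}_t^*$ at all, only the outer update $\boldsymbol{w}_{t+1}^* = \boldsymbol{w}_t - \eta\tilde{\boldsymbol{g}}_t^*$, so tuning $\eta$ rescales the step but leaves every direction, and hence $\boldsymbol{\theta}^*$, untouched.

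The substantive part concerns the clipping threshold $C$. I would split on the two clipping regimes. In the active regime $\|\boldsymbol{g}\| > C$, Equation \ref{equ:clipping} gives $\tilde{\boldsymbol{g}}_t = (C/\|\boldsymbol{g}\|)\,\boldsymbol{g}$, so
\begin{equation}
\tilde{\boldsymbol{g}}_t^* \;=\; C\left(\frac{\boldsymbol{g}}{\|\boldsymbol{g}\|} + \frac{\boldsymbol{n}_\sigma}{B}\right),
\end{equation}
and the positive scalar $C$ factors out cleanly. The direction of the right-hand side coincides with that of $\boldsymbol{g}/\|\boldsymbol{g}\| + \boldsymbol{n}_\sigma/B$, which has no $C$ dependence whatsoever. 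This generalises the numerical observation of Example \ref{exam:geometric}: in the clipping-active regime, $C$ perturbs only the magnitude of $\tilde{\boldsymbol{g}}_t^*$, never its direction. The mini-batch average in Equation \ref{equ:clipped} does not change this conclusion, since the same $C$ multiplies every clipped per-sample gradient as well as the noise term.

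The only place I anticipate needing care is the inactive regime $\|\boldsymbol{g}\| \le C$, where $\tilde{\boldsymbol{g}}_t = \boldsymbol{g}$ but the noise term still scales linearly with $C$, so $C$ does enter the perturbed direction. Here the effect runs the wrong way for the adversary of the corollary: enlarging $C$ inflates $C\boldsymbol{n}_\sigma/B$ relative to the unchanged signal $\boldsymbol{g}$ and therefore strictly worsens the directional error, whereas shrinking $C$ below $\|\boldsymbol{g}\|$ simply transitions back to the active regime already handled, where the direction is $C$-independent. Combining both regimes with the $\eta$-independence observation yields the corollary. I do not expect a genuinely hard step; the main obstacle is just the clean bookkeeping of the two clipping regimes and the explicit identification that $\mathrm{MSE}(\boldsymbol{\theta}^*)$ is scale-invariant, which justifies discarding the $C$ prefactor in the active case.
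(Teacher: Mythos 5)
Your proposal is correct, and its core clipping argument coincides with the paper's: in the active regime the paper derives $\tilde{\boldsymbol{g}}_1^*/C_1=\tilde{\boldsymbol{g}}_2^*/C_2$ for two thresholds, which is exactly your observation that $\tilde{\boldsymbol{g}}_t^*=C\left(\boldsymbol{g}/\|\boldsymbol{g}\|+\boldsymbol{n}_\sigma/B\right)$ has a $C$-free direction. Where you diverge is the framing and the learning-rate step. The paper works inside the efficiency-difference decomposition of Theorem \ref{the:impact}: it first shows that tuning $\eta$ and $C$ \emph{can} drive Item A (the magnitude term) to zero, and then argues that Item B, $\langle\eta^*\tilde{\boldsymbol{g}}^*_t-\eta\tilde{\boldsymbol{g}}_t,\boldsymbol{w}^\star-\boldsymbol{w}_t\rangle$, cannot be controlled because the residual vector's angle relative to $\boldsymbol{w}^\star-\boldsymbol{w}_t$ bears no usable relation to $\eta^*$. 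You instead argue directly about the perturbed direction $\boldsymbol{\theta}^*$ itself: since $\eta$ never enters $\tilde{\boldsymbol{g}}_t^*$ and $C$ only rescales it, $\mathrm{MSE}(\boldsymbol{\theta}^*)$ is invariant under both knobs. Your route is cleaner and more conclusive for the statement as literally written (the paper's ``no relevance between $\eta^*$ and $\theta$'' step is informal), and your explicit treatment of the inactive regime $\|\boldsymbol{g}\|\le C$ closes a case the paper silently assumes away by positing $\|\boldsymbol{g}\|\ge C_1\ge C_2$. What the paper's route buys is the contrast that motivates GeoDP --- the same knobs that are powerless against Item B do eliminate Item A --- which your proof omits but which is narrative rather than logically required. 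One small caution: your remark that the mini-batch average does not affect the conclusion holds only when every per-sample gradient is actively clipped; if some samples fall below the threshold, $C$ no longer factors out of $\tilde{\boldsymbol{g}}_t$ cleanly, though the paper operates under the same implicit assumption.
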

\begin{proof}
	We analyze the effectiveness of DP-SGD techniques (i.e., fine-tuning clipping, learning rate and batch size) on Item A and Item B, respectively.
	\begin{enumerate}
		\item \textit{Item A}.

		\textit{As per learning rate,} we apply different learning rate $\eta^*$ to DP-SGD, and see if tuning $\eta^*$ can make Item A zero. Applying $\eta^*$ to Equation \ref{equ:ED}, we have:
		\begin{equation}
			\text{Item A}=\eta^{*2}\Vert\tilde{\boldsymbol{g}}_t^*\Vert^2-\eta^2\Vert\tilde{\boldsymbol{g}}_t\Vert^2.
			\label{equ:boundmag}
		\end{equation}
		As Equation \ref{equ:boundmag} is only composed of numerical values, fined-tuned $\eta^*=\eta^2\Vert\tilde{\boldsymbol{g}}_t\Vert^2/\Vert\tilde{\boldsymbol{g}}_t^*\Vert^2$ can certainly zero Item A. 
		
		\textit{As for clipping}, given $\boldsymbol{n}_\sigma$ is a random variable drawn from the noise distribution whose standard deviation is $\sigma\boldsymbol{I}$, we have:
		\begin{equation}
			\boldsymbol{n}_t=C\boldsymbol{n}_\sigma.
			\label{equ:noisemultiplier}
		\end{equation} 
		As $\tilde{\boldsymbol{g}}_t^*=\tilde{\boldsymbol{g}}_t+\boldsymbol{n}_t/B$, reducing $C$ certainly reduces the scale of $\tilde{\boldsymbol{g}}_t^*$.
		Overall, fine-tuning of DP-SGD can certainly reduce Item A.
		\item \textit{Item B}.
		
		\textit{For learning rate}, we have:
		\begin{equation}
			\begin{aligned}
				\text{Item B}&=\langle\eta^*\tilde{\boldsymbol{g}}^*_t-\eta\tilde{\boldsymbol{g}}_t,\boldsymbol{w}^\star-\boldsymbol{w}_t\rangle\\&=\Vert\eta^*\tilde{\boldsymbol{g}}^*_t-\eta\tilde{\boldsymbol{g}}_t\Vert\Vert\boldsymbol{w}^\star-\boldsymbol{w}_t\Vert\cos\theta.				
			\end{aligned}
		\end{equation}
		where $\theta$ is the relative angle between two vectors. Apparently, no matter how to fine-tune $\eta^*$, how $\eta^*\tilde{\boldsymbol{g}}^*_t-\eta\tilde{\boldsymbol{g}}_t$ varies is rather random because there is no relevance between $\eta^*$ and $\eta^*\tilde{\boldsymbol{g}}^*_t-\eta\tilde{\boldsymbol{g}}_t$ as well as $\theta$.
		
		For clipping, we prove that it cannot change the geometric property of the perturbed gradient, although the noise scale is indeed changed. If the clipping thresholds $C_1$, $C_2$ and a gradient $\boldsymbol{g}(\Vert\boldsymbol{g}\Vert\ge C_1\ge C_2)$, we have the clipped gradient $\tilde{\boldsymbol{g}}_1=\frac{\boldsymbol{g}}{\Vert\boldsymbol{g}_1\Vert/C_1}$, $\tilde{\boldsymbol{g}}_2=\frac{\boldsymbol{g}}{\Vert\boldsymbol{g}_2\Vert/C_2}$ as per Equation \ref{equ:clipping} and corresponding noise $\boldsymbol{n}_1=C_1\boldsymbol{n}_\sigma$, $\boldsymbol{n}_2=C_2\boldsymbol{n}_\sigma$ as per Equation \ref{equ:noisemultiplier}. Accordingly, the perturbed gradient is:
		\begin{equation}
			\begin{aligned}
				\tilde{\boldsymbol{g}}_1^*&=\tilde{\boldsymbol{g}}_1+\boldsymbol{n}_1/B=\frac{\boldsymbol{g}}{\Vert\boldsymbol{g}_1\Vert/C_1}+C_1/B\boldsymbol{n}_\sigma.\\
				\tilde{\boldsymbol{g}}_2^*&=\tilde{\boldsymbol{g}}_2+\boldsymbol{n}_2/B=\frac{\boldsymbol{g}}{\Vert\boldsymbol{g}_2\Vert/C_2}+C_2/B\boldsymbol{n}_\sigma.
			\end{aligned}
		\end{equation}
		Then, we have:
		\begin{equation}
			\begin{aligned}
				\frac{\tilde{\boldsymbol{g}}_1^*}{C_1}=\frac{\tilde{\boldsymbol{g}}_2^*}{C_2},
				\Vert\tilde{\boldsymbol{g}}_1^*\Vert\ge\Vert\tilde{\boldsymbol{g}}_2^*\Vert.
			\end{aligned}
		\end{equation}
		Namely, clipping cannot control the directions of perturbed gradients $\frac{\tilde{\boldsymbol{g}}_1^*}{C_1}=\frac{\tilde{\boldsymbol{g}}_2^*}{C_2}$, while indeed reducing the noise scale ($\Vert\tilde{\boldsymbol{g}}_1^*\Vert\ge\Vert\tilde{\boldsymbol{g}}_2^*\Vert$).
	\end{enumerate}
\end{proof}
In general, this corollary points out a intrinsic deficiency of DP-SGD. That is, as a gradient is actually a vector instead of a numerical array, \textbf{traditional DP mechanisms}, which add noise to values of a gradient, \textbf{cannot directly reduce the noise on gradient direction (Item B)}. Even worse, \textbf{DP introduces biased noise to the direction, while adding unbiased noise to the gradient itself,} as further proved via hyper-spherical coordinate system (see Lemma \ref{lem:MSE} for rigorous proofs).
	\section{Geometric perturbation: GeoDP}
\label{sec:strategy}
In the previous analysis, we have proved the sub-optimality of traditional DP-SGD. In this section, we seize this opportunity to {\bf perturb the direction and the magnitude of a gradient, respectively, so that the noise on descent trend is directly reduced}. Within the DP framework, our strategy significantly improves the model efficiency.

In what follows, we first introduce $d$-spherical coordinate system \cite{thomas2006multivariable} in Section~\ref{subsec:hyper}, where one $d$-dimensional gradient is converted to one magnitude and one direction. By perturbing gradients in the $d$-spherical coordinate system, we propose our perturbation strategy \textit{GeoDP} to optimize the model efficiency in Section \ref{subsec:GeoDP}. Privacy and efficiency analysis is provided to prove its compliance with DP definition and huge advantages over DP-SGD in Section \ref{subsec:comparison}.

\subsection{Hyper-spherical Coordinate System}
\label{subsec:hyper}
The $d$-spherical coordinate system~\cite{thomas2006multivariable}, also known as the hyper-spherical coordinate system, is commonly used to analyze geometric objects in high-dimensional space, e.g., the gradient. Compared to the rectangular coordinate system \cite{thomas2006multivariable}, such a system directly represents any $d$-dimensional vector  $\boldsymbol{g}=(\boldsymbol{g}_1, \boldsymbol{g}_2, ..., \boldsymbol{g}_{d-1}, \boldsymbol{g}_d)$ using a magnitude $\left\|\boldsymbol{g}\right\|$ and a direction $\boldsymbol{\theta}=\left(\boldsymbol{\theta}_1,\boldsymbol{\theta}_2,...,\boldsymbol{\theta}_{d-2},\boldsymbol{\theta}_{d-1}\right)$. Formally, the magnitude is:
\begin{equation}
	\left\|\boldsymbol{g}\right\|=\sqrt{\sum_{z=1}^{d}\boldsymbol{g}_z^2}.
	\label{equ:SphMig}
\end{equation}
and its direction $\boldsymbol{\theta}$ is:
\begin{equation}
	\begin{aligned}
		\boldsymbol{\theta}_z=
		\begin{cases}
			\operatorname{arctan2}\left(\sqrt{\sum_{z}^{d-1}\boldsymbol{g}_{z+1}^2},\boldsymbol{g}_z\right) & \text{if } 1\leq z\leq d-2,\\
			\operatorname{arctan2}\left(\boldsymbol{g}_{z+1},\boldsymbol{g}_z\right)&\text{if } z= d-1.
		\end{cases}.
	\end{aligned}
	\label{equ:SphAng}
\end{equation}
where arctan2 is the two-argument arctangent function defined as follows:
\begin{equation}
\operatorname{arctan2}(y, x) =
\begin{cases}
	\arctan\left(\frac{y}{x}\right) & \text{if } x > 0, \\
	\arctan\left(\frac{y}{x}\right) +\pi  & \text{if } x < 0 \text{ and } y \ge 0, \\
	\arctan\left(\frac{y}{x}\right)-\pi & \text{if } x < 0 \text{ and } y < 0, \\
	\frac{\pi}{2} & \text{if } x = 0 \text{ and } y > 0, \\
	-\frac{\pi}{2} & \text{if } x = 0 \text{ and } y < 0, \\
	\text{undefined} & \text{if } x = 0 \text{ and } y = 0.
\end{cases}.
\label{equ:atan2}
\end{equation}

While having the same functionality as arctan, arctan2 is more robust. For example, arctan2 can deal with a zero denominator ($\boldsymbol{g}_z=0$). Note that $\sqrt{\sum_{z}^{d-1}\boldsymbol{g}_{z+1}^2}$ in Equation \ref{equ:SphAng} is always non-negative. For $1\leq z \leq d-2$, the range of $			\operatorname{arctan2}\left(\sqrt{\sum_{z}^{d-1}\boldsymbol{g}_{z+1}^2},\boldsymbol{g}_z\right)$ is either $\left(0,\frac{\pi}{2}\right]$ or $\left(\frac{\pi}{2},\pi\right)$ if $\boldsymbol{g}_z\ge 0$ or $\boldsymbol{g}_z< 0$, as per Equation \ref{equ:atan2}. \textbf{As such, the range of $\boldsymbol{\theta}_{1\leq z\leq d-2}$ is $\left(0,\pi\right)$. For $z=d-1$, the range of $\boldsymbol{\theta}_z$ is $\left(-\pi,\pi\right)$ as per Equation \ref{equ:atan2}}.

We can also convert a vector $\left(\left\|\boldsymbol{g}\right\|, \boldsymbol{\theta}\right)$ in $d$-spherical coordinates back to rectangular coordinates $(\boldsymbol{g}_1, \boldsymbol{g}_2, ..., \boldsymbol{g}_{d-1}, \boldsymbol{g}_d)$ using the following equation:
\begin{equation}
	\begin{aligned}
		\boldsymbol{g}_z =
		\begin{cases}
			\left\|\boldsymbol{g}\right\| \cos{\boldsymbol{\theta}_z},  & \mbox{if }\quad z=1 \\
			\left\|\boldsymbol{g}\right\| \prod_{i=1}^{z-1}\sin{\boldsymbol{\theta}_i} \cos{\boldsymbol{\theta}_z}, &\mbox{if }\quad 2 \leq z \leq d - 1\\
			\left\|\boldsymbol{g}\right\| \prod_{i=1}^{z-1}\sin{\boldsymbol{\theta}_i}, &\mbox{if }\quad z=d
		\end{cases}.
	\end{aligned}
	\label{equ:Rec}
\end{equation}
Figure \ref {fig:hyper} provides an example of conversions in three-dimensional space. Given $\left\|\boldsymbol{g}\right\|=\sqrt{\boldsymbol{g}_1^2+\boldsymbol{g}_2^2+\boldsymbol{g}_3^2}$, $\boldsymbol{\theta}_1=\operatorname{arctan2}\left(\sqrt{\boldsymbol{g}_2^2+\boldsymbol{g}_3^2},\boldsymbol{g}_1\right)$ and $\boldsymbol{\theta}_2=\operatorname{arctan2}\left(\boldsymbol{g}_3,\boldsymbol{g}_2\right)$, a vector $\boldsymbol{g}=(\boldsymbol{g}_1,\boldsymbol{g}_2,\boldsymbol{g}_3)$ in rectangular coordinate system (marked in black) can be represented as $(\left\|\boldsymbol{g}\right\|,\boldsymbol{\theta}_1,\boldsymbol{\theta}_2)$ in hyper-spherical coordinate system (marked in blue). Without loss of generality, we use $\boldsymbol{g}\leftrightarrow\left(\left\|\boldsymbol{g}\right\|, \boldsymbol{\theta}\right)$ to denote the reversible conversions between two systems.
\begin{figure}[htbp]
	\centering
	\includegraphics[width=0.7\linewidth]{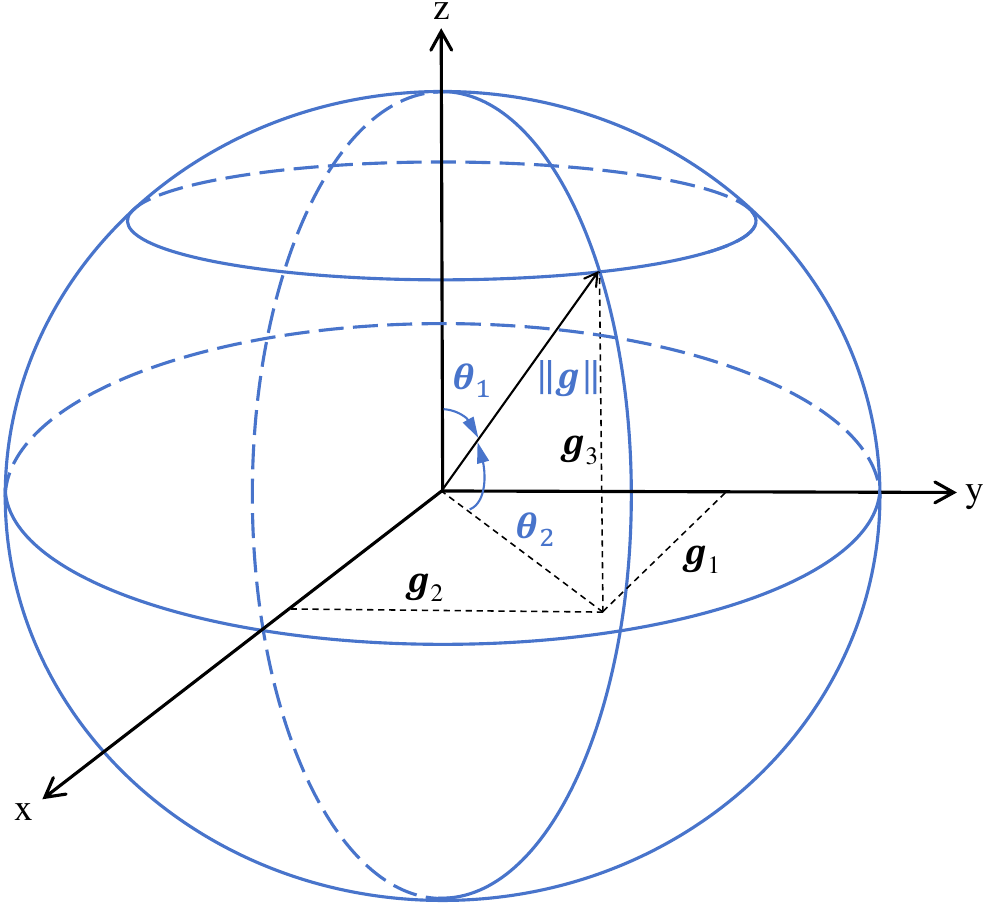}
	\caption{Coordinates Conversions in Three-dimensional Space}
	\label{fig:hyper}
\end{figure}
\subsection{GeoDP---Geometric DP Perturbation for DP-SGD}
\label{subsec:GeoDP}
\textit{GeoDP} directly reduces the noise on the descent trend via $d$-spherical coordinate system. Algorithm \ref{algo:GeoDP} describes how \textit{GeoDP} works, and major steps are interpreted as follows:
\begin{enumerate}
	\item \textit{Spherical-coordinate Conversion:} Convert the clipped gradient to hyper-spherical coordinate system according to Equation \ref{equ:SphMig} and Equation \ref{equ:SphAng}, i.e., $\boldsymbol{g}\rightarrow\left(\left\|\boldsymbol{g}\right\|, \boldsymbol{\theta}\right)$, which allows perturbation on the magnitude and the direction of a gradient, respectively.
	\item \textit{Reducing the Direction Range (Sensitivity):} According to Theorem \ref{the:ModelingDirection}, the averaged direction of gradients $\left\{\tilde{\boldsymbol{g}}_{tj}|1\leq j\leq B\right\}$ should be centered at one small range, rather than uniformly spreading the whole vector space. This conclusion is also confirmed by various SGD studies~\cite{yu2019make,bottou2012stochastic}. DP-SGD, taking the whole direction space as the privacy region, is therefore overprotective and low efficient. In this work, a bounding factor $\beta\in \left(0,1\right]$ defines the privacy region into a subspace around the original direction, which significantly reduces the noise addition in Step 3. For $1\leq z<d-1$, given $0\leq\Gamma_1\leq \boldsymbol{\theta}_z\leq \Gamma_2\leq\pi$, $\beta$ determines the range between $\Gamma_1$ and $\Gamma_2$, i.e., $\Gamma_2-\Gamma_1=\Delta \boldsymbol{\theta}_z=\beta\pi$. Similarly, $\Gamma_2-\Gamma_1=\Delta \boldsymbol{\theta}_z=2\beta\pi$ for $z=d-1$. Note that $\beta=1$ means the full space. This parameter directly determines the sensitivity of the direction, which consequently influences the noise addition in the following step. 
\item \textit{Noise Addition:} GeoDP allows to perturb the magnitude and the direction of a gradient, respectively. For the magnitude, $\left\|\tilde{\boldsymbol{g}}_t\right\|$ is already bounded by $C$ in the first stage. Similar to DP-SGD, the noise scale of the perturbed magnitude is $C\sigma$. For the direction, the noise scale is the sensitivity $\Delta\boldsymbol{\theta}$ times the noise multiplier $\sigma$. Note that maximum changes of $\tilde{\boldsymbol{\theta}}_{1\leq z\leq d-2}$ and $\tilde{\boldsymbol{\theta}}_{d-1}$ are $\beta\pi$ and $2\beta\pi$, respectively, due to the bounding of the direction range. Overall, $\Delta\boldsymbol{\theta}=\sqrt{(d-2)(\beta\pi)^2+(2\beta\pi)^2}=\sqrt{d+2}\beta\pi$.
\item \textit{Rectangular-coordinate Conversion:} Convert the perturbed magnitude and direction back to rectangular coordinates according to Equation \ref{equ:Rec}, i.e.,   $\left(\left\|\tilde{\boldsymbol{g}}_t\right\|^\star,\boldsymbol{\theta}_t^\star\right)\rightarrow \tilde{\boldsymbol{g}}_t^\star$, which allows future gradient descent.
\end{enumerate}
\begin{algorithm}
	\caption{GeoDP-SGD}
	\label{algo:GeoDP}
	\begin{algorithmic}[1]
		\Require Batch size $B$, noise multiplier $\sigma$, clipping threshold $C$, bounding factor $\beta (0<\beta\leq 1)$, learning rate $\eta$, total number of iterations $T$.
		\Ensure  Trained model $\boldsymbol{w}_T^\star$.
		\State Initialize a model with parameters $\boldsymbol{w}_0$.
		\label{item:init}
		\For{each iteration $t=0,1,...,T-2,T-1$ }
		\State Derive the average clipped gradient $\tilde{\boldsymbol{g}}_t$ with respect to the batch size $B$ and the clipping threshold $C$.
		\label{item:grad}
		\State Convert $\tilde{\boldsymbol{g}}_t$ to $d$-spherical coordinates as $\left(\left\|\tilde{\boldsymbol{g}}_t\right\|, \boldsymbol{\theta}_t\right)$.
		\label{item:Sph}
		\State
		Bound the privacy region $\Delta$ of $\boldsymbol{\theta}$ as follows:
		\begin{equation*}
			\begin{aligned}
				\Delta\boldsymbol{\theta}_z=				
				\begin{cases}
					\Delta\boldsymbol{\theta}_{1\leq z \leq d-2}&=\beta\pi,
					\\\Delta\boldsymbol{\theta}_{d-1}&=2\beta\pi.			
				\end{cases}	
			\end{aligned}
		\end{equation*}
		\label{item:bound}
		\State
		$\left\|\tilde{\boldsymbol{g}}_t\right\|^\star=\left\|\tilde{\boldsymbol{g}}_t\right\|+\frac{C}{B}\boldsymbol{n}_\sigma$, $\tilde{\boldsymbol{\theta}}_t^\star=\tilde{\boldsymbol{\theta}}_t+\frac{\sqrt{d+2}\beta\pi}{B}\boldsymbol{n}_\sigma$, where $\boldsymbol{n}_\sigma$ follows a zero-mean Gaussian distribution with standard deviation $\sigma$.
		\label{item:Per}
		\State Convert $\left(\left\|\tilde{\boldsymbol{g}}_t\right\|^\star,\tilde{\boldsymbol{\theta}}_t^\star\right)$ back to rectangular coordinates as the perturbed gradient $\tilde{\boldsymbol{g}}_t^\star$.
		\State
		Update $\boldsymbol{w}_{t+1}^\star$ by taking a step in the direction of the noisy gradient, i.e., $\boldsymbol{w}_{t+1}^\star=\boldsymbol{w}_t-\eta\tilde{\boldsymbol{g}}_t^\star$.
		\EndFor
		\label{item::Rec}
	\end{algorithmic}
\end{algorithm}

In general, GeoDP provides better efficiency to SGD in two perspectives. First, \textbf{GeoDP adds unbiased noise, whereas traditional DP introduces biased perturbation, to the direction of a gradient} (see Lemma \ref{lem:MSE} for rigorous proofs). This counter-intuitive conclusion is supported by the fact that tradition DP, which adds unbiased noise to the gradient itself, however accumulates noise on different angles of one direction. Example \ref{exam:NoiseAccumulation} demonstrates how this noise accumulation happens. As such, numerical perturbation of DP seriously degrades the accuracy of directional information. GeoDP, on the other hand, independently controls the noise on each angle and therefore prevents noise accumulation.
\begin{example}
	Suppose we have a three-dimensional gradient $\boldsymbol{g}=(\boldsymbol{g}_1,\boldsymbol{g}_2,\boldsymbol{g}_3)$. Following traditional DP, these three should be added noise $\boldsymbol{n}=(\boldsymbol{n}_1,\boldsymbol{n}_2,\boldsymbol{n}_3)$. For the direction of this perturbed gradient $\boldsymbol{\theta}$, its first angle $\boldsymbol{\theta}_1$ should be $\operatorname{arctan2}\left(\sqrt{(\boldsymbol{g}_2+\boldsymbol{n}_2)^2+(\boldsymbol{g}_3+\boldsymbol{n}_3)^2},\boldsymbol{g}_1+\boldsymbol{n}_1\right)$, according to Equation 4. It is very obvious that noise of three dimensions ($\boldsymbol{n}_1,\boldsymbol{n}_2,\boldsymbol{n}_3$) is accumulated to the first angle $\boldsymbol{\theta}_1$, and this accumulation is biased.
\label{exam:NoiseAccumulation}
\end{example}

Second, via coordinates conversion, $d$-dimensional gradient is transferred to one magnitude and $d-1$ directions. By composition theory, $\frac{d-1}{d}$ privacy budget is allocated to the direction by GeoDP, which can better preserves directional information. 

Finally, we discuss the time complexity of GeoDP-SGD. For DP-SGD, given the size of private dataset $\vert D\vert$ and the number of gradient's dimensions $d$, DP-SGD takes $O(\vert D\vert d)$ time to calculate derivatives in one epoch \cite{yu2019make}. By contrast, coordinate conversions take $O(d)$ time to complete because it involves $d$-dimensional geometry calculation. Overall, GeoDP has the same time complexity $O(\vert D\vert d)$ as DP-SGD. 
\subsection{Comparison between GeoDP and Traditional DP: Efficiency and Privacy}
\label{subsec:comparison}
\subsubsection{Efficiency Comparison}
Via hyper-spherical coordinate system, we can identify deficiencies of traditional DP from a geometric perspective and further understand the merits of GeoDP. If clipping threshold is fixed, the max magnitude of a clipped gradient is determined, because $\left\|\tilde{\boldsymbol{g}}\right\|=\frac{\left\|\tilde{\boldsymbol{g}}\right\|}{\max \left\{1,\left\|\boldsymbol{g}\right\|/C\right\}}\leq C$. That is, the clipped gradients are within the hyper-sphere whose radius (abbreviated as $R$) is $C$. Figure \ref{fig:hyper} can help to understand this fact. For example, $\boldsymbol{g}$ (highlighted in black) in Figure \ref{fig:hyper} is vector within the hyper-sphere whose radius is $\left\|\boldsymbol{g}\right\|$ (highlighted in blue). By adding noise, traditional DP makes sure that any two gradients within the hyper-sphere are indistinguishable. However, there are two serious disadvantages. 

\textbf{On one hand, numerical noise addition does not respect the geometric property of gradients, as interpreted by the following example.} In general, traditional DP seriously sabotages the geometric property of a gradient, which eventually results in low model efficiency.
\begin{example}
	Suppose two parallel gradients $\tilde{\boldsymbol{g}}_1=(1,1)$, $\tilde{\boldsymbol{g}}_2=(2,2)$ and clipping threshold $C=2\sqrt{2}$. As such, these two gradients are all within $R=C=2\sqrt{2}$ hyper-sphere, and their directions are all $\boldsymbol{\theta}=\operatorname{arctan2}(1,1)=\operatorname{arctan2}(2,2)=\frac{\pi}{4}$. As such, DP adds the same scale of noise to both gradients for privacy preservation. Assuming that the noise $\boldsymbol{n}=(2,-1)$ is added to both gradients, directions of two perturbed gradients are $\boldsymbol{\theta}^*_1=\operatorname{arctan2}(1-1, 1+2)=0$ and $\boldsymbol{\theta}^*_2=\operatorname{arctan2}(2-1, 2+2)\approx \frac{2\pi}{25}$. Given parallel gradients ($\boldsymbol{\theta}=\frac{\pi}{4}$), directions of perturbed gradients ($\boldsymbol{\theta}^*_1\neq\boldsymbol{\theta}^*_2\neq\boldsymbol{\theta}$) are much different, even if the added noise ($\boldsymbol{n}=(2,-1)$) is the same. 	
\end{example}

\textbf{On the other hand, traditional DP, which preserves all directions within the hyper-sphere, actually adds excessive noise to the gradient.} Different from regular SGD, DP-SGD usually requires very large batch size (e.g., 16,384) to reduce the negative impact of noise~\cite{fu2023dpsur}, which makes training process less ``stochastic''~\cite{yu2019make,bottou2012stochastic}. In specific, the summation of gradients $\left\{\tilde{\boldsymbol{g}}_{jz}|1\leq j\leq B,1\leq z\leq d\right\}$ follows \textit{Lindeberg–Lévy Central Limit Theorem} (CLT)~\cite{SHANTHIKUMAR1984153} as these gradients are independently and identically distributed (each of them is derived from a single data of the same dataset). As such, we can use Gaussian distribution to model the average of this summation (i.e., $\tilde{\boldsymbol{g}}_z=\frac{1}{B}\sum_{j=1}^B\tilde{\boldsymbol{g}}_{jz}$), as proved by the following theorem. 
\begin{theorem} 
	(\textit{Modeling of the Averaged Stochastic Gradients}).
	Suppose that $var(\tilde{\boldsymbol{g}}_{jz})$ and $\mathbb{E}(\tilde{\boldsymbol{g}}_{jz})$ are the variance and the expectation of $\left\{\tilde{\boldsymbol{g}}_{jz}|1\leq j\leq B,1\leq z\leq d\right\}$, the probability density function (\textit{pdf}) of $\tilde{\boldsymbol{g}}_z$ is:
	\begin{equation}
		\lim_{B\to \infty} f(\tilde{\boldsymbol{g}}_z)=\sqrt{\frac{B}{2\pi* var(\tilde{\boldsymbol{g}}_{jz})}}\exp\left({-\frac{B^2*\left(x-\mathbb{E}(\tilde{\boldsymbol{g}}_{jz})\right)^2}{2*var(\tilde{\boldsymbol{g}}_{jz})}}\right).
	\end{equation}
	\label{the:ModelingGradient}
\end{theorem}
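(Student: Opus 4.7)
The plan is to apply the Lindeberg–Lévy Central Limit Theorem (CLT) directly to the coordinate-wise sum of clipped per-sample gradients, and then perform a straightforward change of variables from the sum to the average $\tilde{\boldsymbol{g}}_z=\frac{1}{B}\sum_{j=1}^{B}\tilde{\boldsymbol{g}}_{jz}$ to read off its limiting probability density.

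First I would justify that the classical CLT hypotheses are satisfied. Each per-sample clipped gradient coordinate $\tilde{\boldsymbol{g}}_{jz}$ is obtained by applying the deterministic map of Equation \ref{equ:clipping} to the loss gradient on a single datum $s_j$ drawn from the private dataset; since the $s_j$ in one mini-batch are sampled i.i.d.\ from the same underlying data distribution, the family $\{\tilde{\boldsymbol{g}}_{jz}\mid 1\le j\le B\}$ is i.i.d.\ as well. Moreover, the clipping operation bounds $\|\tilde{\boldsymbol{g}}_{j}\|\le C$, so $|\tilde{\boldsymbol{g}}_{jz}|\le C$ almost surely, which immediately guarantees finite first and second moments $\mathbb{E}(\tilde{\boldsymbol{g}}_{jz})$ and $\mathrm{var}(\tilde{\boldsymbol{g}}_{jz})$. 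These are exactly the conditions Lindeberg–Lévy requires.

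Next I would invoke CLT to conclude that, as $B\to\infty$,
\begin{equation*}
\sqrt{B}\bigl(\tilde{\boldsymbol{g}}_z-\mathbb{E}(\tilde{\boldsymbol{g}}_{jz})\bigr)\;\xrightarrow{d}\;\mathcal{N}\bigl(0,\mathrm{var}(\tilde{\boldsymbol{g}}_{jz})\bigr),
\end{equation*}
so that $\tilde{\boldsymbol{g}}_z$ is asymptotically Gaussian with mean $\mathbb{E}(\tilde{\boldsymbol{g}}_{jz})$ and variance $\mathrm{var}(\tilde{\boldsymbol{g}}_{jz})/B$. Substituting this mean and variance into the standard Gaussian density template $\frac{1}{\sqrt{2\pi}\,s}\exp\!\bigl(-\tfrac{(x-m)^2}{2s^2}\bigr)$ with $m=\mathbb{E}(\tilde{\boldsymbol{g}}_{jz})$ and $s^2=\mathrm{var}(\tilde{\boldsymbol{g}}_{jz})/B$ yields the claimed normalizing constant $\sqrt{B/(2\pi\,\mathrm{var}(\tilde{\boldsymbol{g}}_{jz}))}$ and the claimed exponent, completing the derivation of $\lim_{B\to\infty}f(\tilde{\boldsymbol{g}}_z)$.

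I do not anticipate a substantive obstacle: the only real subtlety is making sure the sampling assumption underlying the mini-batch (i.i.d.\ draws from $D$) is stated explicitly so that CLT applies coordinate-wise, and that bounded support (via clipping) is used to dispose of any moment or Lindeberg-type concern in one line. The main ``writing'' care will be to present the change of variable from $\sum_j \tilde{\boldsymbol{g}}_{jz}$ to $\tilde{\boldsymbol{g}}_z$ cleanly so that the factor of $B$ in the normalizing constant and the exponent matches exactly the expression in the theorem statement.
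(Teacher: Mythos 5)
Your approach is essentially the same as the paper's: both arguments observe that the per-sample clipped gradient coordinates $\{\tilde{\boldsymbol{g}}_{jz}\mid 1\le j\le B\}$ are i.i.d.\ (each derived from a single datum of the same batch), invoke the Lindeberg--L\'evy CLT to get
\begin{equation*}
\frac{\tfrac{1}{B}\sum_{j=1}^{B}\tilde{\boldsymbol{g}}_{jz}-\mathbb{E}(\tilde{\boldsymbol{g}}_{jz})}{\sqrt{var(\tilde{\boldsymbol{g}}_{jz})/B}}\xrightarrow{d}\mathcal{N}(0,1),
\end{equation*}
and then read off the limiting density of the average. Your added remark that clipping bounds $|\tilde{\boldsymbol{g}}_{jz}|\le C$ and hence guarantees finite first and second moments is a small but genuine improvement in rigor over the paper, which simply asserts the CLT hypotheses.

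There is, however, one concrete discrepancy you gloss over in the last step. If $\tilde{\boldsymbol{g}}_z$ is asymptotically Gaussian with mean $m=\mathbb{E}(\tilde{\boldsymbol{g}}_{jz})$ and variance $s^2=var(\tilde{\boldsymbol{g}}_{jz})/B$, then substituting into $\frac{1}{\sqrt{2\pi}\,s}\exp\bigl(-\tfrac{(x-m)^2}{2s^2}\bigr)$ gives the exponent $-\tfrac{B\,(x-m)^2}{2\,var(\tilde{\boldsymbol{g}}_{jz})}$, with a factor of $B$, not the factor $B^2$ that appears in the theorem statement. The normalizing constant $\sqrt{B/(2\pi\,var(\tilde{\boldsymbol{g}}_{jz}))}$ you obtain is consistent with variance $var(\tilde{\boldsymbol{g}}_{jz})/B$, so the displayed formula in the theorem is internally inconsistent (it does not integrate to one) and almost certainly contains a typo. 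Your claim that the substitution ``yields \ldots the claimed exponent'' is therefore not literally true; you should either state the corrected exponent or explicitly flag the mismatch. The paper's own proof stops at the standardized convergence statement and never writes out the density, so it does not surface this issue either.
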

\begin{proof}
	$\left\{\tilde{\boldsymbol{g}}_j|1\leq j\leq B\right\}$ are independently and identically distributed variables because each one is derived from one data $s_j$ of the same subset $S$. According to \textit{CLT}, the following probability holds:
	\begin{equation}
		\begin{aligned}
			&\lim\limits_{B\rightarrow\infty} \Pr\left(\frac{\sum_{j=1}^{B}  \tilde{\boldsymbol{g}}_{jz}-B*\mathbb{E}( \tilde{\boldsymbol{g}}_{jz})}{\sqrt{B*var( \tilde{\boldsymbol{g}}_{jz})}}\leq X\right)\\=&\lim\limits_{B\rightarrow\infty} \Pr\left(\frac{\frac{1}{B}\sum_{j=1}^{B}  \tilde{\boldsymbol{g}}_{jz}-\mathbb{E}( \tilde{\boldsymbol{g}}_{jz})}{\sqrt{var( \tilde{\boldsymbol{g}}_{jz})/B}}\leq X\right)=\int_{-\infty}^{X} \phi(x)dx.
		\end{aligned}
		\label{equ:NormalGradient}
	\end{equation}
	where $\phi(x)=\frac{1}{\sqrt{2\pi}} \exp(-\frac{x^2}{2})$ is the pdf of the standard Gaussian distribution. As such, $\frac{\sum_{j=1}^{B}  \tilde{\boldsymbol{g}}_{jz}/B-\mathbb{E}( \tilde{\boldsymbol{g}}_{jz})}{\sqrt{var( \tilde{\boldsymbol{g}}_{jz})/B}}$ follows standard gaussian distribution $\mathcal{N}(0,1)$, by which our claim is proved. 
\end{proof}
Indicated by Theorem \ref{the:ModelingGradient}, large batch size would incur unevenly distributed average of gradients, making the training process less stochastic. A further conjecture proposes that some directions within the space are also unlikely to be the direction of gradient descent at the current state, as proved by the following theorem. Suppose that 
the directions of all gradients are $\left\{\boldsymbol{\theta}_{jz}|1\leq j\leq B,1\leq z\leq d\right\}$, we have:
\begin{theorem}
	(\textit{Modeling of the Averaged Directions of Gradients}). Suppose that $var(\tilde{\boldsymbol{\theta}}_{jz})$ and $\mathbb{E}(\tilde{\boldsymbol{\theta}}_{jz})$ are the variance and expectation of $\left\{\tilde{\boldsymbol{\theta}}_{jz}|1\leq j\leq B,1\leq z\leq d\right\}$, the \textit{pdf} of the averaged direction $\tilde{\boldsymbol{\theta}}_{z}=\frac{1}{B}\sum_{j=1}^{B} \tilde{\boldsymbol{\theta}}_{jz}$ is:
	\begin{equation}
		\lim_{B\to \infty} f(\tilde{\boldsymbol{\theta}}_{z})=\sqrt{\frac{B}{2\pi* var(\tilde{\boldsymbol{\theta}}_{jz})}}\exp\left({-\frac{B^2*\left(x-\mathbb{E}(\tilde{\boldsymbol{\theta}}_{jz})\right)^2}{2*var(\tilde{\boldsymbol{\theta}}_{jz})}}\right).
	\end{equation}
	\label{the:ModelingDirection}
\end{theorem}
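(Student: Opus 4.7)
The plan is to mirror the proof of Theorem \ref{the:ModelingGradient} almost verbatim, since this claim is again just the Lindeberg-Lévy Central Limit Theorem applied to an i.i.d.\ average, only with directional coordinates replacing Cartesian ones. First I would note that each per-sample direction $\tilde{\boldsymbol{\theta}}_{jz}$ is obtained from the clipped gradient $\tilde{\boldsymbol{g}}_j$ via the deterministic coordinate map of Equation \ref{equ:SphAng}; since the underlying $\tilde{\boldsymbol{g}}_j$ are i.i.d.\ (each is produced by a single sample $s_j$ drawn from the same batch $S$), the resulting directions $\{\tilde{\boldsymbol{\theta}}_{jz}\mid 1\le j\le B\}$ are i.i.d.\ for every fixed coordinate $z$.

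Given the i.i.d.\ property, finite expectation $\mathbb{E}(\tilde{\boldsymbol{\theta}}_{jz})$, and finite variance $var(\tilde{\boldsymbol{\theta}}_{jz})$, I would invoke Lindeberg-Lévy directly to obtain
\begin{equation*}
\lim_{B\to\infty}\Pr\!\left(\frac{\frac{1}{B}\sum_{j=1}^{B}\tilde{\boldsymbol{\theta}}_{jz}-\mathbb{E}(\tilde{\boldsymbol{\theta}}_{jz})}{\sqrt{var(\tilde{\boldsymbol{\theta}}_{jz})/B}}\le X\right)=\int_{-\infty}^{X}\phi(x)\,dx,
\end{equation*}
where $\phi$ is the standard Gaussian density. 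A linear change of variable from the standardized ratio back to $\tilde{\boldsymbol{\theta}}_{z}=\frac{1}{B}\sum_{j=1}^{B}\tilde{\boldsymbol{\theta}}_{jz}$ then gives the limiting normal density with mean $\mathbb{E}(\tilde{\boldsymbol{\theta}}_{jz})$ and variance $var(\tilde{\boldsymbol{\theta}}_{jz})/B$, which is exactly the claimed expression. This computation is essentially identical to the final line of Theorem \ref{the:ModelingGradient}, so I would simply reuse it.

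The main obstacle, and the only place where I would slow down, is verifying the integrability and well-definedness that CLT silently assumes. The angles $\tilde{\boldsymbol{\theta}}_{jz}$ lie in the bounded intervals $(0,\pi)$ (for $1\le z\le d-2$) or $(-\pi,\pi)$ (for $z=d-1$) by the range analysis following Equation \ref{equ:atan2}, so the variance is automatically finite and Lindeberg-Lévy applies without further moment conditions. One caveat is that $\operatorname{arctan2}$ is undefined when both arguments vanish; since the clipped gradients are continuously distributed, the event $\boldsymbol{g}_{jz}=\sqrt{\sum_{i>z}\boldsymbol{g}_{j(i+1)}^{2}}=0$ occurs with probability zero, so the directional coordinates are almost-surely well-defined and the i.i.d.\ reduction goes through. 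Beyond this measure-theoretic remark, no new machinery is needed.
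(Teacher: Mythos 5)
Your proposal is correct and follows essentially the same route as the paper: both apply the Lindeberg--L\'evy CLT to the i.i.d.\ per-sample angles $\tilde{\boldsymbol{\theta}}_{jz}$ and read off the limiting normal density, exactly mirroring the proof of Theorem \ref{the:ModelingGradient}. Your added checks (boundedness of the angles guaranteeing finite variance, and the almost-sure well-definedness of $\operatorname{arctan2}$) are sensible refinements the paper leaves implicit, but they do not change the argument.
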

\begin{proof}
	\begin{equation}
		\begin{aligned}
			&\lim\limits_{B\rightarrow\infty} \Pr\left(\frac{\sum_{j=1}^{B}  \tilde{\boldsymbol{\theta}}_{j}-B*\mathbb{E}( \tilde{\boldsymbol{\theta}}_{j})}{\sqrt{B*var( \tilde{\boldsymbol{\theta}}_{j})}}\leq X\right)\\=&\lim\limits_{B\rightarrow\infty} \Pr\left(\frac{\frac{1}{B}\sum_{j=1}^{B}  \tilde{\boldsymbol{\theta}}_{j}-\mathbb{E}( \tilde{\boldsymbol{\theta}}_{j})}{\sqrt{var( \tilde{\boldsymbol{\theta}}_{j})/B}}\leq X\right)=\int_{-\infty}^{X} \phi(x)dx.
		\end{aligned}		
	\end{equation}	
	where $\phi(x)=\frac{1}{\sqrt{2\pi}} \exp(-\frac{x^2}{2})$ is the pdf of the standard Gaussian distribution. As such, $\frac{\sum_{j=1}^{B}  \boldsymbol{\theta}_{j}/B-\mathbb{E}( \boldsymbol{\theta}_{j})}{\sqrt{var( \boldsymbol{\theta}_{j})/B}}$ follows standard gaussian distribution $\mathcal{N}(0,1)$, by which our claim is proved. 
\end{proof}
This theorem proves that the averaged direction of stochastic gradients actually concentrated at a certain direction, rather than spreading in the whole vector space. As such, traditional DP-SGD, only effective in the whole vector space, actually wastes privacy budgets to preserve unnecessary directions. In contrast, GeoDP preserves the subspace where directions of various gradients are concentrated, and therefore provides much better efficiency, as jointly proved by the following lemma (which indicates the better accuracy of GeoDP on preserving directional information) and theorem (which further indicates the superiority of GeoDP on model efficiency). Experimental results in Section \ref{subsec:perang} also confirm our analysis.
\begin{lemma}
	Given the original direction $\boldsymbol{\theta}$, two perturbed directions $\boldsymbol{\theta}^\star$ and $\boldsymbol{\theta}^*$ from GeoDP and DP, respectively, there always exists such a bounding factor $\beta$ that $\text{MSE}(\tilde{\boldsymbol{\theta}}_t^\star)<\text{MSE}(\tilde{\boldsymbol{\theta}}_t^*)$ holds. 
	\label{lem:MSE}
\end{lemma}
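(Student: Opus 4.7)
The plan is to compare the two MSEs by exploiting a structural asymmetry between the two mechanisms: the per-angle noise scale in GeoDP is a controllable linear function of the bounding factor $\beta$, whereas the direction error under traditional DP is produced by applying the nonlinear $\operatorname{arctan2}$ map to Cartesian perturbations that are completely independent of $\beta$. The whole argument then reduces to showing that the first quantity vanishes as $\beta\to 0$ while the second is bounded below by a positive constant that does not see $\beta$ at all, so continuity produces the required $\beta$.

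First I would compute $\text{MSE}(\tilde{\boldsymbol{\theta}}_t^\star)$ in closed form. By step 6 of Algorithm \ref{algo:GeoDP}, $\tilde{\boldsymbol{\theta}}_t^\star-\tilde{\boldsymbol{\theta}}_t=\frac{\sqrt{d+2}\beta\pi}{B}\boldsymbol{n}_\sigma$ with $\boldsymbol{n}_\sigma\sim\mathcal{N}(\boldsymbol{0},\sigma^2 I_{d-1})$, hence
\begin{equation*}
\text{MSE}(\tilde{\boldsymbol{\theta}}_t^\star) \;=\; \frac{(d-1)(d+2)\beta^2\pi^2\sigma^2}{B^2},
\end{equation*}
which is a $\beta$-independent constant multiplied by $\beta^2$, and therefore tends to $0$ as $\beta\to 0$. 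Note that this already confirms half of the informal claim made right before the lemma: the noise added by GeoDP on the direction is zero-mean and purely variance-driven, i.e.\ unbiased.

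Second I would lower-bound $\text{MSE}(\tilde{\boldsymbol{\theta}}_t^*)$ by a positive quantity free of $\beta$. Writing $\tilde{\boldsymbol{g}}_t^*=\tilde{\boldsymbol{g}}_t+C\boldsymbol{n}_\sigma/B$ and then applying the Cartesian-to-spherical map of Equation \ref{equ:SphAng} to obtain $\tilde{\boldsymbol{\theta}}_t^*$, I would use a Taylor expansion of each $\operatorname{arctan2}$ component around the clean gradient $\tilde{\boldsymbol{g}}_t$. The first-order term contributes a variance whose scale is $C^2\sigma^2/B^2$ times a squared Jacobian norm, and the second-order term contributes a nonzero bias, because as Example \ref{exam:NoiseAccumulation} illustrates, noise on multiple Cartesian components is funneled inside a single $\operatorname{arctan2}$ and produces a strictly positive, biased effect. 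Both contributions combine into a lower bound $\text{MSE}(\tilde{\boldsymbol{\theta}}_t^*)\ge c$ for some $c=c(C,\sigma,B,\tilde{\boldsymbol{g}}_t)>0$ that depends only on the clipping threshold, noise multiplier, batch size, and the clipped gradient, but is free of $\beta$.

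Combining, choosing any $\beta\in(0,\beta^*]$ with $\beta^*=\frac{B\sqrt{c}}{\sigma\pi\sqrt{(d-1)(d+2)}}$ gives $\text{MSE}(\tilde{\boldsymbol{\theta}}_t^\star)<\text{MSE}(\tilde{\boldsymbol{\theta}}_t^*)$, proving the lemma. The main obstacle will be the second step: turning the Taylor argument into a rigorous and uniform lower bound despite the singularity of $\operatorname{arctan2}$ at the origin and its discontinuities on the coordinate axes. I would handle this by restricting expectations to the high-probability event that $\tilde{\boldsymbol{g}}_t+C\boldsymbol{n}_\sigma/B$ stays inside a fixed compact set on which the spherical map has a nonvanishing Jacobian, so that the induced directional variance admits a positive lower bound, and then absorbing the small complementary event into a uniform constant using Gaussian tail estimates.
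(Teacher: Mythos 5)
Your proposal is correct and follows essentially the same route as the paper's proof: GeoDP's directional MSE is pure variance scaling as $\beta^2$ and hence vanishes as $\beta\to 0$, while traditional DP's directional MSE is a strictly positive quantity independent of $\beta$ (the paper attributes this to the nonzero bias and variance induced by funneling Cartesian noise through $\operatorname{arctan2}$), so a sufficiently small $\beta$ wins. The only difference is one of rigor: where the paper simply asserts by inspection of the $\operatorname{arctan2}$ expressions that $bias(\boldsymbol{\theta}^*)\ne 0$ and $var(\boldsymbol{\theta}^*)>0$, you propose a Taylor-expansion and truncation argument to establish the positive lower bound, which is a more careful execution of the same idea.
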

\begin{proof}
	For traditional DP (adding noise $\boldsymbol{n}$ to the gradient $\boldsymbol{g}$), we can derive the perturbed angle $\boldsymbol{\theta}^*_z$ according to Equation \ref{equ:SphAng}, i.e., 
	\begin{equation}
		\begin{aligned}
		&\boldsymbol{\theta}^*_z=\\
		&\begin{cases}
		\operatorname{arctan2}\left(\sqrt{\sum_{z}^{d-1}(\boldsymbol{g}_{z+1}+\boldsymbol{n}_{z+1})^2},\boldsymbol{g}_z+\boldsymbol{n}_z\right) & \text{if } 	1\leq z\leq d-2,\\
		\operatorname{arctan2}\left(\boldsymbol{g}_{z+1}+\boldsymbol{n}_{z+1},\boldsymbol{g}_z+\boldsymbol{n}_z\right)&\text{if } z= d-1.
		\end{cases}.			
		\end{aligned}
	\end{equation}
	Observing both acrtan2 equations above, we can conclude that the \textbf{traditional DP perturbation} introduces \textbf{biased} noise to the original direction, i.e., $\mathbb{E}(\boldsymbol{\theta}^*)\ne \boldsymbol{\theta} (bias(\boldsymbol{\theta}^*)\ne 0)$. Also, the variance of $\boldsymbol{\theta}$ ($var(\boldsymbol{\theta}^*)$) is non-zero, if the noise scale $\boldsymbol{n}_\sigma>0$. 
	
	For GeoDP, we have $\boldsymbol{\theta}^\star=\boldsymbol{\theta}+\frac{\sqrt{d+2}\beta\pi}{B}\boldsymbol{n}_\sigma$. Accordingly, $\mathbb{E}(\boldsymbol{\theta}^\star)=\mathbb{E}(\boldsymbol{\theta}+\frac{\sqrt{d+2}\beta\pi}{B}\boldsymbol{n}_\sigma)=\boldsymbol{\theta} (bias(\boldsymbol{\theta}^\star)=0)$, which means that GeoDP adds unbiased noise to the direction. Besides, $beta$ directly controls the noise added to the direction. In specific, the variance of $\boldsymbol{\theta}^\star (var(\boldsymbol{\theta}^\star))$ can approaching zero if $\beta \to 0$, because $\boldsymbol{\theta}^\star=\boldsymbol{\theta}+\frac{\sqrt{d+2}\beta\pi}{B}\boldsymbol{n}_\sigma$ approaches $0$ if $\beta \to 0$.
	
	Given that $\text{MSE}(\boldsymbol{\theta})=bias^2(\boldsymbol{\theta})+var(\boldsymbol{\theta})$~\cite{duan2024ldptube}, there always exist such one $\beta$ that: 
	\begin{equation}
		\text{MSE}(\boldsymbol{\theta}^\star)=bias^2(\boldsymbol{\theta}^\star)+var(\boldsymbol{\theta}^\star)<=bias^2(\boldsymbol{\theta}^*)+var(\boldsymbol{\theta}^*)=\text{MSE}(\boldsymbol{\theta}^*).
	\end{equation}
	by which our claim is proven.
\end{proof}
Supported by this lemma, we further prove the optimality of GeoDP to tradition DP in the efficiency of SGD tasks in the next theorem.
\begin{theorem}
	(\textit{Optimality of GeoDP}). Let $\boldsymbol{w}_{t+1}^\star= \boldsymbol{w}_t-\eta\tilde{\boldsymbol{g}}_t^{\star}$, $\boldsymbol{w}_{t+1}^*= \boldsymbol{w}_t-\eta\tilde{\boldsymbol{g}}_t^*$ and $\tilde{\boldsymbol{g}}_t$, $\tilde{\boldsymbol{g}}_t^\star$ and $\tilde{\boldsymbol{g}}_t^*$ be the clipped gradient, noisy gradients of GeoDP and DP, respectively. Besides, $\tilde{\boldsymbol{g}}_t\rightarrow\left(\left\|\tilde{\boldsymbol{g}}_t\right\|, \tilde{\boldsymbol{\theta}}_t\right)$, $\tilde{\boldsymbol{g}}_t^\star\rightarrow\left(\left\|\tilde{\boldsymbol{g}}_t\right\|^\star, \tilde{\boldsymbol{\theta}}_t^\star\right)$ and $\tilde{\boldsymbol{g}}_t^*\rightarrow\left(\left\|\tilde{\boldsymbol{g}}_t\right\|^*, \tilde{\boldsymbol{\theta}}_t^*\right)$. The following inequality always holds if $\tilde{\boldsymbol{g}}_t^\star$ and $\tilde{\boldsymbol{g}}_t^*$ both follow $(\epsilon,\delta)$-DP:
	\begin{equation}
		\mathbb{E}\left(\left\|\boldsymbol{w}_{t+1}^\star-\boldsymbol{w}^\star\right\|^2\right)<\mathbb{E}\left(\left\|\boldsymbol{w}_{t+1}^*-\boldsymbol{w}^\star\right\|^2\right).
	\end{equation}
	\label{the:optimality}
\end{theorem}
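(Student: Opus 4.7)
The plan is to replicate the decomposition of Theorem~\ref{the:impact}, but this time applied between the GeoDP update $\boldsymbol{w}_{t+1}^\star$ and the traditional DP update $\boldsymbol{w}_{t+1}^*$ rather than between DP-SGD and clean SGD. Expanding both $\|\boldsymbol{w}_{t+1}^\star-\boldsymbol{w}^\star\|^2$ and $\|\boldsymbol{w}_{t+1}^*-\boldsymbol{w}^\star\|^2$ around $\boldsymbol{w}_t-\boldsymbol{w}^\star$, taking expectation over the noise of each mechanism, and subtracting leaves a difference that splits into a magnitude contribution $\eta^2\bigl(\mathbb{E}\|\tilde{\boldsymbol{g}}_t^*\|^2-\mathbb{E}\|\tilde{\boldsymbol{g}}_t^\star\|^2\bigr)$ and a directional contribution $2\eta\,\langle\mathbb{E}[\tilde{\boldsymbol{g}}_t^*-\tilde{\boldsymbol{g}}_t^\star],\boldsymbol{w}^\star-\boldsymbol{w}_t\rangle$. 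The theorem then reduces to showing this sum is strictly positive for the $\beta$ supplied by Lemma~\ref{lem:MSE}.

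My second step bounds the magnitude bracket using the spherical representation of Section~\ref{subsec:hyper}. Traditional DP injects isotropic Gaussian noise across all $d$ coordinates, yielding $\mathbb{E}\|\tilde{\boldsymbol{g}}_t^*\|^2=\|\tilde{\boldsymbol{g}}_t\|^2+dC^2\sigma^2/B^2$; GeoDP, on the other hand, perturbs only the scalar magnitude with sensitivity $C$ and the $d-1$ angular components with the much smaller sensitivity $\sqrt{d+2}\,\beta\pi$, so once $\beta$ is small enough that the induced displacement in rectangular coordinates stays localised around $\tilde{\boldsymbol{g}}_t$, the second moment of $\tilde{\boldsymbol{g}}_t^\star$ is strictly smaller and this bracket contributes positively.

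The critical third step handles the directional bracket. Using Equation~\ref{equ:Rec} I would linearise $\tilde{\boldsymbol{g}}_t^\star$ around the clean direction via a first-order Taylor expansion in the angular noise, which is legitimate because $\beta$ can be shrunk to concentrate the angular Gaussian inside a small neighbourhood of $\tilde{\boldsymbol{\theta}}_t$. This linearisation converts the MSE inequality of Lemma~\ref{lem:MSE} into a bound on $\|\mathbb{E}[\tilde{\boldsymbol{g}}_t^\star-\tilde{\boldsymbol{g}}_t]\|$ versus $\|\mathbb{E}[\tilde{\boldsymbol{g}}_t^*-\tilde{\boldsymbol{g}}_t]\|$, and Cauchy--Schwarz together with the descent-trend interpretation of $\boldsymbol{w}^\star-\boldsymbol{w}_t$ then yields the required sign on the inner product.

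The main obstacle is precisely justifying that linearisation: Equation~\ref{equ:Rec} is a product of sines and cosines, so a zero-mean angular perturbation does not induce a zero-mean perturbation of the rectangular gradient, and the higher-order trigonometric terms could in principle flip the sign of the directional bracket. I expect to tame this either by showing that the higher-order contributions are uniformly dominated by the linear term once $\beta$ is small, or by absorbing them into the strict slack already present in Lemma~\ref{lem:MSE} for the chosen $\beta$. After this nonlinearity is controlled, combining the two positive brackets gives $\mathbb{E}\|\boldsymbol{w}_{t+1}^\star-\boldsymbol{w}^\star\|^2<\mathbb{E}\|\boldsymbol{w}_{t+1}^*-\boldsymbol{w}^\star\|^2$ as claimed.
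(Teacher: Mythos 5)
Your proposal follows essentially the same route as the paper: both decompose the efficiency difference into a magnitude term and a directional term (the Item~A / Item~B split inherited from Theorem~\ref{the:impact} and Corollary~\ref{coro:bound}), and both make Lemma~\ref{lem:MSE} carry the entire weight of the directional comparison. The differences are in execution. The paper compares each noisy update to the clean one, introduces separate learning rates $\eta^\star$ and $\eta^*$ (which do not appear in the theorem statement, where both updates use the same $\eta$), claims the magnitude terms can be fine-tuned to zero, and then asserts that because $\text{MSE}(\tilde{\boldsymbol{\theta}}_t^\star)<\text{MSE}(\tilde{\boldsymbol{\theta}}_t^*)$ GeoDP ``can more easily make Item B zero.'' You instead subtract the two noisy updates directly, compute the DP second moment exactly ($\mathbb{E}\|\tilde{\boldsymbol{g}}_t^*\|^2=\|\tilde{\boldsymbol{g}}_t\|^2+dC^2\sigma^2/B^2$), and attempt to convert the angular MSE bound into a rectangular-coordinate bound by linearising Equation~\ref{equ:Rec}. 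Your version is more faithful to the statement as written and more quantitative where the paper is qualitative.

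The obstacle you flag --- that a zero-mean angular perturbation pushed through the products of sines and cosines in Equation~\ref{equ:Rec} is not zero-mean in rectangular coordinates, and that the higher-order trigonometric terms could upset the sign of the directional bracket --- is a genuine gap, but you should know that the paper's own proof does not close it either: it never connects the angular MSE of Lemma~\ref{lem:MSE} to the inner product $\langle\tilde{\boldsymbol{g}}_t^\star,\tilde{\boldsymbol{g}}_t\rangle$ in rectangular coordinates; it simply declares the implication. If you succeed in dominating the higher-order terms for small $\beta$ (for instance by a uniform bound on the second derivatives of the spherical-to-rectangular map over the bounded angular range $\beta\pi$), your argument would be strictly more rigorous than the published one. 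Until that step is carried out, however, neither your proposal nor the paper establishes the strict inequality in expectation; both reduce it to a plausible but unproven claim that smaller angular error implies a smaller directional term after the coordinate change, and both really prove the statement only for a suitably chosen $\beta$ rather than unconditionally as the theorem's ``always holds'' suggests.
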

\begin{proof}
	Following Corollary \ref{coro:bound}, we just have to prove Item B of GeoDP is smaller than Item A of DP. Different learning rates $\eta^\star$ and $\eta^*$ are applied to GeoDP and DP, respectively.
	Recall from Corollary \ref{coro:bound}, we have:
	\begin{equation}
		\begin{aligned}
			\text{Item B}&=\langle\eta^\star\tilde{\boldsymbol{g}}^\star_t-\eta\tilde{\boldsymbol{g}}_t,\boldsymbol{w}^\star-\boldsymbol{w}_t\rangle\\&=\underbrace{\Vert\eta^\star\tilde{\boldsymbol{g}}^\star_t-\eta\tilde{\boldsymbol{g}}_t\Vert}_C\underbrace{\Vert\boldsymbol{w}^\star-\boldsymbol{w}_t\Vert}_D\underbrace{\cos\theta}_E.		
		\end{aligned}
	\end{equation}
	Note that the only way to optimize Item B is via Item C. Most likely, Item $D$, as the distance between the current model and the optima, is fixed, and Item E, which describes the relative angle between noise and the fixed distance, is too random to handle. Therefore, we manage to zero Item C as much as possible to optimize Item B. In general, we have:
	\begin{equation}
		\text{Item C}^2=\left(\eta^\star\tilde{\boldsymbol{g}}^\star_t\right)^2+\left(\eta\tilde{\boldsymbol{g}}_t\right)^2-2\eta^\star\eta\langle\tilde{\boldsymbol{g}}^\star_t,\tilde{\boldsymbol{g}}_t\rangle.
	\end{equation}
	
	While $\left(\eta^\star\tilde{\boldsymbol{g}}^\star_t\right)^2+\left(\eta\tilde{\boldsymbol{g}}_t\right)^2$ can be fine-tuned to zero by learning rates, the only way for $\langle\tilde{\boldsymbol{g}}^\star_t,\tilde{\boldsymbol{g}}_t\rangle$ to be zero is that the direction of $\boldsymbol{g}^\star$ should approximate that of $\tilde{\boldsymbol{g}}_t$ (or the opposite direction of $\tilde{\boldsymbol{g}}_t$, which rarely happens and is therefore out of question here.). Due to $\text{MSE}(\tilde{\boldsymbol{\theta}}_t^\star)<\text{MSE}(\tilde{\boldsymbol{\theta}}_t^*)$ in Lemma \ref{lem:MSE}, GeoDP can therefore more easily make Item B zero than DP, by which our claim is proved.
\end{proof}
\subsubsection{Privacy Comparison}
Now that the superiority of GeoDP on model efficiency is rigorously analyzed, we next prove its alignment with the formal DP definition. The following lemme and theorem analyze the privacy level of perturbed gradient direction and gradient itself of GeoDP, respectively. 
\begin{lemma}
	The perturbed direction from GeoDP $\tilde{\boldsymbol{\theta}}^\star$ under $\beta$ bounding factor satisfies $(\epsilon,\delta+\delta')-DP$, where
	\begin{equation}
		1-\int_{0}^{2\beta \pi}\underbrace{\int_{0}^{\beta \pi}...\int_{0}^{\beta \pi}}_{d-1} \prod_{z=1}^{d}f(\tilde{\boldsymbol{\theta}}_{z})d\tilde{\boldsymbol{\theta}}_{z}\leq\delta'\leq 1-\beta.
	\end{equation}
\end{lemma}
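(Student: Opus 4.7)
The plan is to treat the privacy guarantee as a two-part composition: a clean $(\epsilon,\delta)$-DP guarantee obtained from the Gaussian mechanism inside the bounded angular region of size $\beta\pi$ per non-terminal angle and $2\beta\pi$ for the last angle, plus an additional failure probability $\delta'$ covering the event that the perturbed direction escapes that region. Since the $L_2$-sensitivity of the direction under the bounding is $\Delta\boldsymbol{\theta}=\sqrt{d+2}\beta\pi$, and the noise added in Step~\ref{item:Per} of Algorithm~\ref{algo:GeoDP} is exactly $\frac{\sqrt{d+2}\beta\pi}{B}\boldsymbol{n}_\sigma$ with $\boldsymbol{n}_\sigma$ a zero-mean Gaussian of deviation $\sigma$, the classical Gaussian-mechanism result of Section~\ref{subsec:DP} yields $(\epsilon,\delta)$-DP \emph{conditioned} on the noisy output remaining inside the box. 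I would open the proof by formalizing this conditional reduction.

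Next I would derive the lower bound on $\delta'$. Let $E$ denote the event that the perturbed direction $\tilde{\boldsymbol{\theta}}^\star=\tilde{\boldsymbol{\theta}}+\frac{\sqrt{d+2}\beta\pi}{B}\boldsymbol{n}_\sigma$ lies inside the bounded region $[0,\beta\pi]^{d-2}\times[0,2\beta\pi]$. Because the Gaussian coordinates are independent, $\Pr(E)$ factorizes as the iterated product integral stated in the lemma, namely $\int_0^{2\beta\pi}\int_0^{\beta\pi}\cdots\int_0^{\beta\pi}\prod_z f(\tilde{\boldsymbol{\theta}}_z)\,d\tilde{\boldsymbol{\theta}}_z$. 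Any realization outside $E$ is a privacy-failure event not covered by the in-region Gaussian guarantee, so by the definition of $(\epsilon,\delta+\delta')$-DP we must absorb that mass into $\delta'$; hence $\delta'\ge 1-\Pr(E)$, which is exactly the claimed lower bound. Composing this with the standard Gaussian-mechanism guarantee on $E$ and applying a union bound over $E$ and $E^c$ gives the overall $(\epsilon,\delta+\delta')$-DP conclusion.

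For the upper bound $\delta'\le 1-\beta$, my plan is to relate the escape probability to the \emph{relative measure} of the bounded region inside the full angular space $[0,\pi]^{d-2}\times[-\pi,\pi]$. The bounded box occupies a $\beta$-fraction of each coordinate range by construction, so the complement has measure fraction at most $1-\beta$. Invoking Theorem~\ref{the:ModelingDirection} and the fact that the averaged stochastic direction concentrates sharply about $\mathbb{E}(\tilde{\boldsymbol{\theta}}_{jz})$, the mass of the perturbed direction lying outside the $\beta$-region is dominated by that coordinate-wise complement fraction, giving $\Pr(E^c)\le 1-\beta$ in the worst case over admissible centers. This is the quantity we pay as $\delta'$, completing the sandwich.

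The main obstacle I anticipate is the upper bound, since the lower bound follows from a single Gaussian-integral identity. Translating Theorem~\ref{the:ModelingDirection} into a rigorous bound of the form $\Pr(E^c)\le 1-\beta$ needs care: a raw Gaussian tail bound on $\boldsymbol{n}_\sigma$ would produce an expression in $\sigma$ and $B$ rather than in $\beta$, so the argument has to exploit the concentration of $\tilde{\boldsymbol{\theta}}$ about its mean so that the worst-case displacement of $\tilde{\boldsymbol{\theta}}^\star$ is effectively controlled by the fraction of angular space not covered by the bounded region, not by the Gaussian tail. I expect that making this ``fraction-of-space'' argument rigorous—possibly by a change-of-variables that maps the $\beta$-bounded box onto the full angular space and pushing forward the noise distribution accordingly—will be the most delicate step.
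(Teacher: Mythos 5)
Your proposal follows essentially the same route as the paper's own proof: define $\delta'$ as the probability mass on which the in-region $(\epsilon,\delta)$ guarantee fails, obtain the lower bound as one minus the integral of the direction density over the bounded box, and argue the upper bound $1-\beta$ by comparing the box to the full angular space via the concentration of averaged directions (Theorem~\ref{the:ModelingDirection}). One substantive divergence is worth flagging: you define the failure event as the \emph{noisy output} $\tilde{\boldsymbol{\theta}}^\star$ escaping the box, whereas the paper's $\delta'$ is the probability that the \emph{true averaged direction} falls outside the assumed privacy region --- note that the integrand $f(\tilde{\boldsymbol{\theta}}_z)$ in the lemma is the density from Theorem~\ref{the:ModelingDirection}, not the density of the perturbed output --- so the failure mode being priced is an under-calibrated sensitivity, not a truncated output; your conditional-mechanism reading would require a different (post-processed or rejected) algorithm than the one in Algorithm~\ref{algo:GeoDP}. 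The difficulty you anticipate in the upper bound is genuine and is not resolved in the paper either: the bounded region occupies a $\beta$ fraction of each coordinate range but only a $\beta^{d-1}$ fraction of the product space $\left(0,\pi\right)^{d-2}\times\left(-\pi,\pi\right)$, so even the uniform-distribution benchmark gives uncovered mass $1-\beta^{d-1}$ rather than $1-\beta$; the paper asserts $\delta'\leq 1-\beta$ from concentration without closing this gap, and your sketch inherits the same unproven step.
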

\begin{proof}
	While $\delta$ covers the probability where the strict DP is ineffective~\cite{dwork2006our,dwork2010differential,dwork2014algorithmic}, we use $\delta'$ to denote the probability of space where $(\epsilon,\delta)$-DP is ineffective. Since $\tilde{\boldsymbol{\theta}}^\star$ is generally not the expectation of $\left\{\boldsymbol{\theta}_j\right\}$, we have:
	\begin{equation}
		\delta'\ge 1-\int_{0}^{2\beta \pi}\underbrace{\int_{0}^{\beta \pi}...\int_{0}^{\beta \pi}}_{d-1} \prod_{z=1}^{d}f(\tilde{\boldsymbol{\theta}}_{z})d\tilde{\boldsymbol{\theta}}_{z}.
	\end{equation}
	Meanwhile, the space that $\beta$ cannot cover is $1-\beta$ if the directions are evenly distributed (as discussed before, they are not). As such, $\delta'\leq 1-\beta$, by which our claim is proved.
\end{proof}
\begin{theorem}
	(\textit{Privacy Level of GeoDP}).
	Given $\tilde{\boldsymbol{g}}\leftrightarrow\left(\left\|\tilde{\boldsymbol{g}}\right\|, \tilde{\boldsymbol{\theta}}\right)$, $\tilde{\boldsymbol{g}}^\star$ satisfies $\left(\epsilon,\delta+\delta'\right)$-DP if $\left\|\tilde{\boldsymbol{g}}\right\|^\star$ and $\tilde{\boldsymbol{\theta}}^\star$ follow $(\epsilon,\delta)$-DP and $(\epsilon,\delta+\delta')$-DP, respectively. 
	\label{the:PrivacyCostOfDP}
\end{theorem}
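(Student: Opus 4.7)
My approach would be to reduce the privacy of the released gradient $\tilde{\boldsymbol{g}}^\star$ to the joint privacy of the two perturbed hyper-spherical components $(\|\tilde{\boldsymbol{g}}\|^\star,\tilde{\boldsymbol{\theta}}^\star)$, and then leverage the two individual privacy guarantees already established (the magnitude branch by a direct Gaussian-mechanism argument, and the direction branch by the preceding lemma). The key enabler is that the hyper-spherical conversion $\tilde{\boldsymbol{g}}\leftrightarrow(\|\tilde{\boldsymbol{g}}\|,\tilde{\boldsymbol{\theta}})$ defined by Equations \ref{equ:SphMig}--\ref{equ:Rec} is a deterministic bijection that does not touch the private dataset once the clipped gradient has been computed. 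Hence by the \emph{post-processing} property of DP, whatever privacy level $(\|\tilde{\boldsymbol{g}}\|^\star,\tilde{\boldsymbol{\theta}}^\star)$ enjoys is inherited by $\tilde{\boldsymbol{g}}^\star$ after the inverse conversion in Equation \ref{equ:Rec}.

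\textbf{Ordered steps.} First, I would formally invoke post-processing to transfer the analysis from $\tilde{\boldsymbol{g}}^\star$ to the pair $(\|\tilde{\boldsymbol{g}}\|^\star,\tilde{\boldsymbol{\theta}}^\star)$. Second, I would argue that the magnitude branch, being a scalar with $L_2$-sensitivity $C$ (by the clipping in Equation \ref{equ:clipping}) perturbed by a zero-mean Gaussian of scale $C\sigma/B$, satisfies $(\epsilon,\delta)$-DP via the standard Gaussian-mechanism calibration restated in Section \ref{subsec:DP}. Third, I would cite the immediately preceding lemma to conclude that $\tilde{\boldsymbol{\theta}}^\star$ satisfies $(\epsilon,\delta+\delta')$-DP. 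Fourth, and most delicately, I would combine the two guarantees by viewing the release of $(\|\tilde{\boldsymbol{g}}\|^\star,\tilde{\boldsymbol{\theta}}^\star)$ as a single Gaussian mechanism acting on the orthogonal (magnitude, angles) decomposition of the clipped gradient: since the noise vectors are drawn independently and injected into geometrically orthogonal coordinate axes of the same hyper-spherical representation, the joint release can be analyzed as one Gaussian mechanism on a vector-valued query whose worst-case indistinguishability failure event is the union of the two individual failure events, giving an overall $(\epsilon,\delta+\delta')$-DP guarantee rather than a naively composed bound.

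\textbf{Main obstacle.} The hard part will be the fourth step, namely arguing that the total privacy deterioration is $\delta+\delta'$ in $\delta$ and $\epsilon$ in the multiplicative budget, instead of the $(2\epsilon,2\delta+\delta')$ one would obtain from blindly applying basic sequential composition. The care is in showing that the magnitude noise and the direction noise, although both computed from the same clipped gradient, act on coordinates whose sensitivities are already accounted for separately by the Gaussian calibration on each component, so that the dominant privacy loss is controlled by the component with the larger slack (here the direction branch), while the additional failure mass $\delta'$ of the direction branch is simply added on top of the common $\delta$. I would make this rigorous by writing out the privacy loss random variable of the joint mechanism, splitting the output space into the good region (where both branches behave as $(\epsilon,\cdot)$-indistinguishable) and the bad region (whose probability is at most $\delta+\delta'$ by a union bound), and then concluding the claim directly from the definition of $(\epsilon,\delta+\delta')$-DP.
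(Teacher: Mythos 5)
Your overall plan---pass from $\tilde{\boldsymbol{g}}^\star$ to the pair $\left(\left\|\tilde{\boldsymbol{g}}\right\|^\star,\tilde{\boldsymbol{\theta}}^\star\right)$ by post-processing, take the two component guarantees as given, and then combine them without paying a doubled $\epsilon$---matches the paper's intent, but the route differs and your combination step has a concrete gap. The paper's proof is a short direct manipulation: it bounds $\Pr[(\tilde{\boldsymbol{g}}^\star,\tilde{\boldsymbol{\theta}}^\star)\in S]$ by the \emph{maximum} ($\vee$) of the two bounds obtained by switching one component at a time to its $D'$ counterpart, and identifies that maximum with $e^{\epsilon}\Pr[(\tilde{\boldsymbol{g}}^{\star'},\tilde{\boldsymbol{\theta}}^{\star'})\in S]+\delta+\delta'$. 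It never invokes post-processing, never re-derives the Gaussian calibration of the magnitude branch, and never introduces privacy loss random variables; your steps 2 and 3 re-prove what the theorem already takes as hypotheses, which is harmless but unnecessary.

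The genuine gap is in your fourth step. Writing $L_1$ and $L_2$ for the privacy loss random variables of the magnitude and direction branches, the failure event of the joint mechanism is $\left\{L_1+L_2>\epsilon\right\}$, and this is \emph{not} contained in $\left\{L_1>\epsilon\right\}\cup\left\{L_2>\epsilon\right\}$ (take $L_1=L_2=0.6\epsilon$), so a union bound over the two individual bad regions cannot certify that the joint bad region has mass at most $\delta+\delta'$. Nor does the ``single Gaussian mechanism on orthogonal coordinates'' view rescue this: after normalizing each block by its own sensitivity, the concatenated query has $L_2$-sensitivity $\sqrt{2}$ on neighboring datasets (both blocks can change by their full individual sensitivities simultaneously), so that view yields roughly $(\sqrt{2}\epsilon,\delta+\delta')$-DP, while basic composition yields $(2\epsilon,2\delta+\delta')$-DP; neither is the claimed $(\epsilon,\delta+\delta')$. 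You correctly flag this as the delicate point, but the resolution you propose does not close it. Any complete argument must justify why the two single-component substitutions need not be chained sequentially---this is exactly what the paper's unexplained $\vee$ step asserts---or else accept a split of the budget $\epsilon$ across the two branches, and your write-up currently supplies neither.
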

\begin{proof}
	Given two neighboring dataset $D$, $D'$ and their output sets $(\tilde{\boldsymbol{g}}^{\star},\tilde{\boldsymbol{\theta}}^{\star})=\left\{(\tilde{\boldsymbol{g}}_1^{\star},\tilde{\boldsymbol{\theta}}_1^{\star}), ...\right\}$ of $\mathcal{M}(D)$,  $(\tilde{\boldsymbol{g}}^{\star'},\tilde{\boldsymbol{\theta}}^{\star'})=\left\{(\tilde{\boldsymbol{g}}_1^{\star'},\tilde{\boldsymbol{\theta}}_1^{\star'}), ...\right\}$ of $\mathcal{M}(D')$, respectively, we have:
	\begin{equation}
		\begin{aligned}
			&\Pr[\mathcal{M}(D) \in S]=\Pr[(\tilde{\boldsymbol{g}}^\star,\tilde{\boldsymbol{\theta}}^\star)\in S] \\\leq&  \left(e^{\epsilon}\Pr[(\tilde{\boldsymbol{g}}^{\star'},\tilde{\boldsymbol{\theta}}^{\star})\in S]+\delta\right) \vee \left(e^{\epsilon}\Pr[(\tilde{\boldsymbol{g}}^{\star},\tilde{\boldsymbol{\theta}}^{\star'})\in S]+\delta+\delta'\right)\\=&  \left(e^{\epsilon}\Pr[(\tilde{\boldsymbol{g}}^{\star'},\tilde{\boldsymbol{\theta}}^{\star'})\in S]+\delta+\delta'\right)\\=&e^{\epsilon }\Pr[\mathcal{M}(D') \in S]+\delta+\delta'.
		\end{aligned}
	\end{equation}
	by which this theorem is proven.
\end{proof}
Compared with traditional DP which imposes $(\epsilon,\delta)$-DP on the whole gradient, GeoDP relieves the privacy level of gradient direction (i.e., $\tilde{\boldsymbol{\theta}}^\star$ satisfies $(\epsilon,\delta+\delta')$-DP) while maintaining the same privacy preservation on gradient magnitude (i.e., $\tilde{\boldsymbol{g}}^\star$ satisfies $(\epsilon,\delta)$-DP). In return, the model efficiency of SGD is much improved under the same noise scale. While the privacy preservation is weaker, GeoDP imposes more perturbation on gradient magnitude, making it even harder for various attacks to succeed. 
	\section{Experimental results}
\label{sec:experiment}

\subsection{Experimental Setup}
\label{subsec:setup}
We conduct our experiments on a server with Intel Xeon Silver 4210R CPU, 128G RAM, and Nvidia GeForce RTX 3090 GPU on Ubuntu 20.04 LTS system. All results are repeated $100$ times to obtain the average. Unless otherwise specified, we fix $C=0.1$. 
\subsubsection{Datasets and Models}
For model efficiency, we use two prevalent benchmark datasets, MNIST~\cite{lecun1998gradient} and CIFAR-10~\cite{krizhevsky2009learning}.
Besides, we also conduct a standalone experiment to verify that GeoDP  preserves directional information better than DP (Lemma \ref{lem:MSE}). Due to the lack of public gradient datasets, we form a synthetic one for this experiment. The details of these datasets are as below. 

\noindent
\textbf{MNIST.} This is a dataset of 70,000 gray-scale images (28x28 pixels) of handwritten digits from 0 to 9, commonly used for training and testing machine learning algorithms in image recognition tasks. It consists of 60,000 training images and 10,000 testing images, with an even distribution across the 10 digit classes.

\noindent
\textbf{CIFAR-10.} It is a dataset of 60,000 small (32x32 pixels) color images, divided into 10 distinct classes such as animals and vehicles, used for machine learning and computer vision tasks. It contains 50,000 training images and 10,000 testing images, with each class having an equal number of images.

\noindent
\textbf{Synthetic Gradient Dataset.} To synthesize a dataset of gradients, we randomly collect $450,000$ gradients (of $20,000$ dimensions) from $9$ epochs of training a non-DP CNN ($B=1$) on CIFAR-10 (i.e., $50,000$ training images). Dimensions are randomly chosen in various experiments.

As for models, recall that our experiments aim to confirm the superiority of GeoDP to DP on SGD, instead of yearning the best empirical accuracy over all existing ML models. As such, we believe prevalent models such as LR, 2-layer CNN with Softmax activation and ResNet with 3 residual block (each one containing 2 convolutional layers and 1 rectified linear unit (ReLU)) are quite adequate to confirm the effectiveness of our strategy. 

\subsubsection{Competitive Methods}
As GeoDP is orthogonal to existing optimization techniques as interpreted in Section \ref{subsec:DPSGD}, we do not directly compare them. Instead, we compare GeoDP with DP on regular SGD from various perspectives, i.e., model efficiency, compatibility with existing optimization techniques. To demonstrate the generality of GeoDP, we also incorporate two state-of-the-art iterative optimization techniques, IS~\cite{wei2022dpis} and SUR~\cite{fu2024dpsur}, as well as two advanced clipping optimization techniques, AUTO-S~\cite{bu2024automatic} and PSAC~\cite{xia2023differentially}, to observe their improvements on GeoDP.

\subsection{GeoDP vs. DP: Accuracy of Descent Trend}
\label{subsec:perang}
On the synthetic dataset, we perturb gradients by GeoDP and DP, respectively, and compare their MSEs under various parameters. As illustrated in Figure \ref{fig:AngleCompare}, labels $\theta$ and $g$ represent MSEs of perturbed directions and gradients, respectively. In Figure \ref{subfig:Sigma}-\ref{subfig:Sigma2}, we fix dimension $d=5,000$ and batch size $B=2,048$, while varying noise multiplier $\sigma$ in $\left\{10^{-4},10^{-3},10^{-2},10^{-1},1,10\right\}$ (i.e., varying privacy budget $\epsilon$ in $\left\{484.5,153.2,48.5,15.3,4.9,1.5\right\}$
if $\delta=10^{-5}$) under three bounding factors $\beta=\left\{0.01,0.1,1\right\}$, respectively. We have two major observations. First, GeoDP better preserves directions (the red line is below the black line) while DP better preserves gradients (the blue line is  below the green line) in most scenarios. Second, GeoDP is sometimes not robust to large noise multiplier and high dimensionality. When $\sigma>1$ in Figure \ref{subfig:Sigma}, GeoDP is instead outperformed by DP in preserving directions. Similar results can be also observed in Figure \ref{subfig:Dimension}-\ref{subfig:Dimension2} (fixing $\sigma=8,B=4096$ while varying dimensionality in $\left\{500, 1000, 2000, 5000, 10000, 20000\right\}$) and Figure \ref{subfig:Batch}-\ref{subfig:Batch2} (fixing $d=10000,\sigma=8$ while varying batch size in $\left\{512,1024,2048,4096,8192,163984\right\}$), respectively. For example, Figure \ref{subfig:Dimension} and Figure \ref{subfig:Batch}, which all fix $\beta=1$, show that GeoDP is outperformed by DP on preserving directions when $d>2000$ and $B<8192$, respectively.

Before addressing this problem, we discuss reasons behind the ineffectiveness of GeoDP. Recall from Section \ref{subsec:GeoDP} that the perturbation of GeoDP on directions is $\frac{\sqrt{d+2}\beta\pi}{B}\boldsymbol{n}_\sigma$. Obviously, both large noise multiplier ($\boldsymbol{n}_\sigma$) and high dimensionality ($\sqrt{d+2}$) increase the perturbation on directions.

Nevertheless, GeoDP can overcome this shortcoming by tuning $\beta$, which controls the sensitivity of direction. In both Figures \ref{subfig:Sigma1} ($\beta=0.1$) and \ref{subfig:Sigma2} ($\beta=0.01$), we reduce the noise on the direction by reducing the bounding factor, and the pay-off is very significant. Results show that GeoDP simultaneously outperforms DP in both direction and gradient. Tuning $\beta$ is also effective in Figure \ref{subfig:Dimension1}, \ref{subfig:Dimension2} and Figure \ref{subfig:Batch1}, \ref{subfig:Batch2}, respectively. Most likely, smaller bounding factor reduces noise added to the direction while does not affect the noisy magnitude. Accordingly, GeoDP reduces both MSEs of direction and gradient, and thus perfectly outperforms DP in preserving directional information. 

To further confirm this conjecture, extensive experiments, by varying the bounding factor in $\left\{0.1,0.2,0.4,0.6,0.8,1.0\right\}$ under different scenarios, are conducted in Figure \ref{fig:bound}. All experimental results show that there always exists a bounding factor ($\beta=0.2$ in Figure \ref{subfig:Beta} and $\beta=0.4$ in Figure \ref{subfig:Beta1}
for GeoDP to outperform DP in preserving both direction and gradient. \textbf{These results also perfectly align with our theoretical analysis in Lemma \ref{lem:MSE} and Theorem \ref{the:optimality}, respectively.}

Also, GeoDP can improve accuracy by tuning batch size. As illustrated in Figure \ref{subfig:Batch} ($d=10000,\sigma=8,\beta=1$), we demonstrate how the performance of GeoDP is impacted by batch size. Obviously, a large batch size can boost GeoDP to provide optimal accuracy on directions. In contrast, the accuracy of DP on directions hardly changes with batch size (see the black line in \ref{subfig:Batch}), although the noise scale on gradients is reduced by larger batch size (see the blue line in \ref{subfig:Batch}). These results validate that \textbf{optimization techniques of DP-SGD}, such as fine-tuning learning rate, clipping threshold and batch size, \textbf{cannot reduce the noise on the direction, as confirmed by Corollary \ref{coro:bound}.} 
\begin{figure*}
	\begin{center}
		\subfigure[$d=5000, B=2048, \beta=1$]{
			\includegraphics[width=0.28\linewidth]{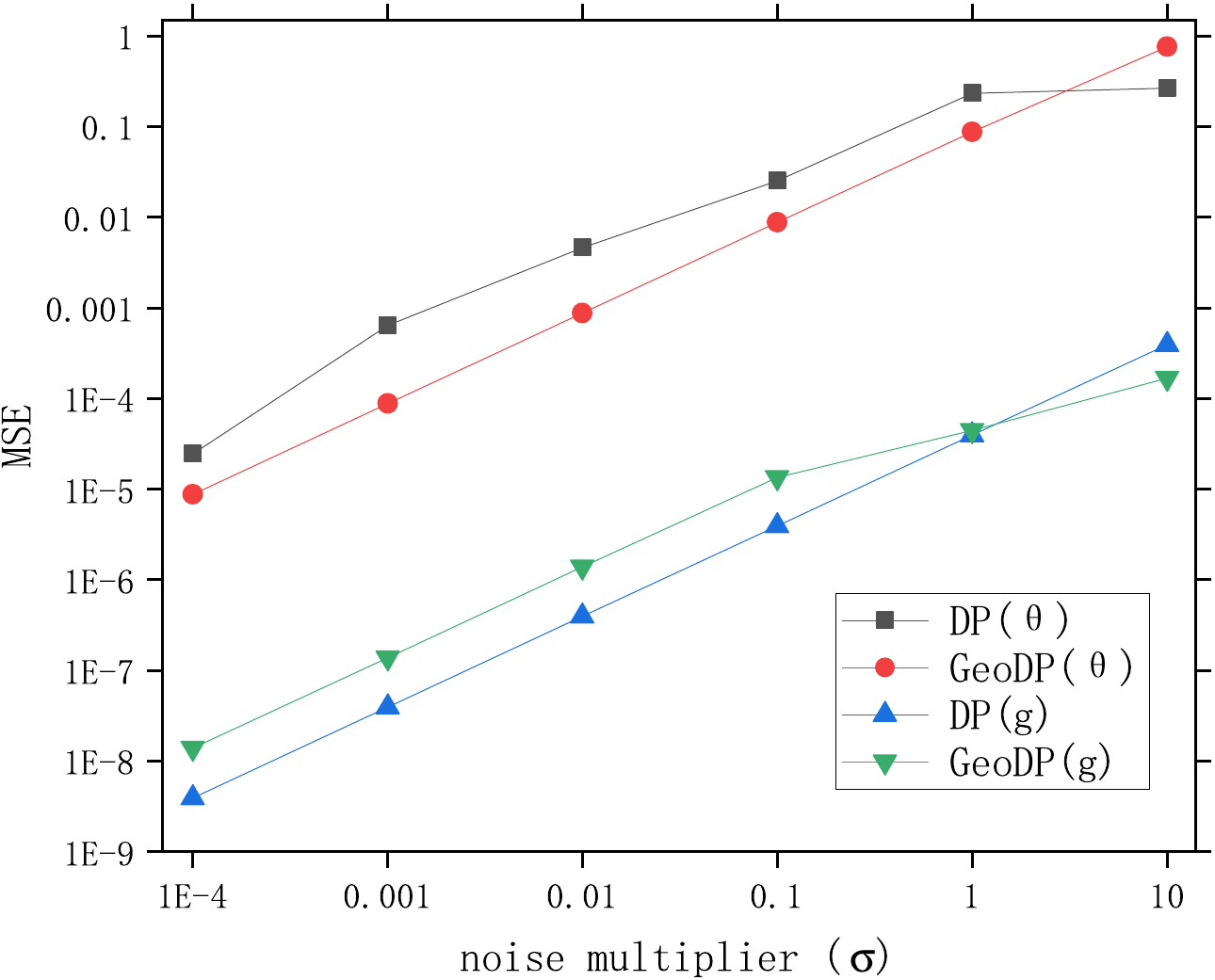}
			\label{subfig:Sigma}
		}
		\subfigure[$d=5000, B=2048, \beta=0.1$]{
			\includegraphics[width=0.28\linewidth]{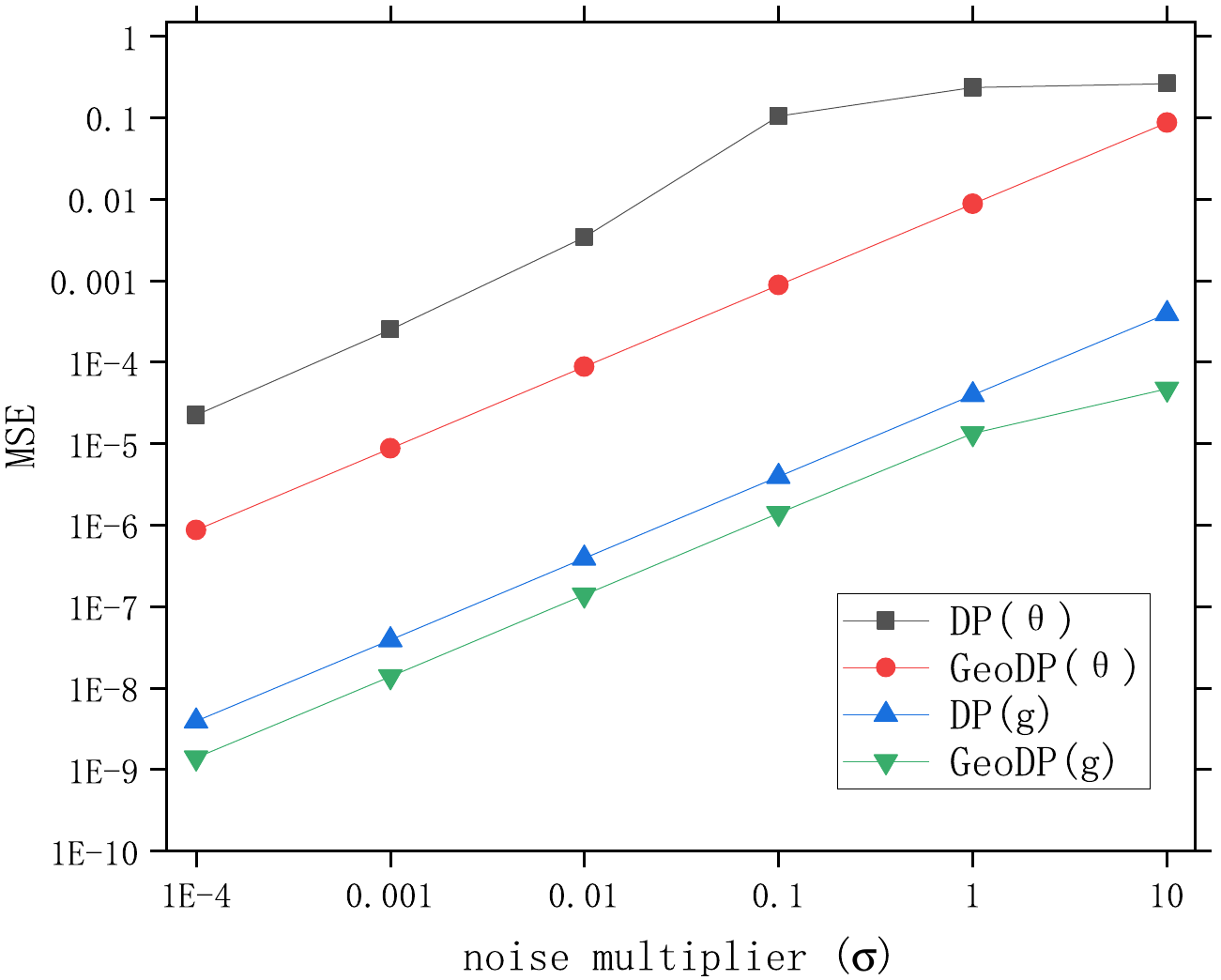}
			\label{subfig:Sigma1}
		}
		\subfigure[$d=5000, B=2048, \beta=0.01$]{
			\includegraphics[width=0.28\linewidth]{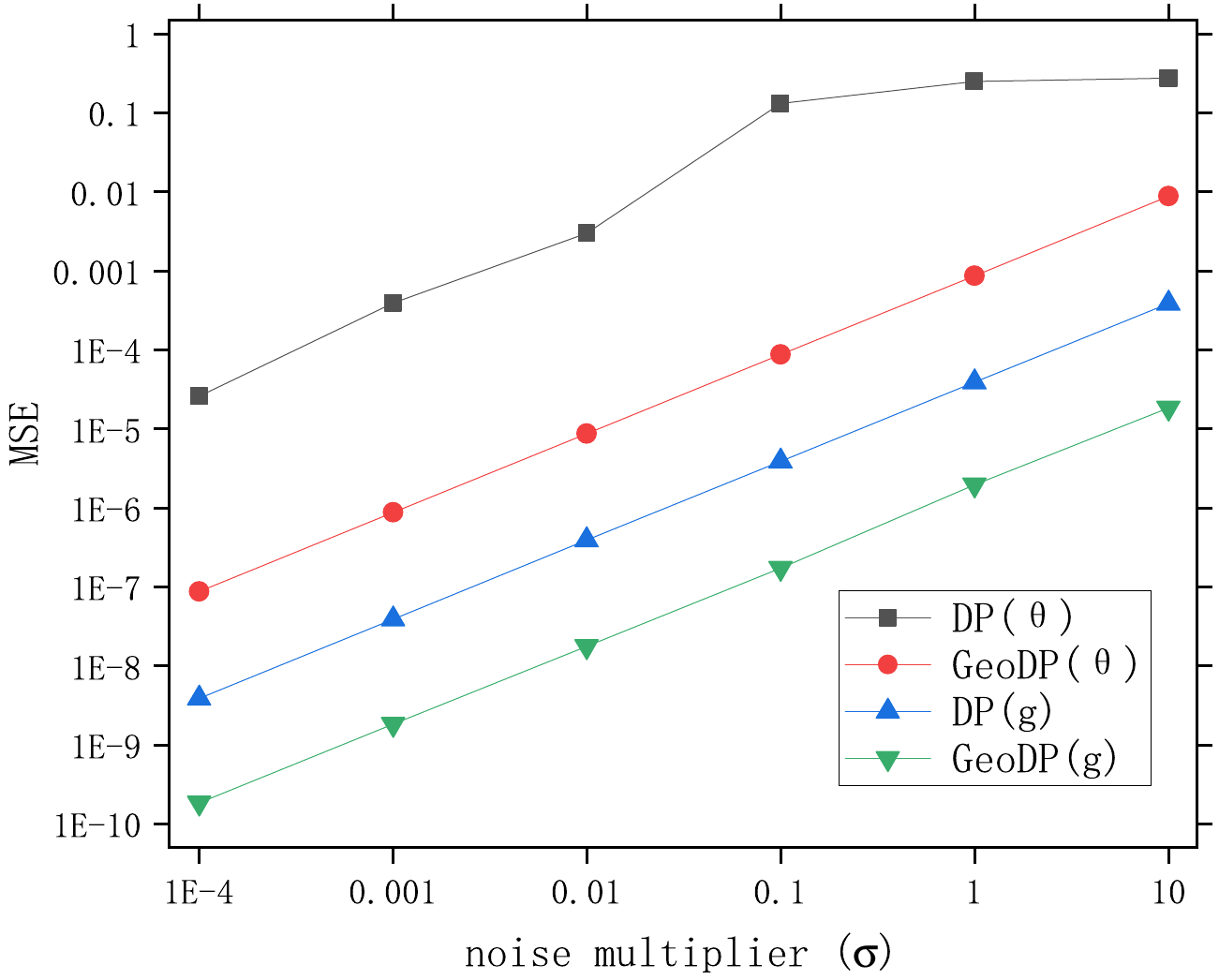}
			\label{subfig:Sigma2}
		}
		\subfigure[$\sigma=8, B=4096, \beta=1$]{
			\includegraphics[width=0.28\linewidth]{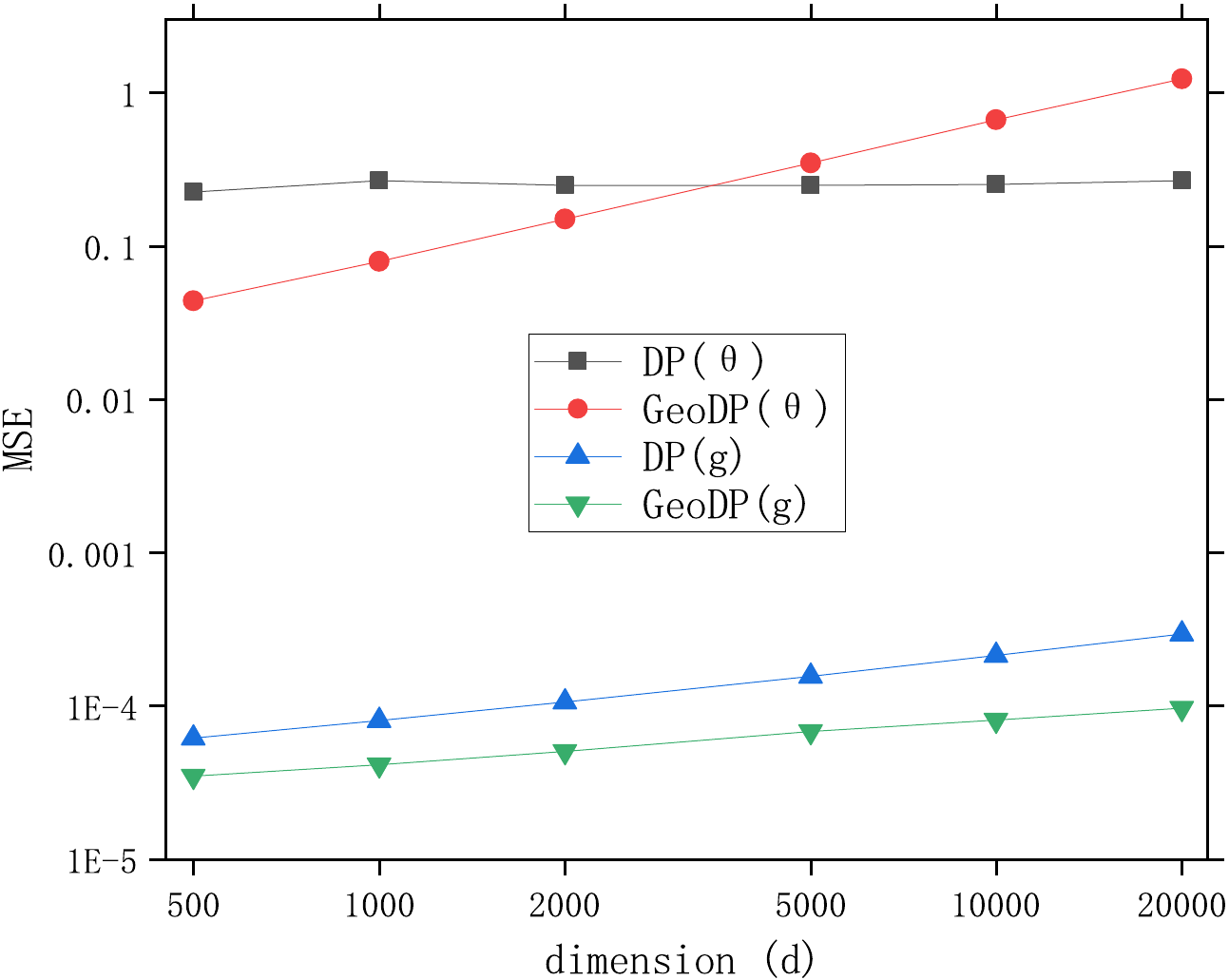}
			\label{subfig:Dimension}
		}
		\subfigure[$\sigma=8, B=4096, \beta=0.1$]{
			\includegraphics[width=0.28\linewidth]{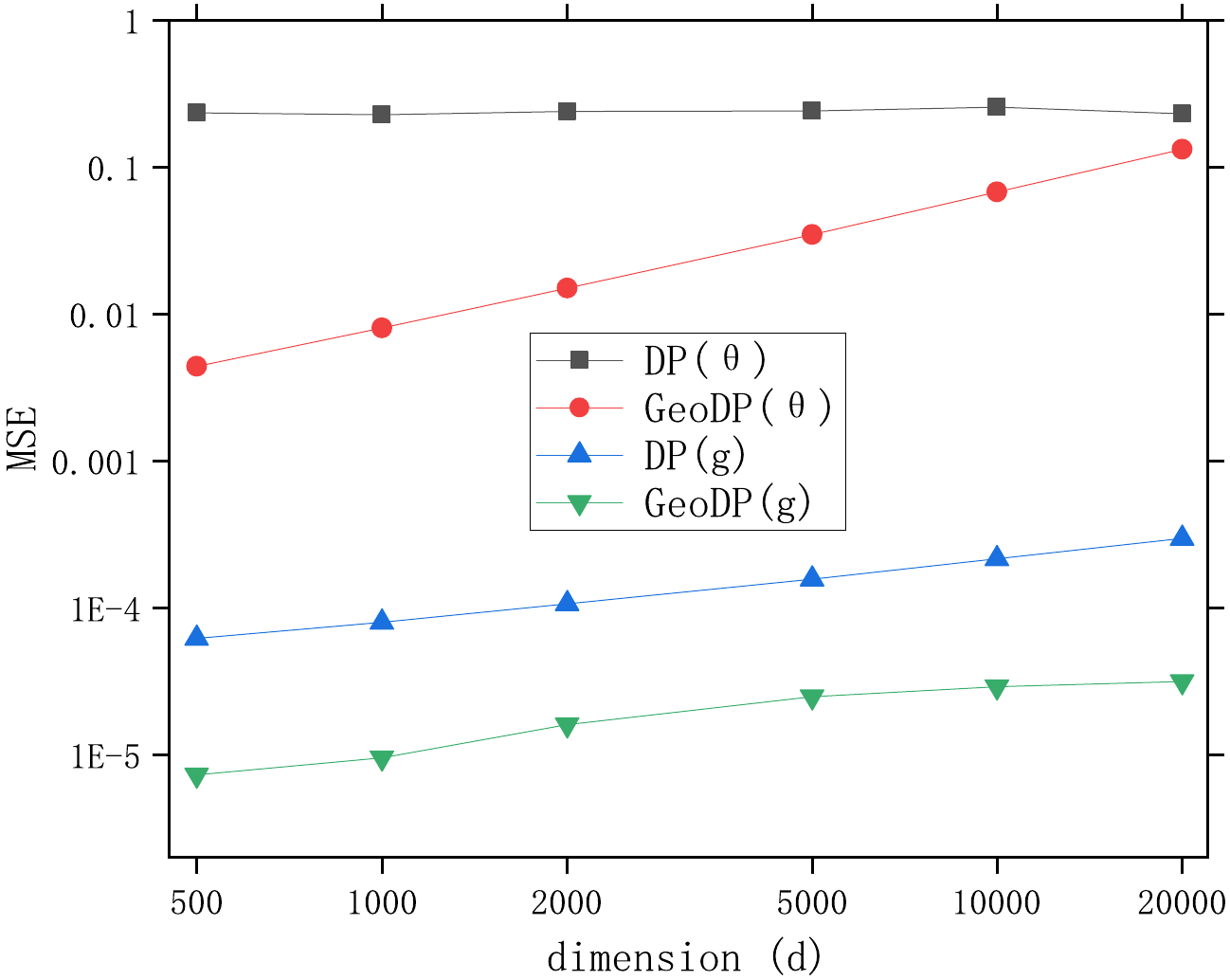}
			\label{subfig:Dimension1}
		}
		\subfigure[$\sigma=8, B=4096, \beta=0.01$]{
			\includegraphics[width=0.28\linewidth]{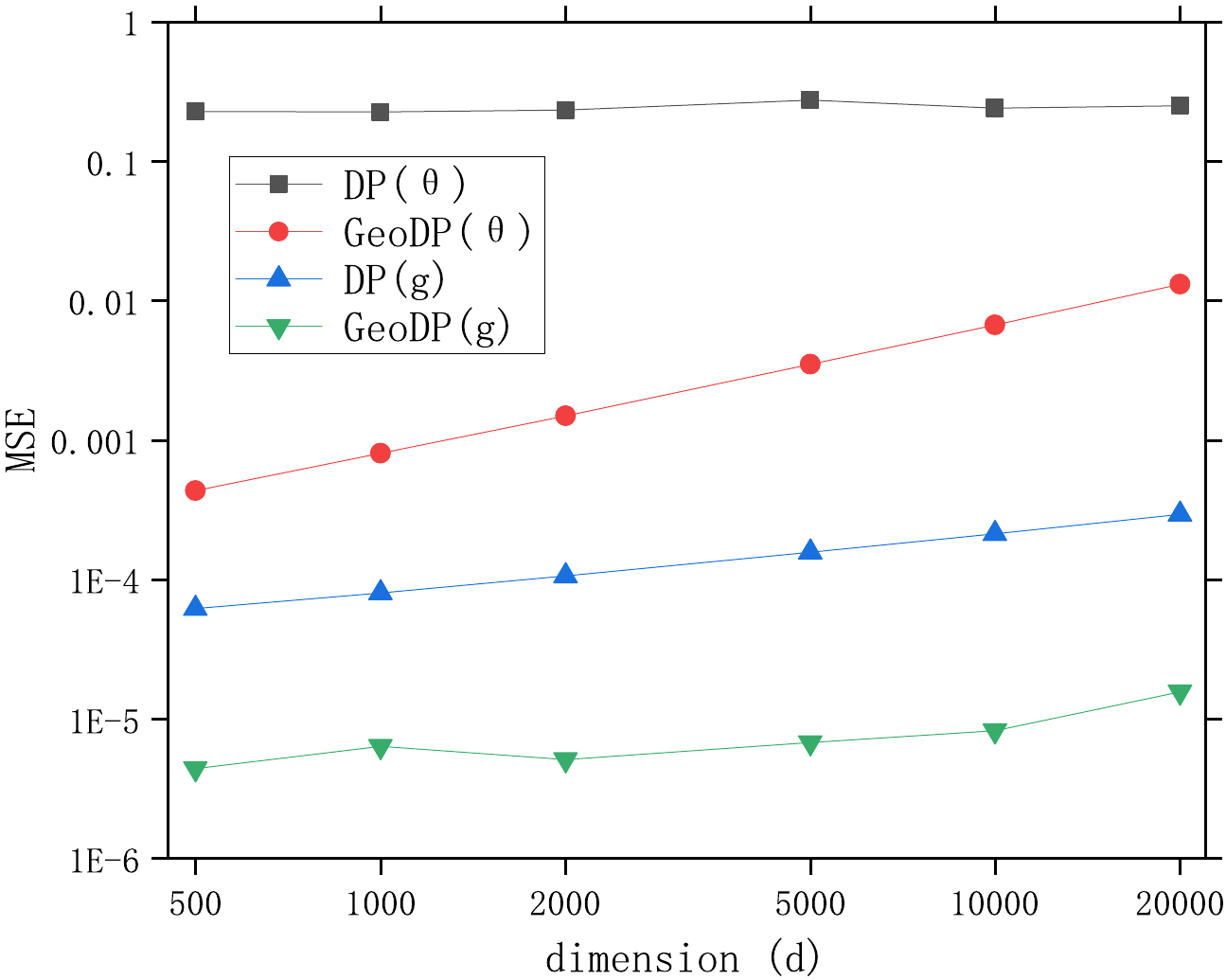}
			\label{subfig:Dimension2}
		}		
		\subfigure[$d=10000,\sigma=8,\beta=1$]{
			\includegraphics[width=0.28\linewidth]{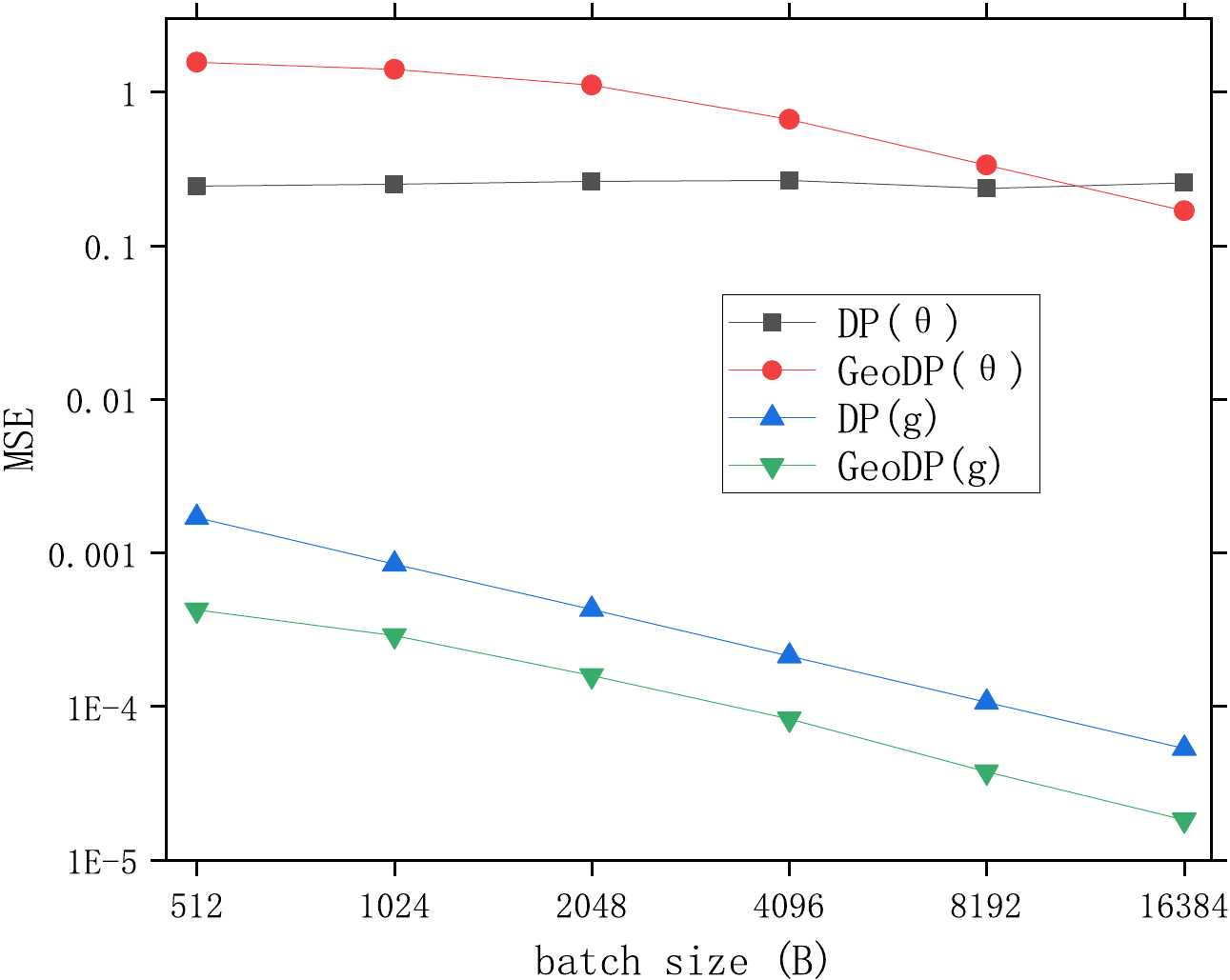}
			\label{subfig:Batch}
		}	
		\subfigure[$d=10000,\sigma=8,\beta=0.1$]{
			\includegraphics[width=0.28\linewidth]{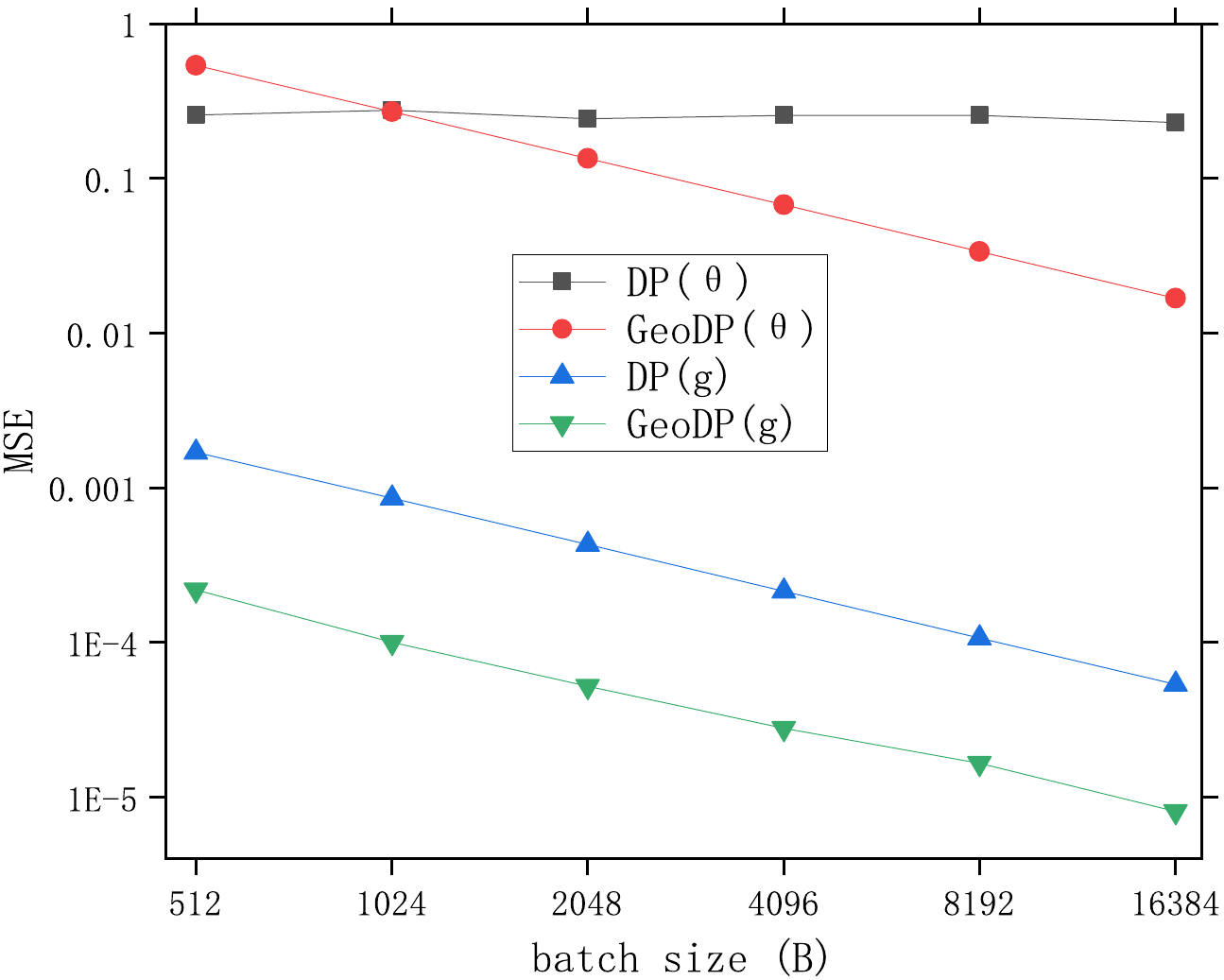}
			\label{subfig:Batch1}
		}	
		\subfigure[$d=10000,\sigma=8,\beta=0.01$]{
			\includegraphics[width=0.28\linewidth]{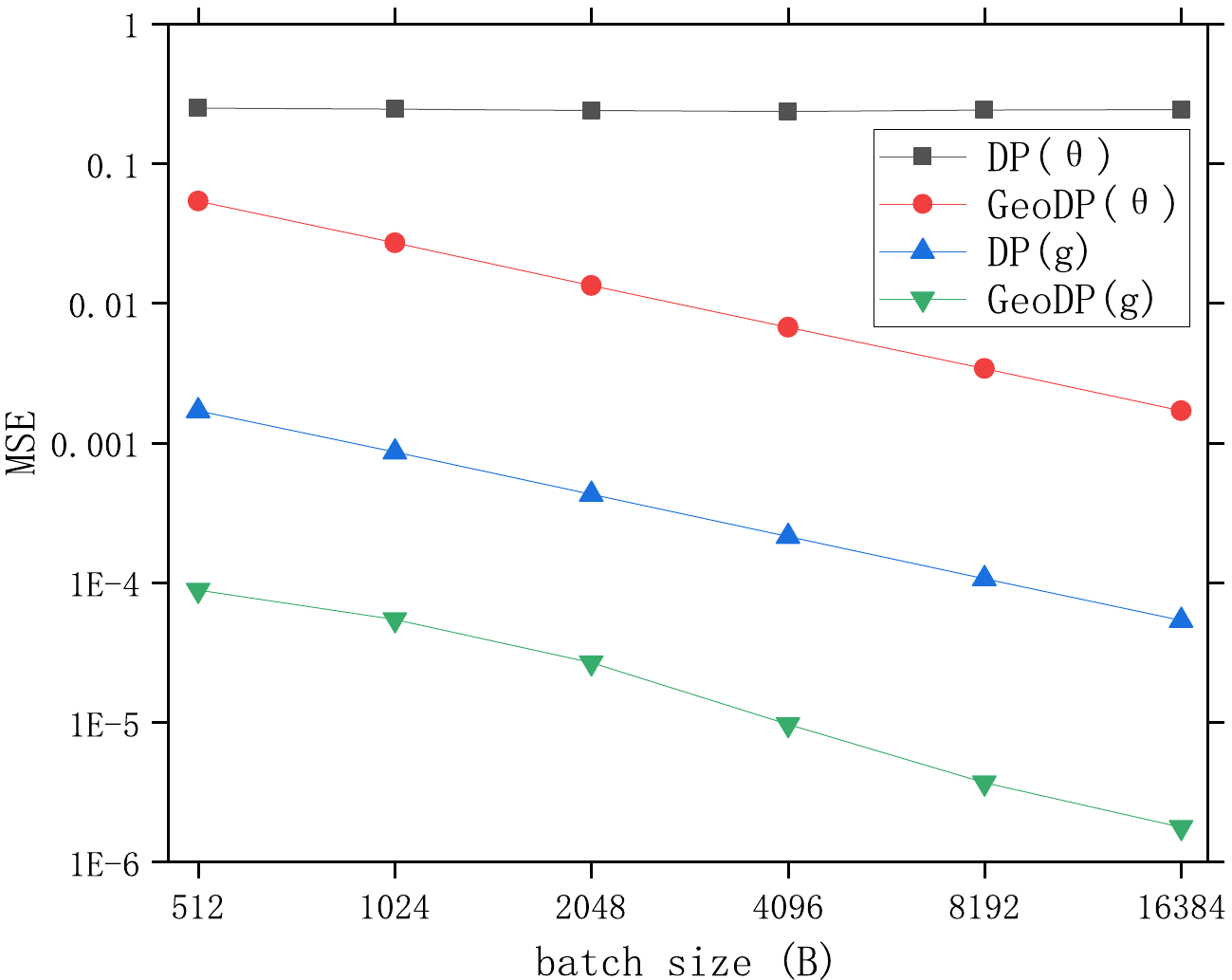}
			\label{subfig:Batch2}
		}			
	\end{center}
	\caption{GeoDP vs. DP on Preserving Gradients under Various Parameters on Synthetic Dataset}
	\label{fig:AngleCompare}
\end{figure*}
\begin{figure*}
	\begin{center}			
		\subfigure[$d=20000,\sigma=8,B=4096$]{
			\includegraphics[width=0.28\linewidth]{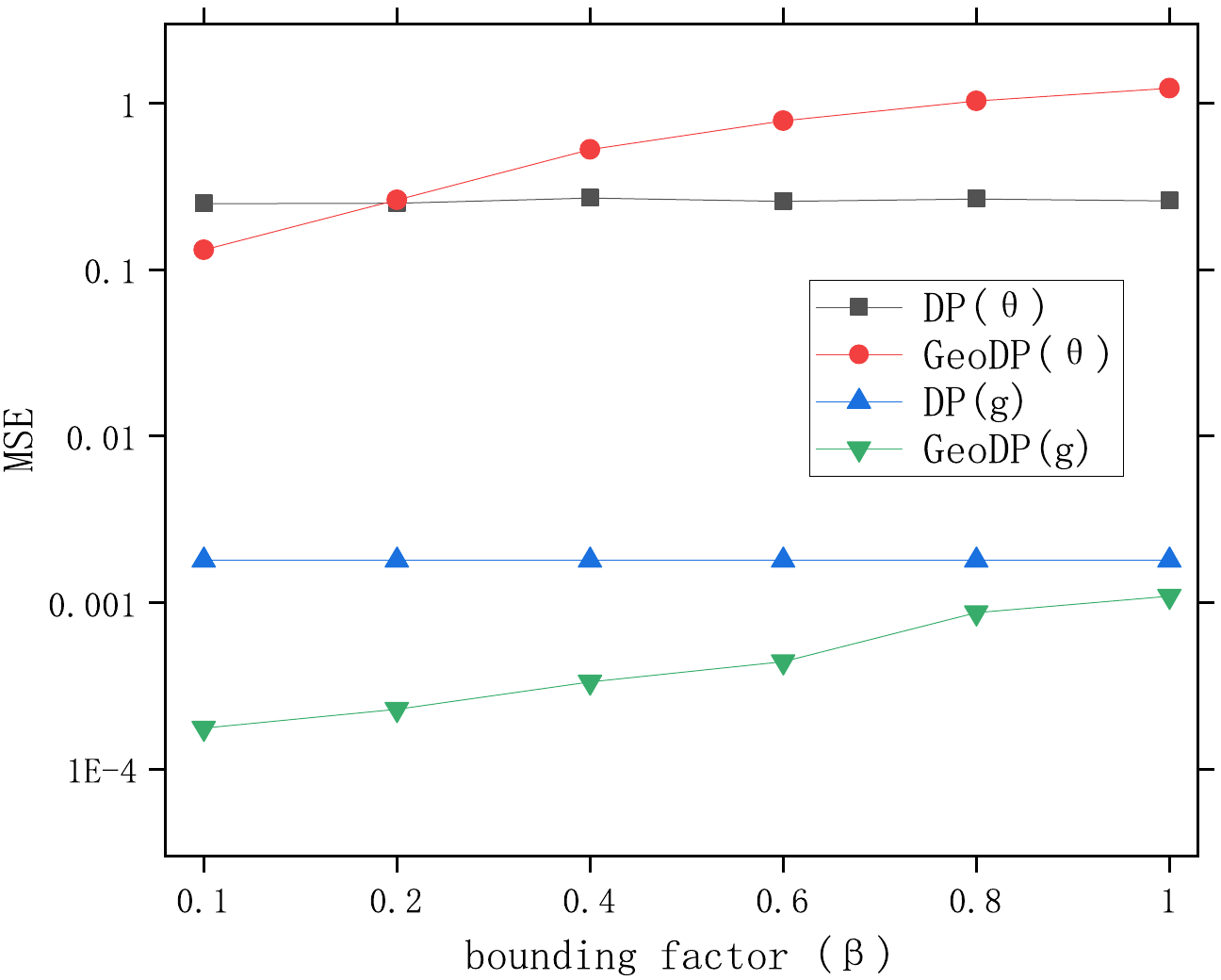}
			\label{subfig:Beta}
		}
		\subfigure[$d=10000,\sigma=8,B=4096$]{
			\includegraphics[width=0.28\linewidth]{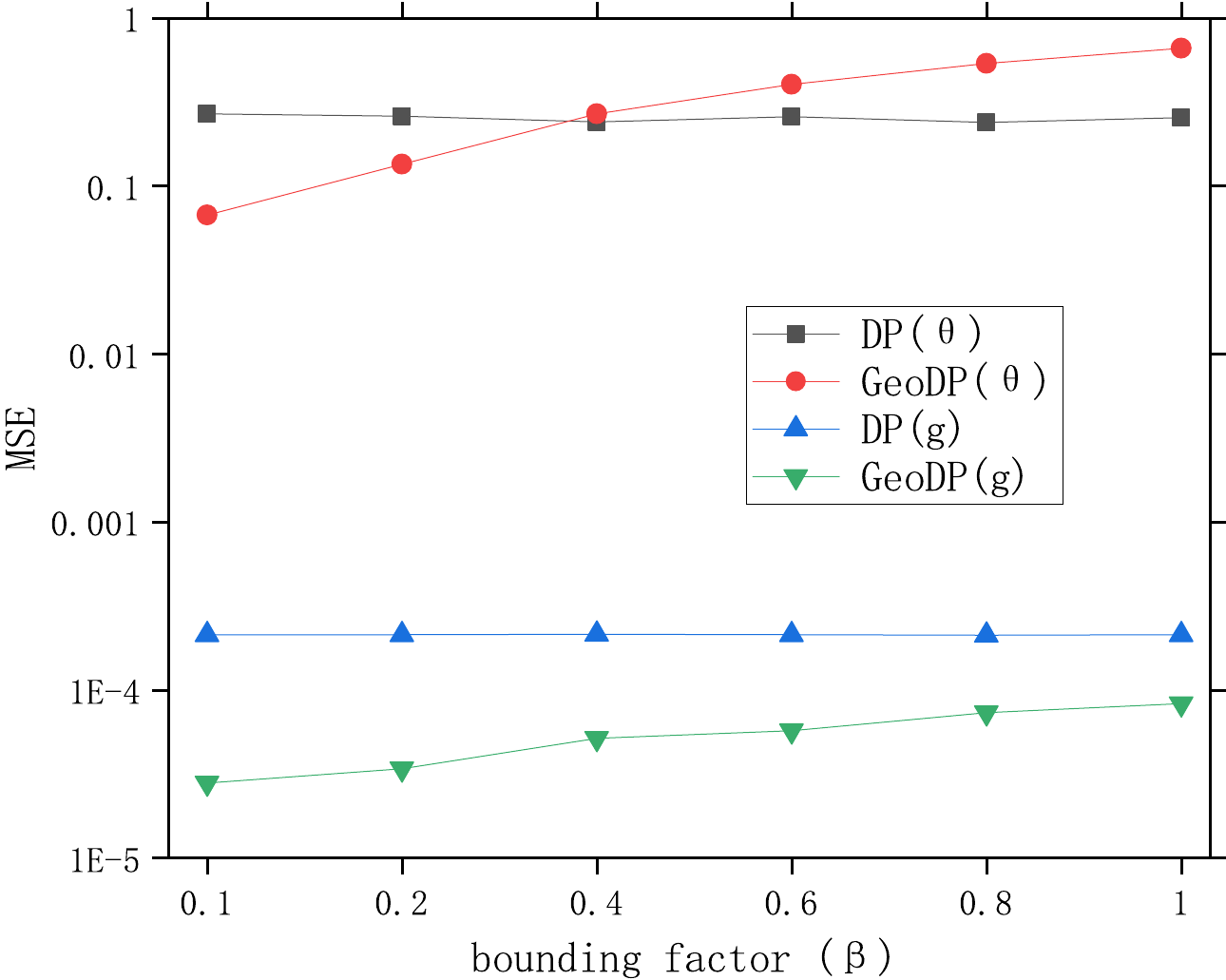}
			\label{subfig:Beta1}		
		}
		\subfigure[$d=5000,\sigma=8,B=4096$]{
			\includegraphics[width=0.28\linewidth]{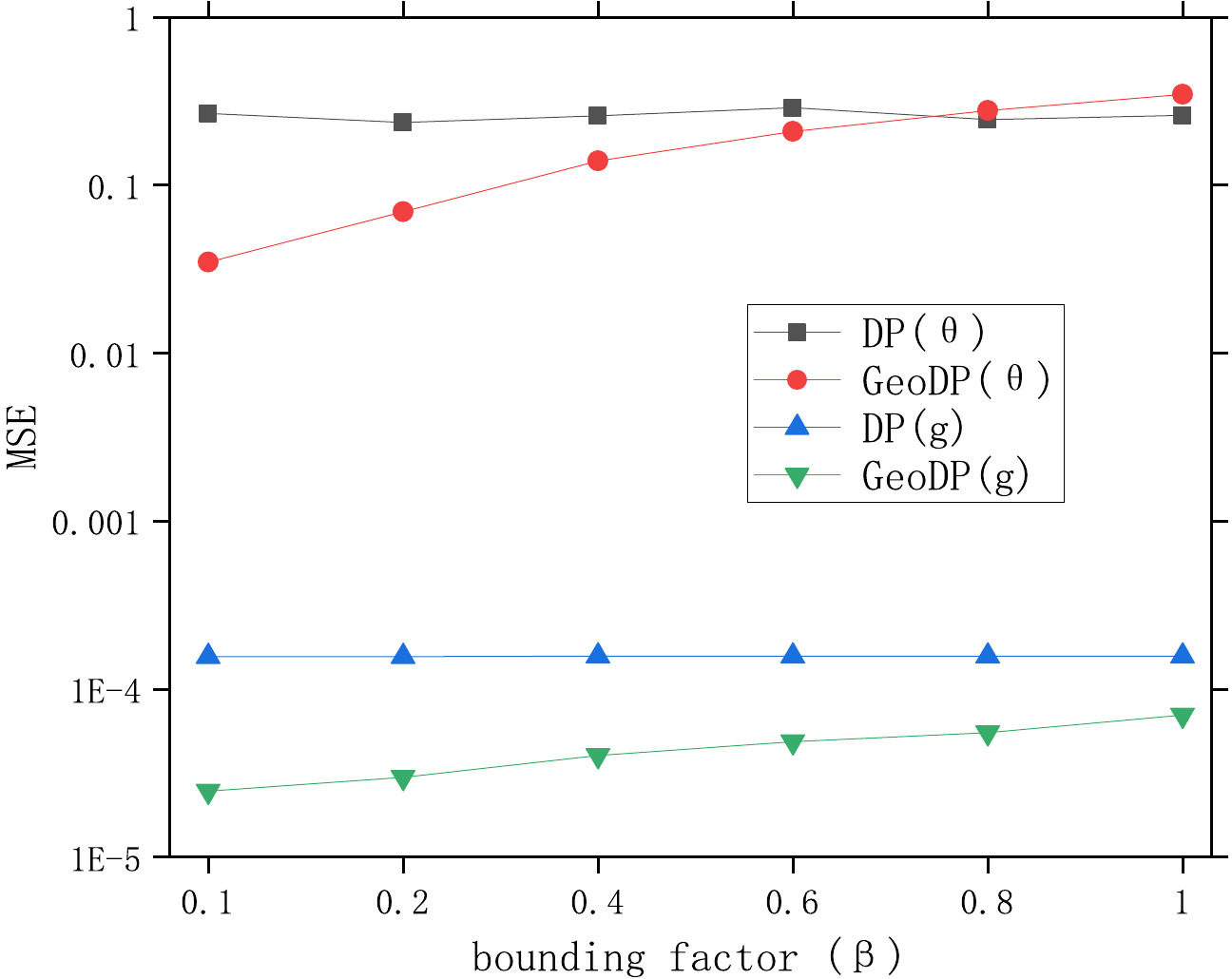}
			\label{subfig:Beta2}
		}
	\end{center}
	\caption{The Effectiveness of Bounding Factor}
	\label{fig:bound}
\end{figure*}
\begin{figure*}  
	\begin{center}
		\subfigure[$d=785, \sigma=1, \beta=1$]{
			\includegraphics[width=0.28\linewidth]{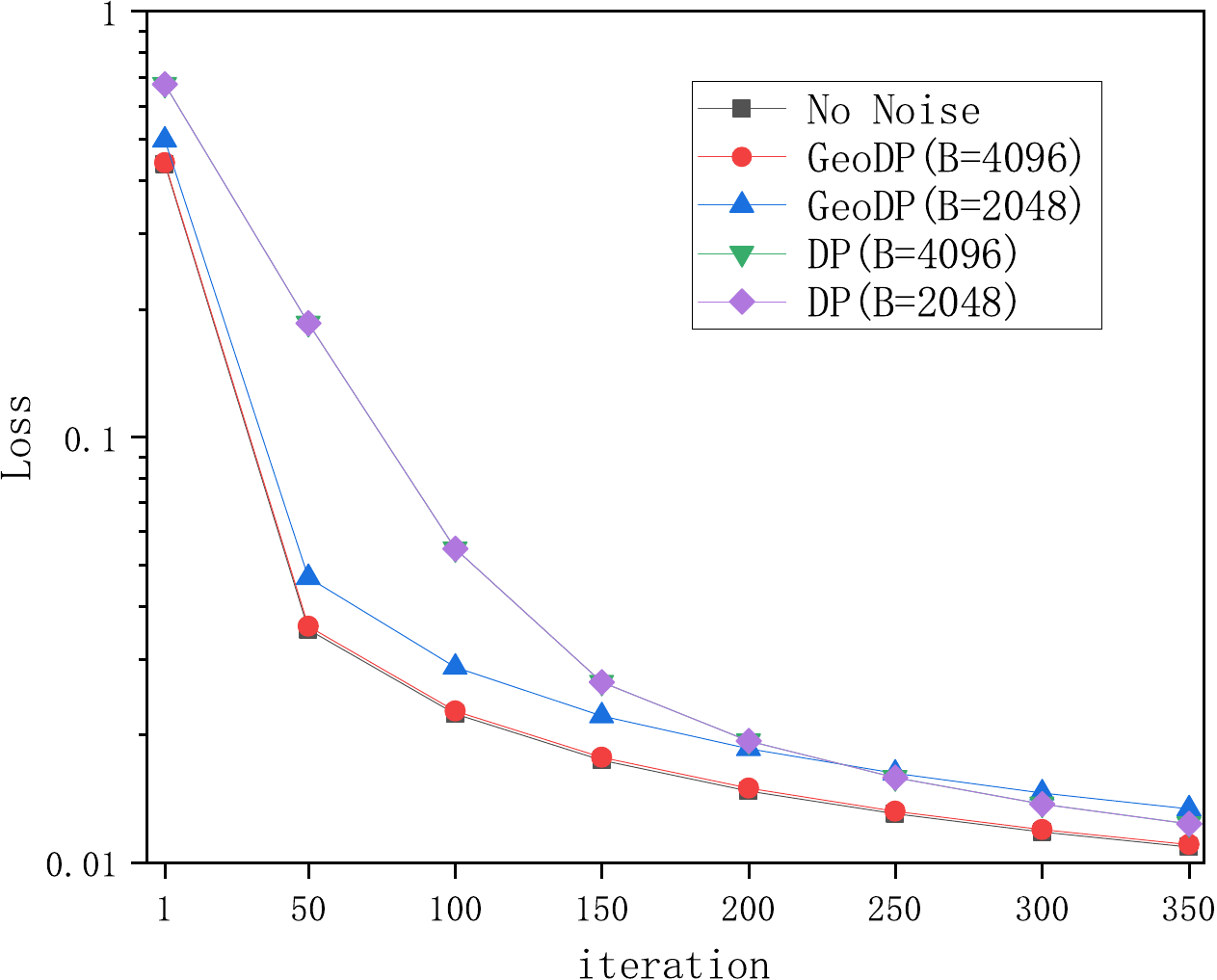}
			\label{subfig:LRSigma1}
		}
		\subfigure[$d=785, \sigma=10, B=2,048$]{
			\includegraphics[width=0.28\linewidth]{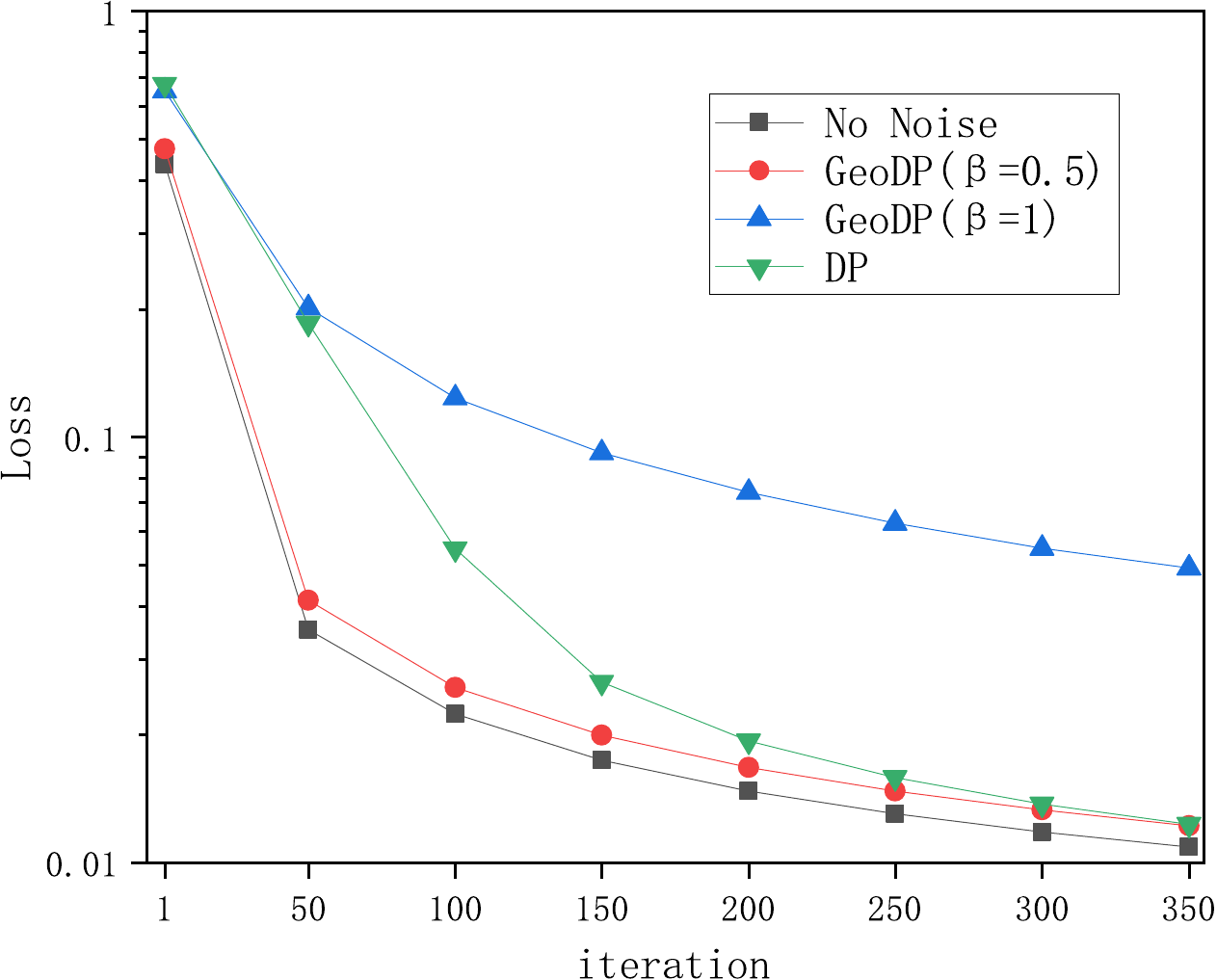}
			\label{subfig:LRSigma10}
		}
		\subfigure[$d=785, \beta=1, B=256$]{
			\includegraphics[width=0.28\linewidth]{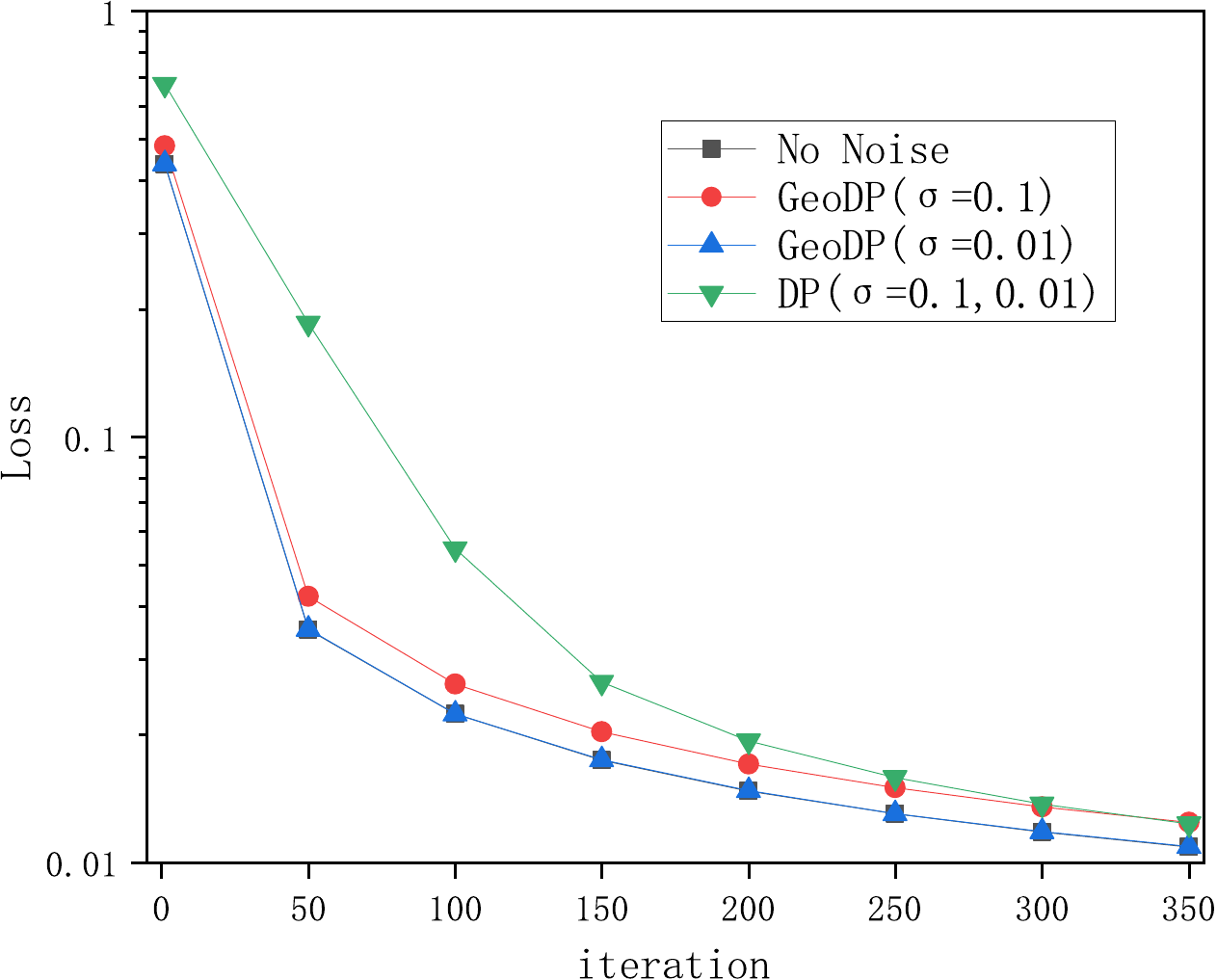}
			\label{subfig:LRB256}
		}
	\end{center}
	\caption{GeoDP versus DP on Logistic Regression under MNIST dataset}
	\label{fig:LR}
\end{figure*}
\subsection{GeoDP vs. DP: Logistic Regression}
\label{subsec:logi}
In the second set of experiments, we verify the effectiveness of GeoDP on Logistic Regression (LR) under MNIST dataset. Figure \ref{fig:LR} plots training losses of $350$ iterations, under \textit{No noise}, \textit{GeoDP} and \textit{DP}. In Figure \ref{subfig:LRSigma1}, with $B=4,096$, GeoDP (the red line) significantly outperforms DP (the green line) and almost has the same performance as noise-free training (black line). The green line overlaps with the purple line because losses of DP-SGD with $B=2,048$ and $B=4,096$ are almost the same. This observation coincides with that from Figure \ref{subfig:Batch}, i.e., the batch size of DP-SGD hardly impacts the noise on the descent trend and thus the model efficiency. In contrast, batch size can successfully reduce the noise of GeoDP (see the gap between the red and blue lines).

In Figure \ref{subfig:LRSigma10}, we test the performance of GeoDP under large noise scale. Initially, GeoDP (blue line) performs worse than DP (green line) with $\beta=1$. When reducing $\beta$ to $0.5$ as suggested in Section \ref{subsec:perang}, the performance of GeoDP surges and leaves DP behind. This observation confirms the superiority of GeoDP over DP even under extreme cases.

In Figure \ref{subfig:LRB256}, we fix the $\beta=1$ and $B=256$ while varying the noise multiplier in $\sigma=\left\{0.01,0.1\right\}$. As we can see, reducing $\sigma$ cannot help DP to perform better (see the green line). This is because DP introduces biased noise to the direction, as confirmed by Lemma \ref{lem:MSE}. Simply reducing the variance of noise cannot counteract this bias. As such, \textbf{DP is sub-optimal even under very small multiplier.} By contrast, GeoDP can achieve significant efficiency improvement with multiplier reduction. When $\sigma=0.01$ (see the blue line), GeoDP almost achieves noise-free model efficiency (the blue line is only slightly above the black line).

\begin{table}[h]
	\centering	
	\scalebox{0.91}{
		\begin{tabular}{cccc}
			\Xhline{1pt}
			Dataset & Method & $\sigma=10$ & $\sigma=1$  \\
			\Xhline{1pt}
			\multirow{15}*{\makecell{MNIST \\  (noise-free\\ 99.11\%)} }
			& DP ($B=8192$) &  87.93\%& 94.25\% \\
			& DP ($B=16384$)&  88.12\%& 95.52\%  \\
			&DP+IS ($B=16384$)& 88.43\%& 95.63\%\\
			&DP+SUR ($B=16384$)& 88.47\%& 95.75\%\\
			&DP+AUTO-S ($B=16384$)& 88.40\%& 95.71\%\\
			&DP+PSAC ($B=16384$)& 88.48\%& 95.83\%\\
			&DP+SUR+PSAC ($B=16384$)& 89.83\%& 96.91\%\\
			& GeoDP ($B=8192, \beta=0.1$) &  90.31\%& 96.47\% \\	
			& GeoDP ($B=16384, \beta=0.1$)&  93.58\%& 98.04\%  \\
			& GeoDP ($B=8192, \beta=0.5$)& 53.80\%& 60.31\%  \\
			& GeoDP+IS ($B=16384, \beta=0.1$)&  93.60\%& 98.13\%  \\
			& GeoDP+SUR ($B=16384, \beta=0.1$)&  93.68\%& 98.22\%  \\
			& GeoDP+AUTO-S ($B=16384, \beta=0.1$)&  93.64\%& 98.17\%  \\
			& GeoDP+PSAC ($B=16384, \beta=0.1$)&  94.13\%& 98.24\%  \\
			&GeoDP+SUR+PSAC ($B=16384,\beta=0.1$)& 95.27\%& 98.69\%\\
			\Xhline{1pt}		
	\end{tabular}}
	\caption{GeoDP vs. DP on CNN under MNIST Dataset: Test Accuracy}
	\label{tab:accuracy}	
\end{table}
\subsection{GeoDP vs. DP: Deep Learning}
\label{subsec:CNN}
To demonstrate the effectiveness of GeoDP in various learning tasks, we also conduct experiments on MNIST dataset with Convolutional Neural Network (CNN) and Residual Network (ResNet). Due to the extremely large number of parameters, we set the number of training epochs to $20$. While GeoDP pays much attention on the direction, the noisy magnitude is also impacting the overall model efficiency. This is why GeoDP also clips the magnitude before adding noise to it (see Step \ref{item:Per} in Algorithm \ref{algo:GeoDP}). Since the $L_2$-norm of the gradient (i.e., the magnitude) is clipped in existing works~\cite{bu2024automatic,zhang2022understanding}, the same techniques can also be applied to GeoDP. As such, we also demonstrate the generality of GeoDP by integrating it to the state-of-the-art clipping technique AUTO-S~\cite{bu2024automatic}.

Major results are demonstrated in Table \ref{tab:accuracy}. In general, GeoDP outperforms DP under various parameters except for large $\beta$. We can observe that the test accuracy is dramatically reduced (e.g., $96.47\%\to60.31\%$) when $\beta$ increases from $0.1$ to $0.5$. The reason behind is the extremely large sensitivity of GeoDP incurred by high dimensionality ($21,840$ dimensions), as discussed in \ref{subsec:perang}. Overall, we can always find such a $\beta$ ($\beta=0.1$ in this experiment) that GeoDP outperforms DP in any task. Similar results in Table \ref{tab:Resaccuracy} also demonstrates the effectiveness of GeoDP on ResNet under CIFAR-10 dataset. Similar to our observations on LR, GeoDP even better outperforms DP under smaller noise multiplier (e.g., GeoDP can achieve better accuracy than DP even under $\beta=1$.). \textbf{Note that the perturbed direction of GeoDP is unbiased while that of DP is biased, as previously confirmed in Lemma \ref{lem:MSE}}. As such, the optimality of GeoDP over DP under smaller noise multiplier is a reflection of this nature. 

At last, we discuss how to choose $\beta$. In general, $\beta$ is relevant to the model structure, the dataset and the training objective. Compared with CNN under MNIST dataset (Table \ref{tab:accuracy}), ResNet has more complicated structure and CIFAR-10 is more difficult to train (Table \ref{tab:Resaccuracy}). In this case, less $\beta$ should be applied to the latter task for satisfying model efficiency. Besides, $\beta$ can be slightly large if the training objective is not so rigid on model efficiency.
\begin{table}[h]
	\centering	
	\scalebox{0.89}{
		\begin{tabular}{cccc}
			\Xhline{1pt}
			Dataset & Method & $\sigma=0.1$ & $\sigma=0.01$  \\
			\Xhline{1pt}
			\multirow{15}*{\makecell{CIFAR-10 \\  (noise-free\\ 67.43\%)} }
			& DP ($B=8192$) &  59.39\%& 63.27\% \\
			& DP ($B=16384$)&  60.12\%& 63.84\%  \\
			&DP+IS ($B=16384$)& 60.27\%& 64.07\% \\			
			&DP+SUR ($B=16384$)& 61.73\%& 64.83\% \\
			&DP+AUTO-S ($B=16384$)& 60.51\%& 63.91\% \\
			&DP+PSAC ($B=16384$)& 61.30\%& 64.71\% \\
			&DP+SUR+PSAC ($B=16384$)& 62.91\%& 65.60\% \\
			& GeoDP ($B=8192, \beta=1$) &  61.47\%& 65.93\% \\	
			& GeoDP ($B=16384, \beta=1$)&  63.38\%& 66.51\%  \\
			& GeoDP ($B=16384, \beta=0.1$)&  65.47\%& 67.35\%  \\
			& GeoDP+IS ($B=16384, \beta=0.1$)&  65.51\%& 67.35\%  \\
			& GeoDP+SUR ($B=16384, \beta=0.1$)&  65.53\%& 67.36\%  \\
			& GeoDP+AUTO-S ($B=16384, \beta=0.1$)&  65.58\%& 67.37\%  \\
			& GeoDP+PSAC ($B=16384, \beta=0.1$)&  65.58\%& 67.38\%  \\
			& GeoDP+SUR+PSAC ($B=16384, \beta=0.1$)&  66.03\%& 67.40\%  \\
			\Xhline{1pt}		
	\end{tabular}}
	\caption{GeoDP vs. DP on ResNet under CIFAR-10 Dataset: Test Accuracy}
	\label{tab:Resaccuracy}	
\end{table}
\subsection{GeoDP vs. DP: Time Complexity}
While it is concluded in Section \ref{subsec:GeoDP} that GeoDP and traditional DP have the same time complexity, the practical runtime of GeoDP is likely longer due to the sequential computation involved in coordinate conversions. To compare the runtime of the two algorithms, we conduct experiments on a synthetic dataset. In each experiment, we randomly choose 500 gradients and register the average runtime of GeoDP and DP, respectively, on perturbing these gradients. Specifically, we combine multiple gradients into a single gradient with higher dimensionality (e.g., an 80,000-dimensional gradient is constructed by merging four 20,000-dimensional gradients) to test the limits of both algorithms. Figure \ref{fig:time} illustrates that both batch size and dimensionality have a significant impact on the runtime of both algorithms, with GeoDP being particularly sensitive to these factors. Similar to DP, an increase in either batch size or dimensionality leads to a longer runtime for GeoDP, primarily due to more frequent calculations and increased memory reading/writing. However, the effect of dimensionality on the runtime of GeoDP is particularly pronounced, making it a more dominant factor in extra runtime.

In the low-dimensional case (e.g., $d=1,250$), the majority of the runtime is spent on memory reading and writing, as the calculations themselves are relatively simple. In this scenario, the runtime of GeoDP is only slightly longer than that of DP, and an increase in batch size results in a simultaneous increase in runtime for both algorithms (as seen in the left halves of red and green lines). However, in the high-dimensional case (e.g., $d=320,000$), the sequential computation required for coordinate conversions causes GeoDP to consume considerably more time than DP (as indicated by the right halves of the red and black lines). Despite this, the model accuracy provided by GeoDP offers a significant practical advantage, and the additional runtime can be mitigated by utilizing a more advanced server or implementing a parallel computing strategy. 
\begin{figure}[htbp]
	\centering
	\includegraphics[width=0.7\linewidth]{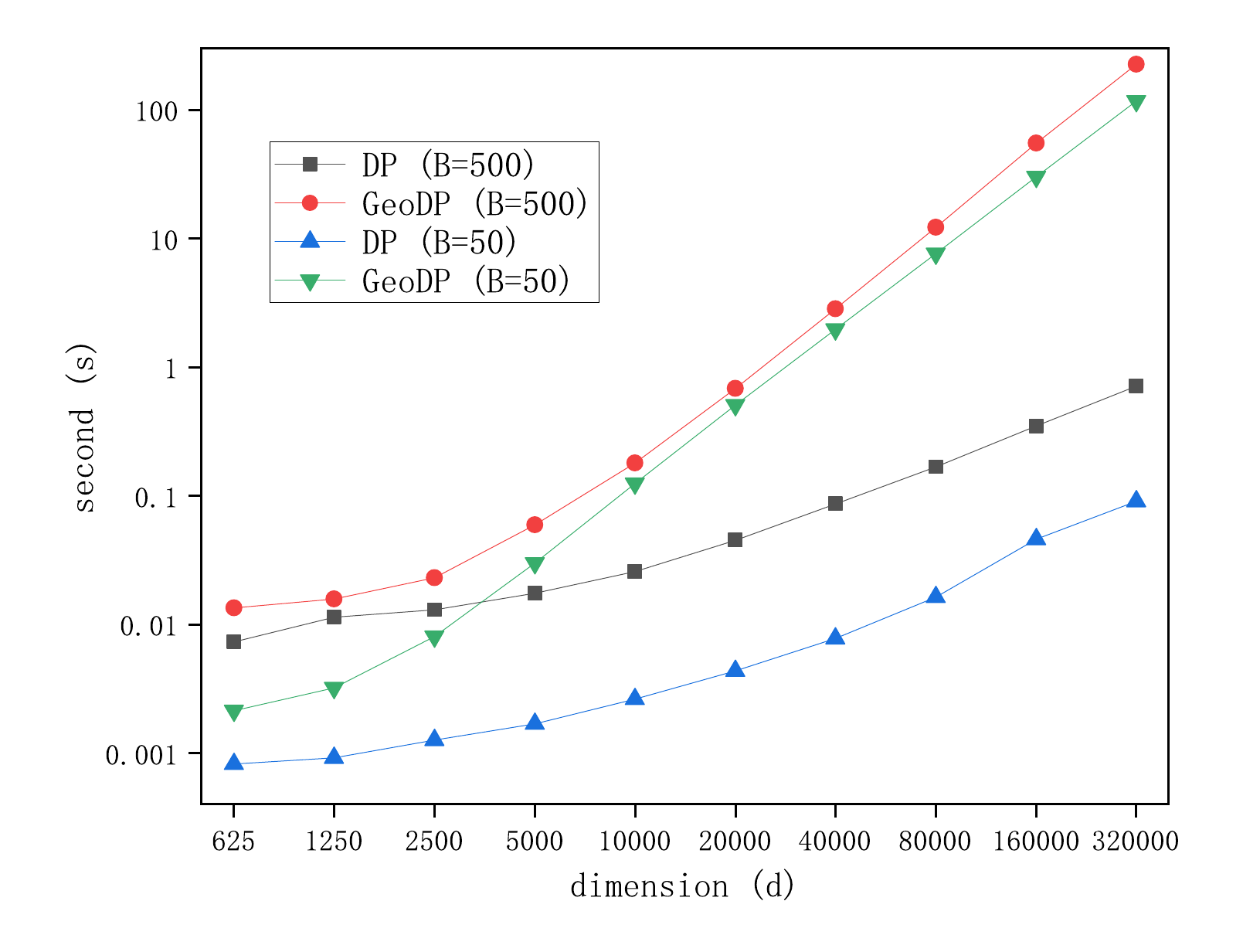}
	\caption{GeoDP vs. DP on Runtime under Various Parameters on the Synthetic Dataset}
	\label{fig:time}
\end{figure}

	\section{Conclusion}
\label{sec:conclusion}
This work optimizes DP-SGD from a new perspective. We first theoretically analyze the impact of DP noise on the training process of SGD, which shows that the perturbation of DP-SGD is actually sub-optimal because it introduces biased noise to the direction. This inspires us to reduce the noise on direction for model efficiency improvement. We then propose our geometric perturbation mechanism GeoDP. Its effectiveness and generality are mutually confirmed by both rigorous proofs and experimental results. As for future work, we plan to study the impact of mainstream training optimizations, such as Adam optimizer~\cite{tang2024dp}, on GeoDP. Besides, we also plan to extend GeoDP to other form of learning, such as federated learning~\cite{gao2023fs}.

	\bibliographystyle{IEEEtran}
	\bibliography{IEEEabrv,ref}

\begin{thebibliography}{10}
\providecommand{\url}[1]{#1}
\csname url@samestyle\endcsname
\providecommand{\newblock}{\relax}
\providecommand{\bibinfo}[2]{#2}
\providecommand{\BIBentrySTDinterwordspacing}{\spaceskip=0pt\relax}
\providecommand{\BIBentryALTinterwordstretchfactor}{4}
\providecommand{\BIBentryALTinterwordspacing}{\spaceskip=\fontdimen2\font plus
\BIBentryALTinterwordstretchfactor\fontdimen3\font minus
  \fontdimen4\font\relax}
\providecommand{\BIBforeignlanguage}[2]{{%
\expandafter\ifx\csname l@#1\endcsname\relax
\typeout{** WARNING: IEEEtran.bst: No hyphenation pattern has been}%
\typeout{** loaded for the language `#1'. Using the pattern for}%
\typeout{** the default language instead.}%
\else
\language=\csname l@#1\endcsname
\fi
#2}}
\providecommand{\BIBdecl}{\relax}
\BIBdecl

\bibitem{carlini2021extracting}
N.~Carlini, F.~Tramer, E.~Wallace, M.~Jagielski, A.~Herbert-Voss, K.~Lee,
  A.~Roberts, T.~Brown, D.~Song, U.~Erlingsson \emph{et~al.}, ``Extracting
  training data from large language models,'' in \emph{30th USENIX Security
  Symposium (USENIX Security)}, 2021, pp. 2633--2650.

\bibitem{gong2021inversenet}
X.~Gong, Y.~Chen, W.~Yang, G.~Mei, and Q.~Wang, ``Inversenet: Augmenting model
  extraction attacks with training data inversion.'' in \emph{International
  Joint Conference on Artificial Intelligence (IJCAI)}, 2021, pp. 2439--2447.

\bibitem{8835245}
M.~Nasr, R.~Shokri, and A.~Houmansadr, ``Comprehensive privacy analysis of deep
  learning: Passive and active white-box inference attacks against centralized
  and federated learning,'' in \emph{2019 IEEE Symposium on Security and
  Privacy (SP)}, 2019, pp. 739--753.

\bibitem{li2021federated}
Z.~Li, B.~Ding, C.~Zhang, N.~Li, and J.~Zhou, ``Federated matrix factorization
  with privacy guarantee,'' \emph{Proceedings of the VLDB Endowment}, vol.~15,
  no.~4, 2021.

\bibitem{zeighami2022neural}
S.~Zeighami, R.~Ahuja, G.~Ghinita, and C.~Shahabi, ``A neural database for
  differentially private spatial range queries,'' \emph{Proceedings of the VLDB
  Endowment}, vol.~15, no.~5, pp. 1066--1078, 2022.

\bibitem{liu2021projected}
J.~Liu, J.~Lou, L.~Xiong, J.~Liu, and X.~Meng, ``Projected federated averaging
  with heterogeneous differential privacy,'' \emph{Proceedings of the VLDB
  Endowment}, vol.~15, no.~4, pp. 828--840, 2021.

\bibitem{bao2022skellam}
E.~Bao, Y.~Zhu, X.~Xiao, Y.~Yang, B.~C. Ooi, B.~H.~M. Tan, and K.~M.~M. Aung,
  ``Skellam mixture mechanism: a novel approach to federated learning with
  differential privacy,'' \emph{Proceedings of the VLDB Endowment}, vol.~15,
  no.~11, pp. 2348--2360, 2022.

\bibitem{abadi2016deep}
M.~Abadi, A.~Chu, I.~Goodfellow, H.~B. McMahan, I.~Mironov, K.~Talwar, and
  L.~Zhang, ``Deep learning with differential privacy,'' in \emph{Proceedings
  of the 2016 ACM SIGSAC Conference on Computer and Communications Security
  (CCS)}, 2016, pp. 308--318.

\bibitem{mironov2017renyi}
I.~Mironov, ``R{\'e}nyi differential privacy,'' in \emph{30th {IEEE} Computer
  Security Foundations Symposium, {CSF} 2017}.\hskip 1em plus 0.5em minus
  0.4em\relax IEEE, 2017, pp. 263--275.

\bibitem{fu2023dpsur}
J.~Fu, Q.~Ye, H.~Hu, Z.~Chen, L.~Wang, K.~Wang, and R.~Xun, ``Dpsur:
  Accelerating differentially private stochastic gradient descent using
  selective update and release,'' \emph{arXiv preprint arXiv:2311.14056}, 2023.

\bibitem{dwork2014algorithmic}
C.~Dwork, A.~Roth \emph{et~al.}, ``The algorithmic foundations of differential
  privacy.'' \emph{Foundations and Trends in Theoretical Computer Science},
  vol.~9, no. 3-4, pp. 211--407, 2014.

\bibitem{wasserman2010statistical}
L.~Wasserman and S.~Zhou, ``A statistical framework for differential privacy,''
  \emph{Journal of the American Statistical Association}, vol. 105, no. 489,
  pp. 375--389, 2010.

\bibitem{10.1007/11681878_14}
C.~Dwork, F.~McSherry, K.~Nissim, and A.~Smith, ``Calibrating noise to
  sensitivity in private data analysis,'' in \emph{Theory of Cryptography},
  S.~Halevi and T.~Rabin, Eds.\hskip 1em plus 0.5em minus 0.4em\relax Berlin,
  Heidelberg: Springer Berlin Heidelberg, 2006, pp. 265--284.

\bibitem{duchi2018minimax}
J.~C. Duchi, M.~I. Jordan, and M.~J. Wainwright, ``Minimax optimal procedures
  for locally private estimation,'' \emph{Journal of the American Statistical
  Association}, vol. 113, no. 521, pp. 182--201, 2018.

\bibitem{wang2019collecting}
N.~Wang, X.~Xiao, Y.~Yang, J.~Zhao, S.~C. Hui, H.~Shin, J.~Shin, and G.~Yu,
  ``Collecting and analyzing multidimensional data with local differential
  privacy,'' in \emph{2019 IEEE 35th International Conference on Data
  Engineering (ICDE)}.\hskip 1em plus 0.5em minus 0.4em\relax IEEE, 2019, pp.
  638--649.

\bibitem{chen2011publishing}
R.~Chen, N.~Mohammed, B.~C. Fung, B.~C. Desai, and L.~Xiong, ``Publishing
  set-valued data via differential privacy,'' \emph{Proceedings of the VLDB
  Endowment}, vol.~4, no.~11, pp. 1087--1098, 2011.

\bibitem{wang2020set}
S.~Wang, Y.~Qian, J.~Du, W.~Yang, L.~Huang, and H.~Xu, ``Set-valued data
  publication with local privacy: tight error bounds and efficient
  mechanisms,'' \emph{Proceedings of the VLDB Endowment}, vol.~13, no.~8, pp.
  1234--1247, 2020.

\bibitem{ye2021privkvm}
Q.~Ye, H.~Hu, X.~Meng, H.~Zheng, K.~Huang, C.~Fang, and J.~Shi, ``Priv{KVM}*:
  Revisiting key-value statistics estimation with local differential privacy,''
  \emph{IEEE Transactions on Dependable and Secure Computing (TDSC)}, 2021.

\bibitem{duan2022utility}
J.~Duan, Q.~Ye, and H.~Hu, ``Utility analysis and enhancement of ldp mechanisms
  in high-dimensional space,'' in \emph{ICDE}.\hskip 1em plus 0.5em minus
  0.4em\relax IEEE, 2022, pp. 407--419.

\bibitem{sun2023puts}
X.~Sun, Q.~Ye, H.~Hu, J.~Duan, Q.~Xue, T.~Wo, and J.~Xu, ``Puts:
  Privacy-preserving and utility-enhancing framework for trajectory
  synthesization,'' \emph{IEEE Transactions on Knowledge and Data Engineering},
  2023.

\bibitem{ye2021beyond}
Q.~Ye, H.~Hu, N.~Li, X.~Meng, H.~Zheng, and H.~Yan, ``Beyond value
  perturbation: local differential privacy in the temporal setting,'' in
  \emph{IEEE INFOCOM 2021-IEEE Conference on Computer Communications
  (INFOCOM)}.\hskip 1em plus 0.5em minus 0.4em\relax IEEE, 2021, pp. 1--10.

\bibitem{zheng2019bdpl}
H.~Zheng, Q.~Ye, H.~Hu, C.~Fang, and J.~Shi, ``{BDPL}: A boundary
  differentially private layer against machine learning model extraction
  attacks,'' in \emph{ESORICS}.\hskip 1em plus 0.5em minus 0.4em\relax
  Springer, 2019, pp. 66--83.

\bibitem{hu2015differential}
X.~Hu, M.~Yuan, J.~Yao, Y.~Deng, L.~Chen, Q.~Yang, H.~Guan, and J.~Zeng,
  ``Differential privacy in telco big data platform,'' \emph{Proceedings of the
  VLDB Endowment}, vol.~8, no.~12, pp. 1692--1703, 2015.

\bibitem{xu2020collecting}
M.~Xu, B.~Ding, T.~Wang, and J.~Zhou, ``Collecting and analyzing data jointly
  from multiple services under local differential privacy,'' \emph{Proceedings
  of the VLDB Endowment}, vol.~13, no.~12, pp. 2760--2772, 2020.

\bibitem{xu2019dpsaas}
M.~Xu, T.~Wang, B.~Ding, J.~Zhou, C.~Hong, and Z.~Huang, ``Dpsaas:
  Multi-dimensional data sharing and analytics as services under local
  differential privacy,'' \emph{Proceedings of the VLDB Endowment}, vol.~12,
  no.~12, pp. 1862--1865, 2019.

\bibitem{bao2021cgm}
E.~Bao, Y.~Yang, X.~Xiao, and B.~Ding, ``Cgm: an enhanced mechanism for
  streaming data collection with local differential privacy,''
  \emph{Proceedings of the VLDB Endowment}, vol.~14, no.~11, pp. 2258--2270,
  2021.

\bibitem{wang2018privset}
S.~Wang, L.~Huang, Y.~Nie, P.~Wang, H.~Xu, and W.~Yang, ``Privset: Set-valued
  data analyses with locale differential privacy,'' in \emph{IEEE INFOCOM
  2018-IEEE Conference on Computer Communications}.\hskip 1em plus 0.5em minus
  0.4em\relax IEEE, 2018, pp. 1088--1096.

\bibitem{farias2023local}
V.~A. Farias, F.~T. Brito, C.~Flynn, J.~C. Machado, S.~Majumdar, and
  D.~Srivastava, ``Local dampening: Differential privacy for non-numeric
  queries via local sensitivity,'' \emph{the VLDB Journal}, vol.~32, no.~6, pp.
  1191--1214, 2023.

\bibitem{bogatov2021varepsilonpsolute}
D.~Bogatov, G.~Kellaris, G.~Kollios, K.~Nissim, and A.~O'Neill,
  ``$\varepsilon$psolute: Efficiently querying databases while providing
  differential privacy,'' in \emph{Proceedings of the 2021 ACM SIGSAC
  Conference on Computer and Communications Security}, 2021, pp. 2262--2276.

\bibitem{xiao2017loclok}
Y.~Xiao, L.~Xiong, S.~Zhang, and Y.~Cao, ``Loclok: Location cloaking with
  differential privacy via hidden markov model,'' \emph{Proceedings of the VLDB
  Endowment}, vol.~10, no.~12, pp. 1901--1904, 2017.

\bibitem{schaler2023benchmarking}
C.~Sch{\"a}ler, T.~H{\"u}tter, and M.~Sch{\"a}ler, ``Benchmarking the utility
  of w-event differential privacy mechanisms-when baselines become mighty
  competitors,'' \emph{Proceedings of the VLDB Endowment}, vol.~16, no.~8, pp.
  1830--1842, 2023.

\bibitem{duan2024ldptube}
J.~Duan, Q.~Ye, H.~Hu, and X.~Sun, ``Ldptube: Theoretical utility benchmark and
  enhancement for ldp mechanisms in high-dimensional space,'' \emph{IEEE
  Transactions on Knowledge and Data Engineering}, 2024.

\bibitem{robbins1951stochastic}
H.~Robbins and S.~Monro, ``Stochastic gradient descent,'' \emph{Journal of the
  American Statistical Association}, 1951.

\bibitem{rumelhart1986learning}
D.~E. Rumelhart, J.~L. McClelland, and P.~R. Group, \emph{Parallel distributed
  processing: explorations in the microstructure of cognition}.\hskip 1em plus
  0.5em minus 0.4em\relax MIT Press, 1986.

\bibitem{glorot2010understanding}
X.~Glorot and Y.~Bengio, ``Understanding the difficulty of training deep
  feedforward neural networks,'' in \emph{AISTATS}, 2010.

\bibitem{bengio2012practical}
Y.~Bengio, ``Practical recommendations for gradient-based training of deep
  architectures,'' in \emph{Neural Networks: Tricks of the Trade}.\hskip 1em
  plus 0.5em minus 0.4em\relax Springer, 2012.

\bibitem{sutskever2013importance}
I.~Sutskever, J.~Martens, G.~Dahl, and G.~Hinton, ``On the importance of
  initialization and momentum in deep learning,'' in \emph{ICML}, 2013.

\bibitem{kingma2014adam}
D.~P. Kingma and J.~Ba, ``Adam: A method for stochastic optimization,'' in
  \emph{International Conference on Learning Representations (ICLR)}, 2015.

\bibitem{ioffe2015batch}
S.~Ioffe and C.~Szegedy, ``Batch normalization: Accelerating deep network
  training by reducing internal covariate shift,'' in \emph{International
  Conference on Machine Learning (ICML)}, 2015.

\bibitem{you2017large}
Y.~You, I.~Gitman, and B.~Ginsburg, ``Large batch training of convolutional
  networks,'' \emph{arXiv preprint arXiv:1708.03888}, 2017.

\bibitem{zhang2019lookahead}
M.~R. Zhang, J.~Lucas, J.~Ba, and G.~E. Hinton, ``Lookahead optimizer: k steps
  forward, 1 step back,'' in \emph{Neural Information Processing Systems
  (NIPS)}, 2019.

\bibitem{xu2024stochastic}
L.~Xu, S.~Qiu, B.~Yuan, J.~Jiang, C.~Renggli, S.~Gan, K.~Kara, G.~Li, J.~Liu,
  W.~Wu \emph{et~al.}, ``Stochastic gradient descent without full data shuffle:
  with applications to in-database machine learning and deep learning
  systems,'' \emph{The VLDB Journal}, pp. 1--25, 2024.

\bibitem{zhang2024metastore}
H.~Zhang, B.~Yan, L.~Cao, S.~Madden, and E.~Rundensteiner, ``Metastore:
  Analyzing deep learning meta-data at scale,'' \emph{Proceedings of the VLDB
  Endowment}, vol.~17, no.~6, pp. 1446--1459, 2024.

\bibitem{wang2024optimizing}
T.~Wang, S.~Huang, Z.~Bao, J.~S. Culpepper, V.~Dedeoglu, and R.~Arablouei,
  ``Optimizing data acquisition to enhance machine learning performance,''
  \emph{Proceedings of the VLDB Endowment}, vol.~17, no.~6, pp. 1310--1323,
  2024.

\bibitem{xing2024database}
N.~Xing, S.~Cai, G.~Chen, Z.~Luo, B.~C. Ooi, and J.~Pei, ``Database native
  model selection: Harnessing deep neural networks in database systems,''
  \emph{Proceedings of the VLDB Endowment}, vol.~17, no.~5, pp. 1020--1033,
  2024.

\bibitem{chaudhuri2011differentially}
K.~Chaudhuri, C.~Monteleoni, and A.~D. Sarwate, ``Differentially private
  empirical risk minimization,'' \emph{Journal of Machine Learning Research},
  vol.~12, no. Mar, pp. 1069--1109, 2011.

\bibitem{ho2021dp}
S.~Ho, Y.~Qu, B.~Gu, L.~Gao, J.~Li, and Y.~Xiang, ``Dp-gan: Differentially
  private consecutive data publishing using generative adversarial nets,''
  \emph{Journal of Network and Computer Applications}, vol. 185, p. 103066,
  2021.

\bibitem{NIPS2017_dfce0680}
M.~Heikkil\"{a}, E.~Lagerspetz, S.~Kaski, K.~Shimizu, S.~Tarkoma, and
  A.~Honkela, ``Differentially private bayesian learning on distributed data,''
  in \emph{Advances in Neural Information Processing Systems}, vol.~30.\hskip
  1em plus 0.5em minus 0.4em\relax Curran Associates, Inc., 2017.

\bibitem{zhang2022understanding}
X.~Zhang, X.~Chen, M.~Hong, Z.~S. Wu, and J.~Yi, ``Understanding clipping for
  federated learning: Convergence and client-level differential privacy,'' in
  \emph{International Conference on Machine Learning (ICML)}, 2022.

\bibitem{zhang2024dpar}
Q.~Zhang, H.~kyu Lee, J.~Ma, J.~Lou, C.~Yang, and L.~Xiong, ``Dpar: Decoupled
  graph neural networks with node-level differential privacy,'' in
  \emph{Proceedings of the ACM on Web Conference 2024}, 2024, pp. 1170--1181.

\bibitem{xia2023differentially}
T.~Xia, S.~Shen, S.~Yao, X.~Fu, K.~Xu, X.~Xu, and X.~Fu, ``Differentially
  private learning with per-sample adaptive clipping,'' in \emph{AAAI
  Conference on Artificial Intelligence (AAAI)}, vol.~37, no.~9, 2023, pp.
  10\,444--10\,452.

\bibitem{10.5555/3495724.3496879}
X.~Chen, Z.~S. Wu, and M.~Hong, ``Understanding gradient clipping in private
  sgd: a geometric perspective,'' in \emph{Proceedings of the 34th
  International Conference on Neural Information Processing Systems}, ser. NIPS
  '20, 2020.

\bibitem{gopi2021numerical}
S.~Gopi, Y.~T. Lee, and L.~Wutschitz, ``Numerical composition of differential
  privacy,'' \emph{Advances in Neural Information Processing Systems (NIPS)},
  vol.~34, pp. 11\,631--11\,642, 2021.

\bibitem{tang2024dp}
Q.~Tang, F.~Shpilevskiy, and M.~L{\'e}cuyer, ``Dp-adambc: Your dp-adam is
  actually dp-sgd (unless you apply bias correction),'' in \emph{Proceedings of
  the AAAI Conference on Artificial Intelligence}, vol.~38, no.~14, 2024, pp.
  15\,276--15\,283.

\bibitem{bottou2012stochastic}
L.~Bottou, ``Stochastic gradient descent tricks,'' in \emph{Neural networks:
  Tricks of the trade}.\hskip 1em plus 0.5em minus 0.4em\relax Springer, 2012,
  pp. 421--436.

\bibitem{lecun2002efficient}
Y.~LeCun, L.~Bottou, G.~B. Orr, and K.-R. M{\"u}ller, ``Efficient backprop,''
  in \emph{Neural networks: Tricks of the trade}.\hskip 1em plus 0.5em minus
  0.4em\relax Springer, 2002, pp. 9--50.

\bibitem{pascanu2013difficulty}
R.~Pascanu, T.~Mikolov, and Y.~Bengio, ``On the difficulty of training
  recurrent neural networks,'' in \emph{International conference on machine
  learning}.\hskip 1em plus 0.5em minus 0.4em\relax Pmlr, 2013, pp. 1310--1318.

\bibitem{bu2024automatic}
Z.~Bu, Y.-X. Wang, S.~Zha, and G.~Karypis, ``Automatic clipping: Differentially
  private deep learning made easier and stronger,'' \emph{Advances in Neural
  Information Processing Systems}, vol.~36, 2023.

\bibitem{yu2019make}
T.~Yu, D.~Bai, and R.~Zhang, ``How to make the gradients small stochastically:
  Even faster convex and nonconvex sgd,'' in \emph{Advances in Neural
  Information Processing Systems (NIPS)}, 2019.

\bibitem{feng2020privacy}
Y.~Feng, P.~Kairouz, L.~Sankar, and R.~Rajagopal, ``Privacy amplification by
  iteration,'' in \emph{Advances in Neural Information Processing Systems},
  2020.

\bibitem{thomas2006multivariable}
G.~B.~J. Thomas and M.~D. Weir, \emph{Multivariable Calculus and Linear
  Algebra}.\hskip 1em plus 0.5em minus 0.4em\relax Pearson/Addison-Wesley,
  2006.

\bibitem{SHANTHIKUMAR1984153}
J.~G. Shanthikumar and U.~Sumita, ``A central limit theorem for random sums of
  random variables,'' \emph{Operations Research Letters}, vol.~3, no.~3, pp.
  153--155, 1984.

\bibitem{dwork2006our}
C.~Dwork, K.~Kenthapadi, F.~McSherry, I.~Mironov, and M.~Naor, ``Our data,
  ourselves: Privacy via distributed noise generation,'' in \emph{Annual
  International Conference on the Theory and Applications of Cryptographic
  Techniques}.\hskip 1em plus 0.5em minus 0.4em\relax Springer, 2006, pp.
  486--503.

\bibitem{dwork2010differential}
C.~Dwork and A.~Smith, ``Differential privacy for statistics: What we know and
  what we want to learn,'' \emph{Journal of Privacy and Confidentiality},
  vol.~1, no.~2, 2010.

\bibitem{lecun1998gradient}
Y.~LeCun, L.~Bottou, Y.~Bengio, and P.~Haffner, ``Gradient-based learning
  applied to document recognition,'' \emph{Proceedings of the IEEE}, vol.~86,
  no.~11, pp. 2278--2324, 1998.

\bibitem{krizhevsky2009learning}
A.~Krizhevsky, G.~Hinton \emph{et~al.}, ``Learning multiple layers of features
  from tiny images,'' 2009.

\bibitem{wei2022dpis}
J.~Wei, E.~Bao, X.~Xiao, and Y.~Yang, ``Dpis: An enhanced mechanism for
  differentially private sgd with importance sampling,'' in \emph{Proceedings
  of the 2022 ACM SIGSAC Conference on Computer and Communications Security},
  2022, pp. 2885--2899.

\bibitem{fu2024dpsur}
J.~Fu, Q.~Ye, H.~Hu, Z.~Chen, L.~Wang, K.~Wang, and X.~Ran, ``Dpsur:
  Accelerating differentially private stochastic gradient descent using
  selective update and release,'' \emph{Proceedings of the VLDB Endowment},
  vol.~17, no.~6, pp. 1200--1213, 2024.

\bibitem{gao2023fs}
D.~Gao, D.~Chen, Z.~Li, Y.~Xie, X.~Pan, Y.~Li, B.~Ding, and J.~Zhou, ``Fs-real:
  A real-world cross-device federated learning platform,'' \emph{Proceedings of
  the VLDB Endowment}, vol.~16, no.~12, pp. 4046--4049, 2023.

\end{thebibliography}
\end{document}